\def\ttY{\tilde{Y}}
\def\tty{\tilde{y}}
\newcommand*{\Resize}[2]{\resizebox{0.94\textwidth}{!}{$#1$}}
\newcommand{\cmark}{\ding{51}}%
\newcommand{\xmark}{\ding{55}}%
    \crefname{figure}{Figure}{Figures}%
\newcommand{\eqlabelleft}{(}
\newcommand{\eqlabelright}{)}
\newcommand{\pcref}[1]{%
\begingroup%
\renewcommand{\eqlabelleft}{}%
\renewcommand{\eqlabelright}{}%
\cref{#1}%
\endgroup%
}
\def\eqref#1{equation~\ref{#1}}
\def\1{\bm{1}}
\def\eps{{\epsilon}}
\def\rvn{{\mathbf{n}}}
\def\rvw{{\mathbf{w}}}
\def\rvx{{\mathbf{x}}}
\def\rmD{{\mathbf{D}}}
\def\rmI{{\mathbf{I}}}
\def\rmX{{\mathbf{X}}}
\def\mI{{\bm{I}}}
\def\mL{{\bm{L}}}
\def\mS{{\bm{S}}}
\def\mT{{\bm{T}}}
\def\mW{{\bm{W}}}
\DeclareMathAlphabet{\mathsfit}{\encodingdefault}{\sfdefault}{m}{sl}
\SetMathAlphabet{\mathsfit}{bold}{\encodingdefault}{\sfdefault}{bx}{n}
\def\gX{{\mathcal{X}}}
\def\gY{{\mathcal{Y}}}
\def\sR{{\mathbb{R}}}
\def\emS{{S}}
\def\emT{{T}}
\def\emW{{W}}
\newcommand{\E}{\mathbb{E}}
\DeclareMathOperator*{\argmax}{arg\,max}
\DeclareMathOperator*{\argmin}{arg\,min}
\title{Label-Noise Robust Diffusion Models}
\author{Byeonghu Na\textsuperscript{\textmd{1}}, Yeongmin Kim\textsuperscript{\textmd{1}}, HeeSun Bae\textsuperscript{\textmd{1}}, Jung Hyun Lee\textsuperscript{\textmd{2}}, Se Jung Kwon\textsuperscript{\textmd{2}},\\
\textbf{Wanmo Kang\textsuperscript{\textmd{1}} \& Il-Chul Moon\textsuperscript{\textmd{1,3}}} \\
\textsuperscript{\textmd{1}}KAIST, \textsuperscript{\textmd{2}}NAVER Cloud, \textsuperscript{\textmd{3}}summary.ai\\
\texttt{\{wp03052,alsdudrla10,cat2507,wanmo.kang,icmoon\}@kaist.ac.kr},\\
\texttt{\{junghyun123.lee,sejung.kwon\}@navercorp.com}  \\
}
\begin{document}

\maketitle

\begin{abstract}
Conditional diffusion models have shown remarkable performance in various generative tasks, but training them requires large-scale datasets that often contain noise in conditional inputs, a.k.a. noisy labels. This noise leads to condition mismatch and quality degradation of generated data. This paper proposes Transition-aware weighted Denoising Score Matching (TDSM) for training conditional diffusion models with noisy labels, which is the first study in the line of diffusion models. The TDSM objective contains a weighted sum of score networks, incorporating instance-wise and time-dependent label transition probabilities. We introduce a transition-aware weight estimator, which leverages a time-dependent noisy-label classifier distinctively customized to the diffusion process. Through experiments across various datasets and noisy label settings, TDSM improves the quality of generated samples aligned with given conditions. Furthermore, our method improves generation performance even on prevalent benchmark datasets, which implies the potential noisy labels and their risk of generative model learning. Finally, we show the improved performance of TDSM on top of conventional noisy label corrections, which empirically proving its contribution as a part of label-noise robust generative models. Our code is available at: \url{https://github.com/byeonghu-na/tdsm}.
\end{abstract}

\section{Introduction}  \label{sec:intro}

Diffusion models have gained significant interest in various fields, such as image~\citep{song2020score,dhariwal2021diffusion,karras2022elucidating,kim2022refining} and video generation~\citep{ho2022video,voleti2022mcvd}, for their high-quality sample generation.
Moreover, they have shown impressive results in conditional generation problems~\citep{kong2021diffwave,meng2022on,meng2022sdedit,rombach2022high,saharia2022photorealistic}.
However, training diffusion models requires large-scale datasets, which often contain data instances with noisy labels due to expensive labeling costs and a lack of experts~\citep{xiao2015learning,song2022learning}. These noisy labels cause problems when learning conditional generative models, as shown in Figure 1 of \cite{kaneko2019label} and \cref{subfig:baseline}. Although the problem of learning with noisy labels has been extensively studied in supervised learning~\citep{patrini2017making,han2018co,li2021provably,nordstrom2022on,bae2024dirichletbased}, there are only a few studies on generative models~\citep{thekumparampil2018robustness,kaneko2019label}, and as far as we know, there has been no theoretical discussion and remedies for diffusion models.

This paper proposes a method for training conditional diffusion models with noisy labels. To achieve this, we show that the noisy-label conditional score can be interpreted as a linear combination of clean-label conditional scores based on instance-wise and time-dependent label transition information. Using this relationship, we propose an objective of Transition-aware weighted Denoising Score Matching (TDSM). To train a score network with the proposed objective, we introduce an estimator of the transition-aware weights, and we explain the practical implementation that is not trivial because of multiple score network evaluations from class transitions. We conduct experiments on various datasets and label noise settings, and our diffusion models, which are trained by the TDSM objective, generate samples that match the given condition well (see \cref{subfig:ours}), and our models outperform baseline models on conditional metrics as well as most unconditional metrics. Furthermore, we empirically demonstrate that TDSM is a key component of label-noise robust diffusion models by showing improved performance when combined with conventional noisy label corrections.

In summary, our contributions are as follows:
\begin{itemize}
    \item We propose Transition-aware weighted Denoising Score Matching for training conditional diffusion models with noisy labels, which is the first time in the diffusion model.
    \item We prove that there is a linear relationship between noisy-label and clean-label conditional scores based on the instance-wise and time-dependent label transition probability.
    \item We suggest an estimator structure with a time-dependent noisy-label classifier and a transition matrix to evaluate pairwise class transitions. 
    \item We empirically investigate the negative impact of noisy labeled datasets on diffusion models, and the proposed method achieves superior performance in the presence of noisy labels.
\end{itemize}

\begin{figure}[t]
     \centering
     \begin{subfigure}[b]{0.065\textwidth}
         \centering
         \includegraphics[width=\textwidth]{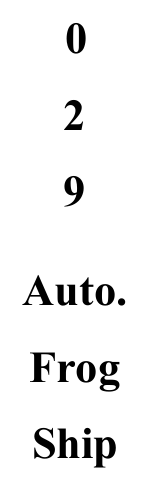}
         \vspace{1.8mm}
     \end{subfigure}
     \begin{subfigure}[b]{0.29\textwidth}
         \centering
         \includegraphics[width=\textwidth]{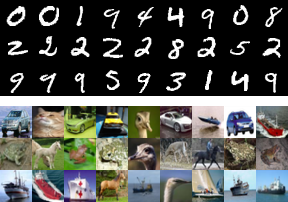}
         \caption{Dataset with noisy labels}
         \label{subfig:dataset}
     \end{subfigure}
     \hfill
     \begin{subfigure}[b]{0.29\textwidth}
         \centering
         \includegraphics[width=\textwidth]{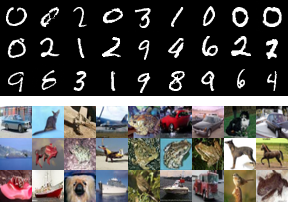}
         \caption{Baseline}
         \label{subfig:baseline}
     \end{subfigure}
     \hfill
     \begin{subfigure}[b]{0.29\textwidth}
         \centering
         \includegraphics[width=\textwidth]{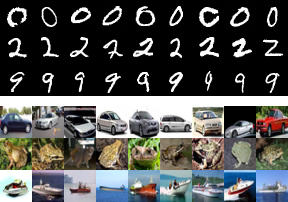}
         \caption{Ours}
         \label{subfig:ours}
     \end{subfigure}
        \caption{(a) Examples of noisy labeled datasets of MNIST (top) and CIFAR-10 (bottom), and (b-c) the randomly generated images of baseline and our models, trained with the noisy labeled datasets.}
        \label{fig:example}
    \vspace{-1em}
\end{figure}

\vspace{-2mm}
\section{Problem Formulation}
\label{sec:prob}

This section formulates the problem of learning diffusion models from noisy labels. We define the data space as $\gX \in \sR^d$ and the label space as $\gY=\{ 1, ..., c\}$, where $d$ is the data dimension and $c$ is the number of classes. A data instance is denoted by $\rvx \in \gX$, and its corresponding clean label is denoted by $y \in \gY$. We assume that only a noisy labeled training dataset $\tilde{D} = \{ (\rvx^{(i)}, \tty^{(i)}) \}_{i=1}^{n}$ is available, sampled from a noisy-label data distribution $\tilde{p}_{\text{data}}(\rmX,\ttY)$, where $\ttY \in \gY$ represents the noisy label corresponding to $\rmX$, and $n$ is the number of data instances.
\vspace{-2mm}
\paragraph{Label Noise} We focus on the class-conditional label noise setting, where the noisy label $\ttY$ is assumed to be independent of the instance $\rmX$ given the clean label $Y$~\citep{natarajan2013learning,patrini2017making}. This setting is commonly used in the label noise literature~\citep{kaneko2019label,li2021provably,zhang2021learning}. From a generative perspective, it can be expressed as follows:
\begin{align}       \label{eq:ccln}
    p(\rvx|\ttY=\tty)=\sum_{y=1}^{c} p(Y=y|\ttY=\tty)p(\rvx|Y=y,\ttY=\tty)=\sum_{y=1}^{c} p(Y=y|\ttY=\tty)p(\rvx|Y=y),
\end{align}
where the second equation is derived by the assumption from the class-conditional label noise setting. \cref{eq:ccln} means that each noisy-label conditional distribution is a mixture of clean-label conditional distributions. We define a reverse transition matrix as $\mS=[ \emS_{i,j} ] \in [0,1]^{c \times c}$ where $\emS_{i,j} = p(Y=j|\ttY=i)$.\footnote{We called $\mS$ the \textit{reverse} transition matrix because in most of the literature on learning with noisy labels, the transition matrix usually contains the information of $p(\ttY=\tty|Y=y)$. We omit \textit{reverse} if it is not confusing.} By the definition, we have $\sum_{j=1}^c \emS_{i,j} = 1$. We provide a detailed review of learning with noisy labels in \cref{subsec:app_rel2}.

\vspace{-2mm}
\paragraph{Diffusion Model} Now, we briefly introduce the diffusion model.
We describe it through stochastic differential equations. Specifically, \cref{eq:fwd,eq:bwd} represent the forward diffusion process and its corresponding reverse process, respectively~\citep{anderson1982reverse, song2020score, kim2022maximum}.
\begin{align}
    \mathrm{d}\rvx_t & = \mathbf{f}(\rvx_t,t)\mathrm{d}t+ g(t)\mathrm{d}\rvw_t,  \label{eq:fwd} \\
    \mathrm{d}\rvx_t & = [\mathbf{f}(\rvx_t,t) - g^2(t)\nabla_{\rvx_{t}}\log p_t(\rvx_t) ]\mathrm{d}\bar{t}+ g(t)\mathrm{d}\bar{\rvw}_t , \label{eq:bwd}
\end{align}
where $\mathbf{f}$ is the drift function; $g$ is the volatility function; $p_t(\rvx_t)$ is the probability density of $\rvx_t$; $\rvw_t$ and $\bar{\rvw}_t$ denote the forward and reverse standard Brownian motion, respectively; and $t\in [0,T]$. The forward process gradually perturbs the data instance $\rvx=\rvx_0$ to $\rvx_T$, and the reverse process gradually denoises from $\rvx_T$ to $\rvx_0$. We can sample data instances through the reverse process.

In spite that the generation task will require the reverse diffusion process, this process is intractable because a data score $\nabla_{\rvx_{t}}\log p_t(\rvx_t)$ is generally not accessible. Therefore, the diffusion model aims to train the score network to approximate $\nabla_{\rvx_{t}} \log p_t(\rvx_t)$ through the score matching objective function with L2 loss~\citep{song2020score}.
Specifically, the conditional diffusion model trains a score network $\mathbf{s}_{\boldsymbol{\theta}}$, where $\boldsymbol{\theta}$ is trainable parameters, to estimate the gradient of conditional log-likelihood, $\nabla_{\rvx_t} \log p_t(\rvx_t|Y)$~\citep{chao2022denoising}. Denoising score matching (DSM)~\citep{vincent2011connection,song2020score} is commonly used to train a score network:
\begin{align}       \label{eq:dsm}
    \mathcal{L}_{\text{DSM}}& (\boldsymbol{\theta}; p_{\text{data}}(\rmX,Y)) \\
    & \coloneqq \E_t \Big \{ \lambda(t) \E_{\rvx,y \sim p_\text{data}(\rmX,Y)} \E_{\rvx_t \sim p_{t|0}(\rmX_t|\rvx,y)} \Big [ \big{|}\big{|} \mathbf{s}_{\boldsymbol{\theta}}(\rvx_t,y,t) - \nabla_{\rvx_t} \log p_{t|0} (\rvx_t|\rvx,Y=y) \big{|}\big{|}_2^2  \Big ] \Big\},  \nonumber
\end{align}
where $p_\text{data}$ is a clean-label data distribution, $p_{t|0}(\rmX_t|\rmX_0=\rvx,Y=y)$ is a perturbation kernel, $t$ is sampled over $[0,T]$, and $\lambda(t)$ is a temporal weight function.\footnote{We add the distribution $p_{\text{data}}(\rmX,Y)$ in the notation of $\mathcal{L}$ to emphasize the input distribution since we are considering the distribution change due to label noise.} The perturbation kernel $p_{t|0}(\rmX_t|\rvx,y)$ is generally Gaussian distributed and independent of the label, so it can be computed in closed form.
In this paper, we mainly focus on directly learning a conditional score network, and we discuss the application of guidance methods in \cref{sec:app_guidance}. More discussion is in \cref{subsec:app_rel0,subsec:app_rel1}.

\vspace{-2mm}
\paragraph{Label-Noise Robust Generation of Diffusion Models} Our goal is, given a training dataset with noisy labels, to train a conditional diffusion model that can generate samples conditioned on a specified clean label. This task poses a significant challenge due to the presence of noisy labels. When we train a diffusion model using \cref{eq:dsm} with a noisy labeled training dataset, the model eventually learns the noisy-label conditional score, $\nabla_{\rvx_t} \log p_t(\rvx_t|\ttY)$. As a result, the generative distribution from the model follows the mixture of the conditional distributions (see \pcref{eq:ccln}), leading to the generation of samples from different classes. Furthermore, this problem cannot be solved by directly applying methods from generative adversarial networks (GANs)~\citep{kaneko2019label,thekumparampil2018robustness}, as we are dealing with a time-dependent vector that has a probabilistic interpretation, namely the data score, and the optimization problem has a different structure.

\section{Label-Noise Robust Diffusion Models}       \label{sec:method}
This section presents our approach to training diffusion models with noisy labels. We begin by formulating an objective function that leverages the linear relationship between the clean-label and noisy-label conditional scores. Our proposed objective function involves a weighted sum of conditional score networks, where the weights are determined by time-dependent label transition information at the instance level. To estimate these weights, we introduce an estimator that utilizes a transition matrix and a time-dependent noisy-label classifier. Lastly, we present some useful techniques for practical implementations.

\vspace{-1.5mm}
\subsection{Linear Relationship between Clean- and Noisy-Label Conditional Scores}
\label{subsec:linear}
As discussed in \cref{sec:prob}, if the score network is optimized by the original DSM objective, \cref{eq:dsm}, with a noisy labeled dataset, assuming $Y=\ttY$; then the score network is expected to converge on the noisy-label conditional score. See \cref{subsec:app_proof_prop1} for further discussion.
\begin{restatable}{remark}{propositiondsm}       \label{prop1}
     Let $\boldsymbol{\theta}^*_{\text{DSM}} := \argmin_{\boldsymbol{\theta}} \mathcal{L}_{\text{DSM}}(\boldsymbol{\theta};\tilde{p}_{\text{data}}(\rmX,\ttY))$ be the optimal parameters obtained by minimizing the DSM objective. Then, $ \mathbf{s}_{\boldsymbol{\theta}^*_{\text{DSM}}}(\rvx_t,y,t) = \nabla_{\rvx_t} \log p_t (\rvx_t | \ttY=y)$ for all $\rvx_t,y,t$.
\end{restatable}

Therefore, to train the score network in the alignment of the clean-label conditional score, we modify the objective function to adjust the gradient signal from the score matching. We start this adjustment by establishing the relationship between clean-label and noisy-label conditional scores.
\begin{restatable}{theorem}{thma}       \label{thm2}
    Under a class-conditional label noise setting, for all $\rvx_t, \tty, t$,
    \begin{align}   \label{eq:thm2}
        \nabla_{\rvx_t} \log p_t(\rvx_t|\ttY=\tty) = \textstyle\sum_{y=1}^c w(\rvx_t,\tty,y,t) \nabla_{\rvx_t} \log p_t(\rvx_t|Y=y), 
    \end{align}
    where $w(\rvx_t, \tty, y, t) \coloneqq p(Y=y|\ttY=\tty) \frac{p_t(\rvx_t|Y=y)}{p_t(\rvx_t|\ttY=\tty)}$.
\end{restatable}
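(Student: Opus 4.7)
The plan is to lift the mixture identity \cref{eq:ccln} from the clean data distribution to the time-$t$ marginal, and then apply the standard log-derivative trick to turn a mixture of densities into a convex combination of scores.

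First, I would establish the time-$t$ analogue of \cref{eq:ccln}, namely
\begin{align*}
    p_t(\rvx_t \mid \ttY = \tty) = \sum_{y=1}^{c} p(Y = y \mid \ttY = \tty) \, p_t(\rvx_t \mid Y = y).
\end{align*}
This follows by marginalizing $\rvx_0$ through the perturbation kernel: since the forward SDE in \cref{eq:fwd} does not depend on labels, the transition kernel $p_{t\mid 0}(\rvx_t \mid \rvx_0, y) = p_{t \mid 0}(\rvx_t \mid \rvx_0)$ is label-free. Integrating \cref{eq:ccln} against this kernel (and using that $\ttY \perp \rmX_t \mid Y$ inherits from $\ttY \perp \rmX \mid Y$, which is the class-conditional noise assumption together with the Markov property of the forward process) yields the displayed identity.

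Next, I would differentiate. Taking $\nabla_{\rvx_t}$ of both sides of the mixture identity gives
\begin{align*}
    \nabla_{\rvx_t} p_t(\rvx_t \mid \ttY = \tty)
    = \sum_{y=1}^c p(Y=y \mid \ttY = \tty) \, p_t(\rvx_t \mid Y = y) \, \nabla_{\rvx_t} \log p_t(\rvx_t \mid Y = y),
\end{align*}
where I used $\nabla p = p \, \nabla \log p$ inside the sum. Dividing both sides by $p_t(\rvx_t \mid \ttY = \tty)$ and applying $\nabla \log p = \nabla p / p$ on the left produces exactly the claimed weights $w(\rvx_t, \tty, y, t) = p(Y = y \mid \ttY = \tty)\, p_t(\rvx_t \mid Y = y) / p_t(\rvx_t \mid \ttY = \tty)$.

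The only subtle step is the first one: verifying that the mixture structure of \cref{eq:ccln} survives the forward diffusion. The key is that class-conditional noise plus a label-independent perturbation kernel implies $p_t(\rvx_t \mid Y, \ttY) = p_t(\rvx_t \mid Y)$, so the conditional distribution at time $t$ really is a mixture with the same mixing weights $p(Y \mid \ttY)$. Once this is in place, the rest is a one-line application of the log-derivative identity, and the denominators are strictly positive (so no division-by-zero issue) as long as $p_t(\rvx_t \mid \ttY = \tty) > 0$, which holds everywhere because the Gaussian perturbation kernel has full support for $t > 0$.
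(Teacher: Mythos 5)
Your proposal is correct and follows essentially the same route as the paper's proof: lift the mixture identity of \cref{eq:ccln} to the time-$t$ marginals using the label-independence of the perturbation kernel, then apply the log-derivative identity $\nabla p = p\,\nabla \log p$ and divide by $p_t(\rvx_t\mid \ttY=\tty)$ to obtain the weights $w$. If anything, you justify the first step (and the positivity of the denominator) more explicitly than the paper does.
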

The proof of \cref{thm2} is provided in \cref{subsec:app_proof_thm2}. Since $w(\rvx_t, \tty, y, t) \geq 0$ and $ \sum_{y=1}^c w(\rvx_t, \tty, y, t) = 1$, \cref{thm2} implies that the noisy-label conditional score can be expressed as a convex combination of the clean-label conditional scores with coefficients $w(\cdot, \cdot, y, \cdot)$.

We call $w(\rvx_t, \tilde{y}, y, t)$ the \textit{transition-aware weight function}.
This function represents instance-wise and time-dependent (reverse) label transitions by \cref{prop2}.\footnote{It should be noted that $p_t(Y=y|\ttY=\tty,\rvx_t) \neq p_t(Y=y|\ttY=\tty)$.} See \cref{subsec:app_proof_prop2} for the proof.

\begin{restatable}{proposition}{prop}       \label{prop2}
    Under a class-conditional label noise setting, $w(\rvx_t, \tty, y, t) = p_t(Y=y|\ttY=\tty,\rvx_t)$. 
\end{restatable}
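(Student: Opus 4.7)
The plan is to show the identity by a direct application of Bayes' rule, with one intermediate step that propagates the class-conditional noise independence from time $0$ to time $t$.

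First, I would apply Bayes' rule to the target quantity to obtain
\begin{align*}
    p_t(Y=y\,|\,\ttY=\tty,\rvx_t)
    = \frac{p_t(\rvx_t\,|\,Y=y,\ttY=\tty)\,p(Y=y\,|\,\ttY=\tty)}{p_t(\rvx_t\,|\,\ttY=\tty)}.
\end{align*}
Comparing with the definition $w(\rvx_t,\tty,y,t) = p(Y=y\,|\,\ttY=\tty)\,p_t(\rvx_t\,|\,Y=y)/p_t(\rvx_t\,|\,\ttY=\tty)$, the claim reduces to showing $p_t(\rvx_t\,|\,Y=y,\ttY=\tty) = p_t(\rvx_t\,|\,Y=y)$, i.e., that $\rvx_t$ is conditionally independent of $\ttY$ given $Y$ at every diffusion time $t$.

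Second, I would establish this time-$t$ conditional independence from the class-conditional label noise assumption at $t=0$. By the assumption used to derive \cref{eq:ccln}, we have $p(\rvx_0\,|\,Y=y,\ttY=\tty) = p(\rvx_0\,|\,Y=y)$. Since the forward diffusion perturbation kernel $p_{t|0}(\rvx_t\,|\,\rvx_0)$ depends only on $\rvx_0$ and $t$, and neither on $Y$ nor on $\ttY$, marginalizing over $\rvx_0$ yields
\begin{align*}
    p_t(\rvx_t\,|\,Y=y,\ttY=\tty)
    = \int p_{t|0}(\rvx_t\,|\,\rvx_0)\,p(\rvx_0\,|\,Y=y,\ttY=\tty)\,\mathrm{d}\rvx_0
    = p_t(\rvx_t\,|\,Y=y).
\end{align*}
Substituting this back into the Bayes expansion gives exactly $w(\rvx_t,\tty,y,t)$, completing the proof.

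The main obstacle, such as it is, lies in justifying the second step cleanly: one has to note that the forward SDE $\mathrm{d}\rvx_t = \mathbf{f}(\rvx_t,t)\mathrm{d}t + g(t)\mathrm{d}\rvw_t$ involves no label dependence, so the conditional distribution of $\rvx_t$ given $\rvx_0$ is label-agnostic, and therefore the class-conditional independence present at $t=0$ is preserved throughout the diffusion trajectory. Once this is observed, the rest is a one-line Bayes manipulation.
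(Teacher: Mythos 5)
Your proposal is correct and rests on the same key ingredient as the paper's proof: Bayes' theorem combined with the conditional independence of $\rmX_t$ and $\ttY$ given $Y$ (you apply Bayes once to the posterior and reduce to $p_t(\rvx_t|Y=y,\ttY=\tty)=p_t(\rvx_t|Y=y)$, whereas the paper expands both $p_t(\rvx_t|Y=y)$ and $p_t(\rvx_t|\ttY=\tty)$ in $w$ and invokes the same independence midway, a cosmetic difference in the direction of the manipulation). Your one genuine addition is the explicit justification that the $t=0$ class-conditional independence propagates to all $t$ via marginalization over the label-agnostic perturbation kernel $p_{t|0}(\rvx_t|\rvx_0)$ — a step the paper asserts without proof (and only sketches in the proof of Theorem 2) — which makes your write-up slightly more self-contained.
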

\vspace{-2mm}
This probability has been used for evaluating a confidence score $p(Y=y|\ttY=y,\rvx_t)$ in some supervised classification frameworks for instance-dependent label noise learning~\citep{berthon2021confidence} whereas this paper is its first application to the deep generative model community. There are similar approaches to overcome noisy labels in GANs under the class-conditional label noise setting~\citep{kaneko2019label,thekumparampil2018robustness}, but these approaches can simply utilize $p(Y=y|\ttY=\tty)$ without either instance-based estimation or diffusion time-dependent estimation. However, training diffusion models with noisy labels poses a significant challenge because we need instance-dependent label noise information, even under the class-conditional label noise, and this distinction is the contribution of \cref{thm2}.

\begin{figure}
    \centering
    \includegraphics[width=0.95\textwidth]{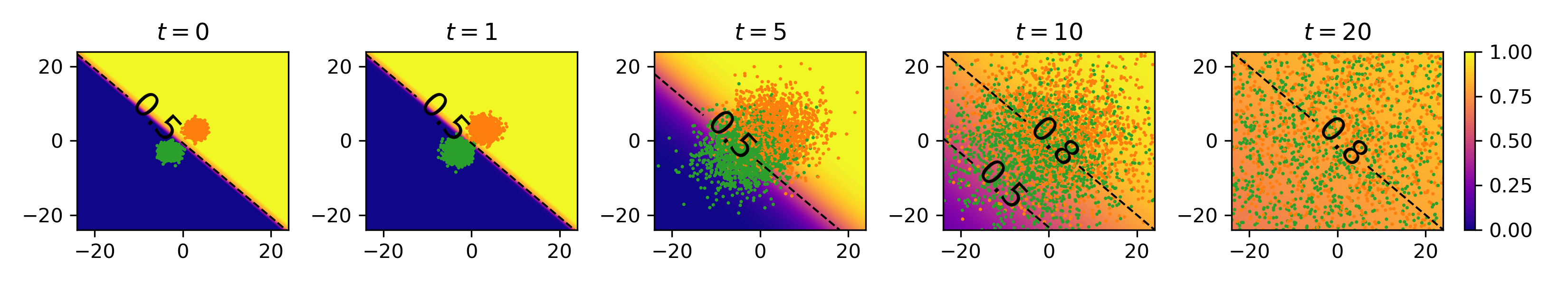}
    \vspace{-3mm}
    \caption{Contour maps of $w(\rvx_t, \ttY=1, Y=1, t)$ in the 2-D Gaussian mixture model at different diffusion timesteps. The label transition probability is set to $p(Y|\tilde{Y}=1) = ( 0.8 , 0.2 )$. The dots represent samples from each clean label (orange for class 1, green for class 2), and the dashed lines represent contours with annotated values.}
    \label{fig:contour}
    \vspace{-3mm}
\end{figure}

To visualize our transition-aware weight function, we create an analogy of a 2-D Gaussian mixture model. The detailed setup is explained in \cref{subsec:app_2d}. \cref{fig:contour} plots contour maps of $w(\rvx_t, \ttY=1, Y=1, t)$ with different diffusion timesteps. Note that $\sum_{y=1}^{2} w(\rvx_t, \ttY=1, Y=y, t) = 1$. In the small timesteps, $w(\rvx_t, \ttY=1, Y, t)$ tends to become hard labels with respect to the clean labels. This means that the noisy-label conditional score, computed by the weighted sum of the scores, becomes equal to the conditional score of the clean label of $\rvx_t$. On the other hand, in the large timesteps, $w(\rvx_t, \ttY=1, Y, t)$ vectors for most $\rvx_t$ are similar to the prior probability of label transition, $p(Y|\tilde{Y}=1) = ( 0.8 , 0.2 )$. Consequently, the noisy-label conditional scores for most $\rvx_t$ are influenced by the clean conditional scores of the other class; and this mis-matched score is being learned. This mis-matched learning leads to both improper generation with intended conditions and lower generation performance in its quality.

\subsection{Transition-Aware Weighted Denoising Score Matching}
\label{subsec:tdsm}

From the result of \cref{thm2}, we propose an objective of transition-aware weighted denoising score matching (TDSM), which is designed to minimize the distance between the transition-aware weighted sum of conditional score network outputs and the perturbed data score:
\begin{align}       \label{eq:tdsm}
    \mathcal{L}&_{\text{TDSM}} (\boldsymbol{\theta}; \tilde{p}_{\text{data}}(\rmX,\ttY)) \\
    & \coloneqq \E_{t} \Big \{ \lambda(t) \E_{\rvx,\tty \sim \tilde{p}_{\text{data}}} \E_{\rvx_t \sim p_{t|0}} \Big [ \Big{|}\Big{|} \sum_{y=1}^{c}  w(\rvx_t, \tty, y, t) \mathbf{s}_{\boldsymbol{\theta}} (\rvx_t, y, t) - \nabla_{\rvx_t} \log p_t (\rvx_t | \rvx, \ttY=\tty) \Big{|}\Big{|}_2^2 \Big ] \Big \}. \nonumber 
\end{align}
\vspace{-4mm}

The intuition behind the TDSM objective is as follows. According to \hyperref[prop1]{Remark}, the denoising score matching objective optimizes the noisy-label conditional score. However, \cref{thm2} shows that this score can be expressed as a convex combination of the clean-label conditional scores using the transition-aware weight function. Therefore, by using the transition-aware weighted sum of the conditional score model outputs as the target of L2 loss, our score network will eventually converge to the clean-label conditional score. This intuition is theoretically guaranteed by \cref{thm3}.

\begin{restatable}{theorem}{thmb}       \label{thm3}
    Let $\boldsymbol{\theta}^*_{\text{TDSM}} := \argmin_{\boldsymbol{\theta}} \mathcal{L}_{\text{TDSM}}(\boldsymbol{\theta};\tilde{p}_{\text{data}}(\rmX,\ttY))$ be the optimal parameters obtained by minimizing the TDSM objective. Then, under a class-conditional label noise setting with an invertible transition matrix, $ \mathbf{s}_{\boldsymbol{\theta}^*_\text{TDSM}}(\rvx_t,y,t) = \nabla_{\rvx_t} \log p_t (\rvx_t | Y=y)$ for all $\rvx_t,y,t$. 
\end{restatable}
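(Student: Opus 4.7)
The plan is to mirror the classical denoising-score-matching argument, obtain a pointwise equality that the optimal TDSM score must satisfy, and then invert a certain linear system whose coefficient matrix is built from the transition-aware weights.

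First I would apply the standard DSM identity (Vincent, 2011) separately for each noisy label $\tty$, treating the quantity $F_{\boldsymbol{\theta}}(\rvx_t,\tty,t) \coloneqq \sum_{y=1}^c w(\rvx_t,\tty,y,t)\mathbf{s}_{\boldsymbol{\theta}}(\rvx_t,y,t)$ as a single ``generalized score.'' Since the perturbation kernel depends only on $\rvx$ (not on $\tty$) and the weights are fixed under the expectation over $\rvx\sim\tilde p_{\text{data}}(\cdot\mid\tty)$, the usual algebraic manipulation gives
\begin{equation*}
\mathcal{L}_{\text{TDSM}}(\boldsymbol{\theta};\tilde p_{\text{data}}) = \E_t\Big\{\lambda(t)\,\E_{\tty}\E_{\rvx_t\sim p_t(\cdot\mid\tty)}\big\|F_{\boldsymbol{\theta}}(\rvx_t,\tty,t)-\nabla_{\rvx_t}\log p_t(\rvx_t\mid\ttY=\tty)\big\|_2^2\Big\} + C,
\end{equation*}
where $C$ does not depend on $\boldsymbol{\theta}$. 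Assuming the score network has enough capacity to represent any measurable function, the minimizer $\boldsymbol{\theta}^*_{\text{TDSM}}$ must satisfy, for $(\rvx_t,\tty,t)$ in the support,
\begin{equation*}
\sum_{y=1}^c w(\rvx_t,\tty,y,t)\,\mathbf{s}_{\boldsymbol{\theta}^*_{\text{TDSM}}}(\rvx_t,y,t) = \nabla_{\rvx_t}\log p_t(\rvx_t\mid\ttY=\tty).
\end{equation*}

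Next I invoke \cref{thm2}, which rewrites the right-hand side as $\sum_y w(\rvx_t,\tty,y,t)\,\nabla_{\rvx_t}\log p_t(\rvx_t\mid Y=y)$. Subtracting, I obtain for every $\tty$ the linear relation $\sum_y w(\rvx_t,\tty,y,t)\,\Delta(\rvx_t,y,t)=0$, where $\Delta(\rvx_t,y,t)\coloneqq\mathbf{s}_{\boldsymbol{\theta}^*_{\text{TDSM}}}(\rvx_t,y,t)-\nabla_{\rvx_t}\log p_t(\rvx_t\mid Y=y)$. Fixing $(\rvx_t,t)$, this is a linear system $W(\rvx_t,t)\,\Delta(\rvx_t,\cdot,t)=0$ whose coefficient matrix has entries $W_{\tty,y}(\rvx_t,t)=w(\rvx_t,\tty,y,t)$.

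The main step, and the place I expect the real work, is showing that $W(\rvx_t,t)$ is invertible. Substituting the definition of $w$ factors the matrix as $W(\rvx_t,t)=D_1(\rvx_t,t)^{-1}\,\mS\,D_2(\rvx_t,t)$, with $D_1=\mathrm{diag}\big(p_t(\rvx_t\mid\ttY=\tty)\big)_{\tty}$ and $D_2=\mathrm{diag}\big(p_t(\rvx_t\mid Y=y)\big)_{y}$. Because the forward perturbation kernel is Gaussian with full support and all class priors are positive, both diagonal matrices have strictly positive entries, so $W(\rvx_t,t)$ is invertible if and only if the reverse transition matrix $\mS$ is, which is exactly the hypothesis. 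Consequently $\Delta(\rvx_t,y,t)=0$ for every $y$, i.e., $\mathbf{s}_{\boldsymbol{\theta}^*_{\text{TDSM}}}(\rvx_t,y,t)=\nabla_{\rvx_t}\log p_t(\rvx_t\mid Y=y)$, completing the proof. The only subtlety beyond the factorization is confirming that the equality holds for all $(\rvx_t,y,t)$ rather than almost everywhere on the noisy support, which follows from positivity of the Gaussian perturbation kernel and of the noisy marginals $\tilde p_{\text{data}}(\ttY=\tty)$ (guaranteed by the invertibility of $\mS$).
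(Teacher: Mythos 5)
Your proposal is correct and follows essentially the same route as the paper's proof: reduce the TDSM objective to an explicit score-matching form via the Vincent identity applied to the weighted sum of network outputs, invoke Theorem 2 to equate the two weighted sums, and invert the weight matrix through the factorization $\mW^{(t)}(\rvx_t)=\mathrm{diag}\big(1/p_t(\rvx_t|\ttY=\tty)\big)\,\mS\,\mathrm{diag}\big(p_t(\rvx_t|Y=y)\big)$, whose invertibility reduces to that of $\mS$ because the Gaussian-perturbed densities are strictly positive. Your write-up is in fact slightly more explicit than the paper's on the first step (why the DSM-to-ESM equivalence applies to the generalized score $F_{\boldsymbol{\theta}}$) and on the support issue, but the argument is the same.
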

\vspace{-2mm}

See \cref{subsec:app_proof_thm3} for the proof. We discuss the other forms of the objective function, such as noise estimation~\citep{ho2020denoising} and data reconstruction~\citep{kingma2021variational}, in \cref{subsec:app_tdsm_other}.

It should be noted that the invertibility assumption of the transition matrix has been widely used in the noisy label community~\citep{li2021provably,zhu2021clusterability}. Here, we assume that the given noisy label is sufficiently likely to be a clean label, and without this assumption, there is no empirical evidence to claim a certain class to be a clean label. In other words, the transition matrix would be a diagonally dominant matrix, which is invertible.
Additionally, a possible approach to ensure the invertibility of the transition matrix is to mix the transition matrix with the identity matrix~\citep{patrini2017making}.

The alternative approach to consider is the $\mS$-weighted DSM objective, denoted as $\mathcal{L}_{\text{SDSM}}$. In this objective, the weights in the TDSM objective are replaced by time- and instance-independent weights $\emS_{\tty,y} = p(Y=y|\ttY=\tty)$ from the transition matrix $\mS$, which is inspired by the GAN-based methods:
\begin{align}       \label{eq:sdsm}
    \Resize{\mathcal{L}_{\text{SDSM}} (\boldsymbol{\theta}; \tilde{p}_{\text{data}}(\rmX,\ttY)) \coloneqq \E_{t} \Big \{ \lambda(t) \E_{\rvx,\tty } \E_{\rvx_t } \Big [ \Big{|}\Big{|} \sum_{y=1}^{c}  \emS_{\tty, y} \mathbf{s}_{\boldsymbol{\theta}} (\rvx_t, y, t) - \nabla_{\rvx_t} \log p_t (\rvx_t | \rvx, \ttY=\tty) \Big{|}\Big{|}_2^2 \Big ] \Big \}.}. 
\end{align}
However, \cref{thm4} proves that the score network trained by the $\mS$-weighted DSM objective cannot converge to a clean-label conditional score. The proof is provided in \cref{subsec:app_proof_thm4}.
\begin{restatable}{proposition}{thmc}       \label{thm4}
    Let $\boldsymbol{\theta}^*_{\text{SDSM}} := \argmin_{\boldsymbol{\theta}} \mathcal{L}_{\text{SDSM}}(\boldsymbol{\theta};\tilde{p}_{\text{data}}(\rmX,\ttY))$ be the optimal parameters obtained by minimizing the $\mS$-weighted DSM objective. Then, under a class-conditional label noise setting with an invertible transition matrix, $\mathbf{s}_{\boldsymbol{\theta}^*_\text{SDSM}}(\rvx_t,y,t)$ differs from $\nabla_{\rvx_t} \log p_t (\rvx_t|Y=y)$. 
\end{restatable}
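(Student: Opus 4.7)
The plan is to adapt the standard denoising score matching optimality argument to pin down what the SDSM optimizer looks like, and then use \cref{thm2} to expose the mismatch with the clean-label conditional score. Concretely, I first observe that for each fixed $t$, the inner loss in \cref{eq:sdsm} has the same algebraic form as the ordinary DSM loss, but with $\mathbf{s}_{\boldsymbol{\theta}}(\rvx_t, y, t)$ replaced by the aggregate quantity $G_{\boldsymbol{\theta}}(\rvx_t, \tty, t) := \sum_{y} \emS_{\tty, y}\,\mathbf{s}_{\boldsymbol{\theta}}(\rvx_t, y, t)$. Applying the same derivation as in \cref{prop1} (whose detail appears in \cref{subsec:app_proof_prop1}) to this reparameterized objective yields the optimality condition
\begin{align*}
\sum_{y=1}^c \emS_{\tty, y}\, \mathbf{s}_{\boldsymbol{\theta}^*_{\text{SDSM}}}(\rvx_t, y, t) = \nabla_{\rvx_t}\log p_t(\rvx_t \mid \ttY = \tty)
\end{align*}
for all $\rvx_t, \tty, t$. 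Because $\mS$ is invertible, this linear system determines the $c$ score vectors $\mathbf{s}_{\boldsymbol{\theta}^*_{\text{SDSM}}}(\rvx_t, \cdot, t)$ uniquely from the $c$ noisy-label scores.

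\textbf{Key step.} I then argue by contradiction. Suppose $\mathbf{s}_{\boldsymbol{\theta}^*_{\text{SDSM}}}(\rvx_t, y, t) = \nabla_{\rvx_t}\log p_t(\rvx_t\mid Y=y)$ for all $(\rvx_t, y, t)$. Substituting this into the displayed optimality condition and simultaneously invoking \cref{thm2} on the right-hand side gives, for every $\tty, \rvx_t, t$,
\begin{align*}
\sum_{y=1}^c \emS_{\tty,y}\,\nabla_{\rvx_t}\log p_t(\rvx_t\mid Y=y) = \sum_{y=1}^c w(\rvx_t, \tty, y, t)\,\nabla_{\rvx_t}\log p_t(\rvx_t \mid Y=y).
\end{align*}
The entries $\emS_{\tty, y}$ do not depend on $\rvx_t$ or $t$, whereas by \cref{prop2} the weights $w(\rvx_t, \tty, y, t) = p_t(Y=y\mid \ttY=\tty, \rvx_t)$ genuinely depend on $\rvx_t$ and $t$ for any non-degenerate data distribution. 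Forcing the two sides to agree everywhere would require $p_t(Y\mid\ttY,\rvx_t) \equiv p(Y\mid\ttY)$, i.e., $Y$ and $\rvx_t$ conditionally independent given $\ttY$, which makes the clean classes unidentifiable from $\rvx_t$ and is excluded whenever the class-conditional distributions are distinct.

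\textbf{Main obstacle.} The delicate step is promoting the previous vector identity to a per-coordinate identity on the weights, because collapsing the two sums requires the clean-label conditional scores $\{\nabla_{\rvx_t}\log p_t(\rvx_t\mid Y=y)\}_{y=1}^c$ to be linearly independent, either as vector-valued functions or at sufficiently many evaluation points. I expect this linear-independence argument to be the primary obstacle, and I see two viable routes. The first is to impose a mild non-degeneracy assumption on $\pdata$ that rules out affine dependencies among the clean-label scores. The second, which I find cleanest, is to exhibit a concrete counterexample: in the 2-D Gaussian mixture of \cref{subsec:app_2d}, \cref{fig:contour} already demonstrates that $w(\rvx_t, \tty, y, t)$ varies non-trivially across $\rvx_t$ and $t$ while $\emS_{\tty, y}$ is constant, so the displayed equality fails at some $(\rvx_t, t)$; this refutes the contradictory assumption and yields $\mathbf{s}_{\boldsymbol{\theta}^*_{\text{SDSM}}}(\rvx_t, y, t) \neq \nabla_{\rvx_t}\log p_t(\rvx_t\mid Y=y)$, as required.
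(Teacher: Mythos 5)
Your proposal follows essentially the same route as the paper's proof: derive the optimality condition $\sum_{y=1}^{c}\emS_{\tty,y}\,\mathbf{s}_{\boldsymbol{\theta}^*_{\text{SDSM}}}(\rvx_t,y,t)=\nabla_{\rvx_t}\log p_t(\rvx_t|\ttY=\tty)$ from the standard DSM optimality argument, rewrite the right-hand side via \cref{thm2} as the $w$-weighted combination of clean-label scores, and conclude from $w(\rvx_t,\tty,y,t)\neq \emS_{\tty,y}$ that the optimizer cannot equal the clean-label conditional score. Where you go beyond the paper is in explicitly flagging that passing from ``the weights differ'' to ``the weighted sums of scores differ'' requires the clean-label conditional scores to be linearly independent (or a concrete witness such as the 2-D Gaussian mixture); the paper's proof elides exactly this step by asserting that $\mS^{-1}\mW^{(t)}(\rvx_t)\neq\mI$ implies the score vectors differ, so your non-degeneracy discussion is not a weakness of your argument but a repair of a gap the published proof shares.
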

\vspace{-2mm}
\cref{fig:2d} in \cref{subsec:app_anaylsis_2d} visualizes the optimal score networks of DSM variants in a 2-D toy case.

\subsection{Estimation of Transition-Aware Weights}
\label{subsec:weight}

To implement the TDSM objective, we need to estimate the transition-aware weight function $w(\rvx, \tty, y, t)$. First, we consider the case that we have the transition matrix $\mS$, so we can evaluate $p(Y=y|\ttY=\tty)$, which has been a common goal in the community of noisy labels~\citep{li2021provably,zhang2021learning}.
Then, we can compute the transition-aware weight function $w$ using the noisy label prediction probability ${p_t(\ttY|\rvx_t)}$ and the transition matrix $\mS$. The below introduces the estimator $\hat{w}$ of $w$ using the noisy labeled dataset (see \cref{subsec:app_eq} for detailed derivations):
\vspace{-1mm}
\begin{align}       \label{eq:weight}
    \Resize{w(\rvx_t,\tty,y,t) =  \frac{\emS_{\tty, y}p(\ttY=\tty)}{p_t(\ttY=\tty|\rvx_t)} \sum\limits_{i=1}^c \frac{\emS^{-1}_{y,i} p_t(\ttY=i|\rvx_t)}{p(\ttY=i)} , \hat{w}(\rvx_t,\tty,y,t) \coloneqq \frac{ \emS_{\tty, y}n_{\tty}}{\tilde{\text{h}}_{\boldsymbol{\phi}}(\rvx_t,t)_{\tty}} \sum\limits_{i=1}^c \frac{\emS^{-1}_{y,i} \tilde{\textrm{h}}_{\boldsymbol{\phi}}(\rvx_t,t)_{i}}{n_{i}} ,},
\end{align}
where $\tilde{\mathbf{h}}_{\boldsymbol{\phi}}(\rvx_t,t)$ is the time-dependent noisy-label classifier; $\boldsymbol{\phi}$ is the parameters of the classifier; and $n_i$ is the number of data instances for $i$-th class. We obtain the estimator of the noisy label prediction probability ${p_t(\ttY|\rvx_t)}$ as $\tilde{\mathbf{h}}_{\boldsymbol{\phi}}(\rvx_t,t)$. Additionally, we estimate the noisy label prior $p(\ttY)$ using the statistics of the dataset, i.e., $n_i / n$. Note that the transition matrix $\mS$ is fixed and independent of $\rvx_t$, so the inverse matrix operation is required only once during the entire training process.

In practice, the reverse transition matrix, $\mS$, may not be known. However, there exist methods in the learning with the noisy label community that can estimate the transition matrix $\mT$, where $\emT_{i,j} = p(\ttY=j|Y=i)$, from the noisy labeled dataset~\citep{li2021provably,zhang2021learning}. Using the estimated transition matrix and the noisy label prior, we can obtain the reverse transition matrix by applying Bayes' theorem. (See \cref{subsec:app_analysis_matrix}.) We applied the transition matrix estimation method from VolMinNet~\citep{li2021provably} to our time-dependent noisy-label classifier, $\tilde{\mathbf{h}}_{\boldsymbol{\phi}}$, and we analyzed the effect of this estimation in \cref{subsec:exp_weight}.

\begin{figure}
    \centering
    \includegraphics[width=\textwidth]{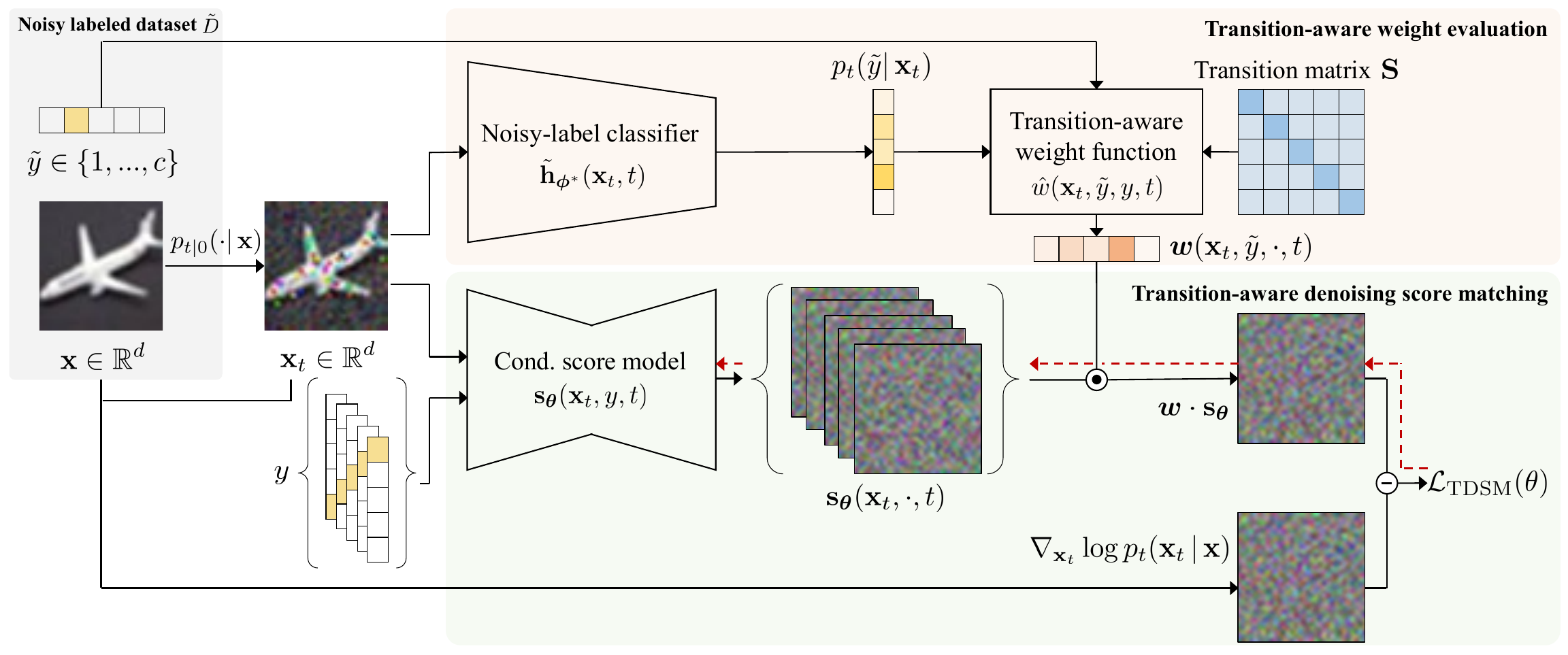}
    \caption{The training procedure of the proposed approach. The solid black arrows indicate the forward propagation, and the dashed red arrows represent the gradient signal flow. The filled circle operation denotes the dot product operation, and the dashed operation represents the L2 loss. The noisy-label classifier $\tilde{\mathbf{h}}_{\boldsymbol{\phi}^*}$ can be obtained by the cross-entropy loss on the noisy labeled dataset $\tilde{D}$.}
    \label{fig:overview}
\end{figure}

\subsection{Practical Implementation}
\label{subsec:prac}

\paragraph{Training Procedure}
\cref{fig:overview} illustrates the overall training procedure with the TDSM objective. First, we prepare the time-dependent noisy-label classifier and the transition matrix. The noisy-label classifier $\tilde{\mathbf{h}}_{\boldsymbol{\phi}^*}$ is obtained by training with a combination of cross-entropy losses over different time steps~\citep{song2020score,chao2022denoising} on the noisy labeled dataset. We estimate the transition matrix $\mS$ using prior knowledge or the existing methods discussed in \cref{subsec:weight}. During the training iterations of the score network, we evaluate the transition-aware weights using \cref{eq:weight}. Next, we obtain the score network outputs for all classes. Finally, we compute the TDSM objective value using the weights and score outputs via \cref{eq:tdsm}, and optimize the model using the gradient descent method.

\setlength{\intextsep}{0pt}
\begin{wrapfigure}{R}{0.548\textwidth}
    \IncMargin{1.3em}
    \begin{algorithm}[H]
        \small
        \SetCustomAlgoRuledWidth{\linewidth}
        \DontPrintSemicolon
        \KwInput{Noisy labeled dataset $\tilde{D}$, transition matrix $\mS$, noisy-label classifier $\tilde{\mathbf{h}}_{\boldsymbol{\phi}^*}$, perturbation kernel $p_{t|0}$, temporal weights $\lambda$, skip threshold $\tau$}
        \KwOutput{Conditional score model $\mathbf{s}_{\boldsymbol{\theta}}$}
        \While{not converged}{%
            Sample $(\rvx, \tty)$ from $\tilde{D}$, and time $t$ from $[0, T]$ \\
            Sample $\rvx_t$ from the transition kernel $p_{t|0}$ \\
            Evaluate $w(y) = \hat{w}(\rvx_t,\tty,y,t)$ using \cref{eq:weight} \\    
            $\mathbf{s} = 0$ \\
            \For{$y \in \{ 1, ..., c\}$}{%
                \uIf{$y=\tty$}{%
                    $\mathbf{s} \leftarrow \mathbf{s} + w(y) \mathbf{s}_{\boldsymbol{\theta}} (\rvx_t, y, t) $
                }
                \ElseIf{ $w(\rvx_t, \tty, y, t) > \tau$} {%
                    $\mathbf{s} \leftarrow \mathbf{s} + w(y) \mathbf{s}_{\boldsymbol{\theta}} (\rvx_t, y, t).\text{detach}() $
                }
            }
            $l \leftarrow \lambda(t) || \mathbf{s} - \nabla_{\rvx_t} \log p_t (\rvx_t | \rvx, \ttY=\tty) ||_{2}^{2}$ \\
            Update $\boldsymbol{\theta}$ by $l$ using the gradient descent method
        }
        \caption{Training algorithm with TDSM}     \label{alg:ours}
    \end{algorithm}
    \DecMargin{1.3em}
\end{wrapfigure}

\paragraph{Reduction on Time and Memory Usage}
\label{para:complexity}
The TDSM objective involves network evaluations for all classes, which can cause memory shortages during backpropagation. To mitigate this problem, we only perform backpropagation for the network output corresponding to the given noisy label (lines 7-8 of \cref{alg:ours}). We choose this output corresponding to the given label as it is the dominant term of the transition-aware weight function for most instances.

Furthermore, we observed that the transition-aware weights $w(\rvx_t,\tty,y,t)$ have negligible values for most $(\tty, y)$ pairs. Specifically, on average, only 1.3 elements of the weight vector $\bm{w}(\rvx_t,\tty,\cdot,t)$ have values greater than 0.01 in our experiments on the CIFAR datasets. To reduce the training time of forward propagation, we skip network evaluations for instances, where the corresponding transition-aware weight function value was below a threshold $\tau$ (line 9 of \cref{alg:ours}). We set $\tau$ to 0.01 for all experiments, and \cref{subsec:ablation_skip} provides the ablation study of $\tau$. We discuss the training time further in \cref{subsec:app_time}.
\cref{alg:ours} specifies a training procedure for our method.

\section{Experiments}       \label{sec:exp}
We evaluate our method on three benchmark datasets commonly used for both image generation and label noise learning: MNIST~\citep{lecun2010mnist}, CIFAR-10, and CIFAR-100~\citep{krizhevsky2009learning}. We also perform experiments on a real-world noisy labeled dataset, Clothing-1M~\citep{xiao2015learning}. For the benchmark datasets, we create two types of label noise: symmetric and asymmetric noise~\citep{kaneko2019label}. For symmetric noise, we randomly flip the ground-truth label to another class, while for asymmetric noise, we flip the ground-truth label to a predefined similar class. We refer to flipping probability as noise rate. 
Throughout the experiments, we mainly use EDM~\citep{karras2022elucidating} as the backbone of the diffusion models. We provide the results of other backbones in \cref{subsec:app_backbone}.
Additional experimental settings are provided in \cref{subsec:app_datasets,subsec:app_model_config}.

We evaluate conditional generative models from multiple perspectives using four unconditional metrics\footnote{The unconditional metric is calculated from the generated images only, not accounting their labels.}, including Fréchet Inception Distance (FID)~\citep{heusel2017gans}, Inception Score (IS)~\citep{salimans2016improved}, Density, and Coverage~\citep{naeem2020reliable}, and four conditional metrics, namely CW-FID, CW-Density, CW-Coverage~\citep{chao2022denoising}, and Classification Accuracy Score (CAS)~\citep{ravuri2019classification}. The Class-Wise (CW) metric evaluates the metric separately for each class and averages them. A detailed description of the metrics is in \cref{subsec:app_metrics}.

\begin{table}[t]
    \centering
    \caption{Experimental results on the MNIST, CIFAR-10, and CIFAR-100 datasets with various noise settings. The percentages in headers represent the noise rate. `un' and `cond' indicate whether a metric is unconditional or conditional. \textbf{Bold} numbers indicate better performance.
    }
    \vspace{-2mm}
    \adjustbox{max width=\textwidth}{%
    \begin{tabular}{cc|lc cc cc cc cc c}
        \toprule
            & \multicolumn{3}{c}{}  & \multicolumn{4}{c}{Symmetric} & \multicolumn{4}{c}{Asymmetric}                        & Clean\\
            \cmidrule(lr){5-8} \cmidrule(lr){9-12} \cmidrule(lr){13-13}
            & \multicolumn{3}{c}{Metric}    & \multicolumn{2}{c}{20\%} & \multicolumn{2}{c}{40\%}   & \multicolumn{2}{c}{20\%} & \multicolumn{2}{c}{40\%}  & 0\%\\
            \cmidrule(lr){5-6} \cmidrule(lr){7-8} \cmidrule(lr){9-10} \cmidrule(lr){11-12} \cmidrule(lr){13-13}
            & \multicolumn{3}{c}{}          & DSM          & {TDSM}      & DSM          & {TDSM}      & DSM          & {TDSM}      & DSM          & {TDSM}   & DSM\\
        \midrule
        \multirow{5}{*}{\rotatebox[origin=c]{90}{MNIST}} & \multirow{2}{*}{\rotatebox[origin=c]{90}{un}} &
              Density  & ($\uparrow$)      & 81.11         & \bf{84.83}& 81.93         & \bf{84.55}& 84.23         & \bf{85.27}& 84.47         & \bf{84.71} & 86.20\\
            && Coverage& ($\uparrow$)      & 81.23         & \bf{82.16}& \bf{81.65}    & 81.31     & 82.30         & \bf{82.45}& 81.97         & \bf{82.27} & 82.90\\
        \\[-0.66em]
        & \multirow{3}{*}{\rotatebox[origin=c]{90}{cond}}
            & CAS       & ($\uparrow$)      & 94.31         & \bf{98.22}& 72.52         & \bf{96.49}& 95.25         & \bf{98.22}& 89.29         & \bf{96.54} & 98.55\\
            && CW-Density& ($\uparrow$)    & 69.78         & \bf{82.99}& 55.70         & \bf{80.09}& 78.58         & \bf{83.74}& 73.54         & \bf{81.65} & 85.79\\
            && CW-Coverage& ($\uparrow$)   & 76.77         & \bf{80.93}& 70.45         & \bf{79.21}& 79.97         & \bf{81.35}& 77.50         & \bf{80.57} & 82.09\\
        \midrule
        \multirow{8}{*}{\rotatebox[origin=c]{90}{CIFAR-10}} & \multirow{4}{*}{\rotatebox[origin=c]{90}{un}} &
              FID       & ($\downarrow$)    & \bf{2.00}     & 2.06      & \bf{2.07}     & 2.43      & 2.02          & \bf{1.95} & 2.23          & \bf{2.06} & 1.92\\
            && IS       & ($\uparrow$)      & 9.91          & \bf{9.97} & 9.83          & \bf{9.96} & \bf{10.06}    & 10.04     & \bf{10.09}    &  10.02    & 10.03\\
            && Density  & ($\uparrow$)      & 100.03        & \bf{106.13}& 100.94       & \bf{111.63}& 100.66       & \bf{104.15}& 101.25       & \bf{105.19}& 103.08\\
            && Coverage & ($\uparrow$)      & 81.13         & \bf{81.89}& 80.93         & \bf{82.03}& 81.36         & \bf{81.81}& 81.10         & \bf{81.90} & 81.90\\
        \\[-0.66em]
        & \multirow{4}{*}{\rotatebox[origin=c]{90}{cond}}
            & CW-FID    & ($\downarrow$)    & 16.21         & \bf{12.16}& 30.45         & \bf{15.92}& 11.97         & \bf{10.89}& 15.18         & \bf{12.54} & 10.23\\
            && CAS      & ($\uparrow$)      & 66.80         & \bf{70.92}& 47.21         & \bf{62.28}& 72.66         & \bf{74.28}& 68.98         & \bf{71.51} & 77.74\\
            && CW-Density& ($\uparrow$)     & 88.45         & \bf{99.52}& 73.02         & \bf{97.80}& 96.10         & \bf{101.77}& 92.13        & \bf{99.21} & 102.63\\
            && CW-Coverage& ($\uparrow$)    & 77.80         & \bf{80.29}& 71.63         & \bf{78.65}& 79.95         & \bf{80.99}& 78.12         & \bf{79.98} & 81.57\\
        \midrule
        \multirow{8}{*}{\rotatebox[origin=c]{90}{CIFAR-100}} & \multirow{4}{*}{\rotatebox[origin=c]{90}{un}} &
              FID       & ($\downarrow$)    & \bf{2.96}     & 4.26      & \bf{3.36}     & 6.85      & 2.76          & \bf{2.64} & \bf{2.73}     & 2.81       & 2.51\\
            && IS       & ($\uparrow$)      & 12.28         & \bf{12.29}& 11.86         & \bf{12.07}& 12.49         & \bf{12.79}& 12.51         & \bf{12.57} & 12.80\\
            && Density  & ($\uparrow$)      & 83.01         & \bf{85.66}& 81.70         & \bf{88.45}& 87.36         & \bf{88.41}& \bf{87.06}    & 87.01      & 87.98\\
            && Coverage & ($\uparrow$)      & \bf{75.02}    & 74.90     & \bf{73.92}    & 72.12     & 77.04         & \bf{77.46}& \bf{76.56}    & 76.27      & 77.63\\
        \\[-0.66em]
        & \multirow{4}{*}{\rotatebox[origin=c]{90}{cond}}
            & CW-FID    & ($\downarrow$)    & 79.91         & \bf{78.71}& 100.04        & \bf{93.24}& 75.39         & \bf{69.83}& 89.13         & \bf{73.13} & 66.97\\
            && CAS      & ($\uparrow$)      & 25.49         & \bf{28.54}& 15.41         & \bf{21.17}& 33.31         & \bf{37.33}& 23.50         & \bf{34.47} & 39.50\\
            && CW-Density& ($\uparrow$)     & 66.47         & \bf{70.62}& 49.77         & \bf{60.60}& 72.14         & \bf{78.92}& 60.27         & \bf{74.30} & 82.58\\
            && CW-Coverage& ($\uparrow$)    & 70.11         & \bf{70.77}& 60.64         & \bf{63.89}& 71.08         & \bf{74.01}& 64.19         & \bf{71.48} & 75.78\\
        \bottomrule
    \end{tabular}
    }
    \vspace{-4mm}
\label{tab:main}
\end{table}

\vspace{-1mm}
\subsection{Analysis on Benchmark Dataset with Synthetic Label Noise}
\label{subsec:quant}

\cref{tab:main} presents the performance of the baseline models trained with the original DSM objective and our models trained with the TDSM objective on the benchmark datasets with various noise settings. First, the performance of the baseline models decreases on both unconditional and conditional metrics when trained on noisy datasets compared to the clean datasets. This indicates that the label noise in the diffusion model training degrades the sample quality and causes a class mismatch problem.

Second, our models outperform the baseline models in all cases with respect to the conditional metrics. As the noise increases, the performance differences become more significant. We compare the CW metrics evaluated separately for each class in \cref{subsec:app_cw}, which shows that our models perform better than the baselines in most classes. Furthermore, for the unconditional metrics, our models beat the baseline models in most cases. Additionally, \cref{fig:expimg} provides conditionally generated images from fixed $\rvx_T$ for each model. The images generated by our model have better quality with an accurate class representation of the intended class than those generated by the baseline model.

We also compare the generation performances with the label-noise robust GAN model~\citep{kaneko2019label} in \cref{tab:gan} of \cref{subsec:app_gan}. The diffusion models generate images much better than the GAN models even in the noisy label dataset, and our models boost the performances by increasing the robustness to label noise.

\begin{figure}[t]
     \centering
     \begin{subfigure}[b]{0.49\textwidth}
         \centering
         \includegraphics[width=\textwidth]{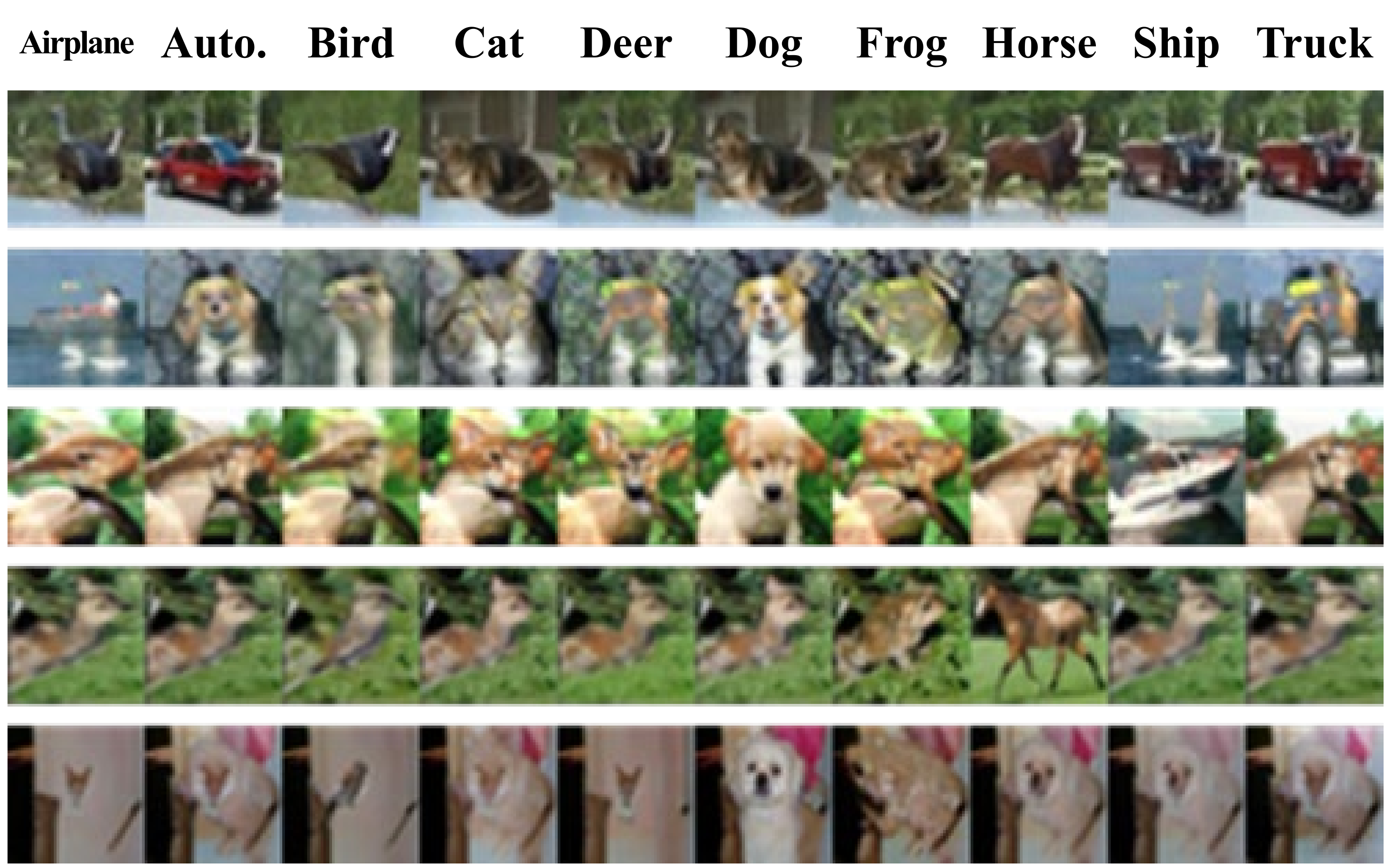}
         \caption{DSM}
         \label{subfig:expimg_base}
     \end{subfigure}
     \hfill 
     \begin{subfigure}[b]{0.49\textwidth}
         \centering
         \includegraphics[width=\textwidth]{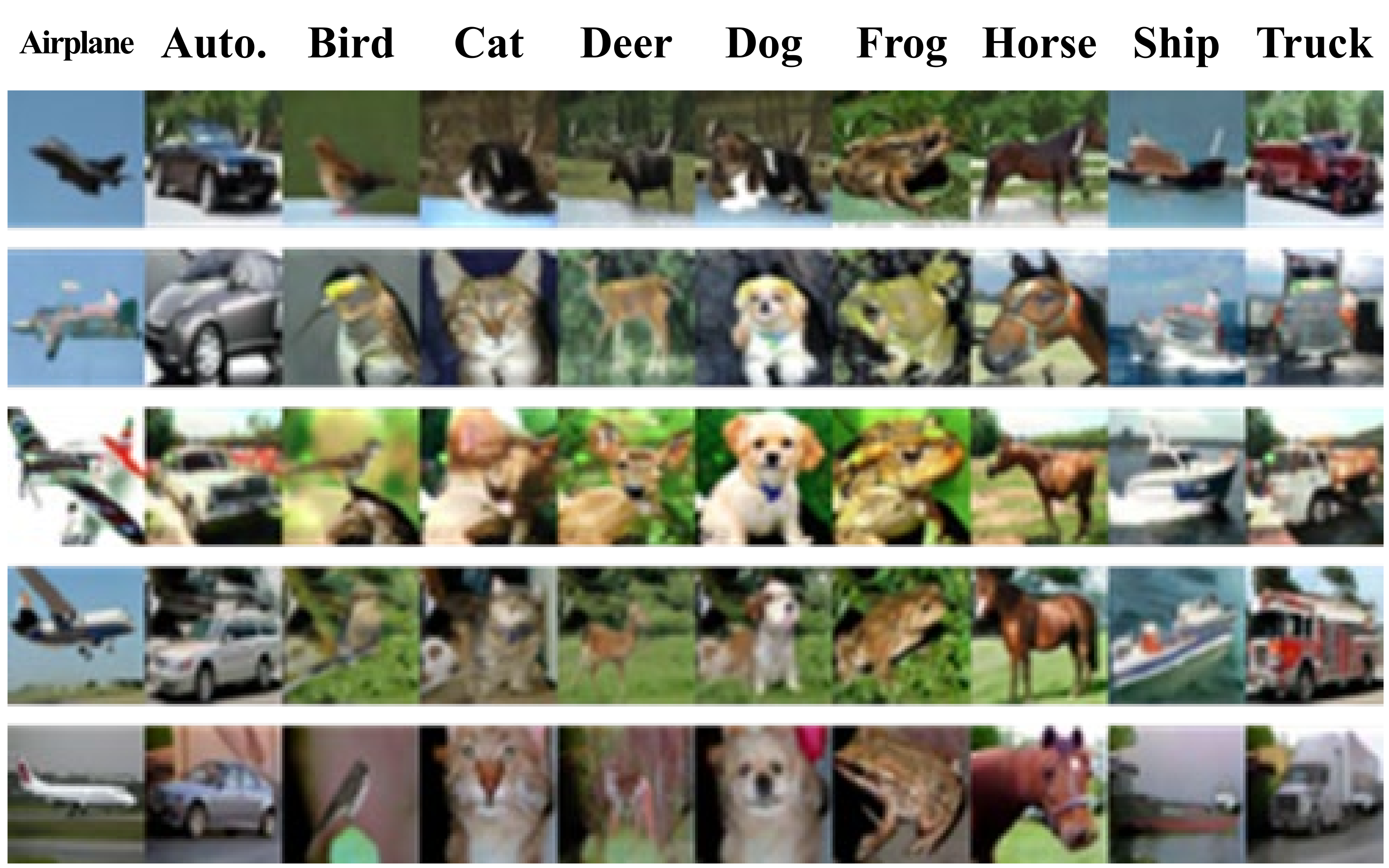}
         \caption{TDSM (ours)}
         \label{subfig:expimg_ours}
     \end{subfigure}
        \caption{Generated images from (a) baseline and (b) our models, trained on the CIFAR-10 datasets under 40\% symmetric noise. Each row contains the samples generated by each class for a fixed $\rvx_T$.}
        \label{fig:expimg}
\end{figure}

\begin{table}[t]
    \begin{minipage}[b]{0.7\textwidth}
    \centering
    \captionof{table}{Experimental results on the clean MNIST, CIFAR-10, and CIFAR-100 dataset. The results of conditional metrics are in \cref{tab:app_clean}.
    }
    \adjustbox{max width=\textwidth}{%
    \begin{tabular}{lc cc cc cc}
        \toprule
            \multicolumn{2}{c}{\multirow{2}{*}[-0.5\dimexpr \aboverulesep + \belowrulesep + \cmidrulewidth]{Metric}} & \multicolumn{2}{c}{MNIST} & \multicolumn{2}{c}{CIFAR-10} & \multicolumn{2}{c}{CIFAR-100}    \\
            \cmidrule(lr){3-4} \cmidrule(lr){5-6} \cmidrule(lr){7-8}
             \multicolumn{2}{c}{}                       & DSM      & {TDSM}       & DSM      & {TDSM}         & DSM      & {TDSM}              \\
        \midrule
              FID       &($\downarrow$)                & -         & -             & 1.92         & \bf{1.91}     & \bf{2.51}    & 2.67              \\
             IS        &($\uparrow$)                   & -         & -             & 10.03        & \bf{10.10}    & 12.80        & \bf{12.85}        \\
             Density   &($\uparrow$)                   & 86.20 & \bf{88.08}& 103.08       & \bf{104.35}   & 87.98        & \bf{90.04}        \\
             Coverage & ($\uparrow$)                   & 82.90 & \bf{83.69}& 81.90        & \bf{82.07}    & 77.63        & \bf{78.28}        \\
        \bottomrule
    \end{tabular}
    \label{tab:clean}
    }
    \end{minipage}
    \hfill
    \begin{minipage}[t]{0.29\textwidth}
    \setlength{\abovecaptionskip}{5pt}
    \centering
    \includegraphics[width=0.85\textwidth]{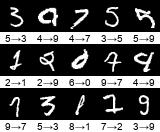}
    \captionof{figure}{Noisy labels of MNIST, captured by $\hat{w}$. Marks below images denote `label → prediction'.}
    \label{fig:clean_weight}
    \end{minipage}
\end{table}
\subsection{Analysis on Benchmark Dataset with Annotated Label}
\label{subsec:clean}

We evaluate the performance of our model on the benchmark datasets with original annotated labels to explore potentially noisy or ambiguous labels in these datasets. We apply our proposed method using the transition matrix with 5\% symmetric noise. \cref{tab:clean} shows that our label-noise robust models consistently outperform the baseline models. This indicates that existing benchmark datasets may suffer from noisy labels. This phenomenon is well-known in the noisy-label supervised tasks~\citep{bae2022noisy}. To further investigate this observation, we compute the transition-aware weights $\hat{w}(\rvx_{t_{\epsilon}}, \bar{y}, y, t_{\epsilon})\approx p_{t_{\eps}}(Y=y|\ttY=\bar{y},\rvx_{t_{\epsilon}})$ at small time $t_{\epsilon}=0.01$ for each MNIST train image and label pair $(\rvx,\bar{y})$. Then, we identify 15 images with the smallest values of $\hat{w}(\rvx_{t_{\epsilon}}, \bar{y}, \bar{y}, t_{\epsilon})$ and check the predicted label from ours, which is $\argmax_{y} \hat{w}(\rvx_{t_{\epsilon}},\bar{y},y,t)$. \cref{fig:clean_weight} shows that the benchmark dataset also contains examples with noisy or ambiguous labels, and this could be the reason why our models improve the performance even when trained on the clean labeled datasets.

\begin{wraptable}{R}{.3\textwidth}
    \centering
    \caption{Experimental results on the Clothing-1M dataset.}
    \begin{tabular}{c cc}
        \toprule
        Metric & DSM & TDSM \\
        \midrule
        FID ($\downarrow$)& 6.67 & \bf{4.94} \\
        CAS ($\uparrow$)& 46.52 & \bf{47.79} \\
        \bottomrule
    \end{tabular}
    \label{tab:clothing}
\end{wraptable}

\subsection{Analysis on Real-World Dataset}
\label{subsec:realworld}
We apply our model to the Clothing-1M dataset, which contains of 1 million clothing images for 14 classes, to evaluate its performance under real-world label noise. This dataset is obtained by crawling images from websites, which introduces label noise, and its label accuracy is 61.54\%~\citep{xiao2015learning}.  We also have 25K images containing both noisy and clean labels. We followed the experimental setup from \citet{kaneko2019label}. Specifically, we estimate the transition matrix using the statistics of the 25K clean training dataset, and we resize the images to a resolution of 64$\times$64. \cref{tab:clothing} shows that our model improves both FID and CAS metrics compared to the baseline model. We did not use other metrics, such as CW-FID, due to the limited amount of clean labeled data for each class~\citep{kaneko2019label}. We attach the generated images of the Clothing-1M dataset in \cref{subsec:app_clothing}. These results indicate that our model performs well even with real-world label noise.

\subsection{Combining with the Existing Noisy Label Corrector}
\label{subsec:exp_combine}
The existing classifiers to mitigate the noisy label can be considered as finding the true label after noise filtering. By pipelining this noisy label corrector and our TDSM approach, we can find a better noise filtering in terms of generation performance. From this perspective, our approach is orthogonal to existing supervised learning methods for noisy labels, offering an integration of both paradigms. To validate this premise, we conducted experiments by 1) obtaining the corrected labels from the existing classifier learning methods with noisy labels, VolMinNet~\citep{li2021provably} and DISC~\citep{li2023disc}, and 2) training the diffusion model with the corrected labels. The results are summarized in \cref{tab:corrector}, and \cref{tab:corrector1,tab:corrector2} of \cref{subsec:app_corrector} for the full set of results. The experimental results indicate that applying our TDSM objective consistently improves the performance even with the corrected labels. Therefore, we believe that our approach tackles the noisy label problem from a diffusion model learning perspective, providing an orthogonal direction compared to conventional noisy label methods.

\begin{table}[t]
    \begin{minipage}[b]{0.47\linewidth}
    \centering
    \captionof{table}{Experimental results of combining with the noisy label corrector, DISC~\citep{li2023disc}, on the CIFAR-100 under 40\% noise.}
    \adjustbox{max width=\textwidth}{%
    \begin{tabular}{c|lc cc cc}
        \toprule
            \multicolumn{3}{c}{\multirow{2}{*}[-0.5\dimexpr \aboverulesep + \belowrulesep + \cmidrulewidth]{Metric}} & \multicolumn{2}{c}{Symmetric} & \multicolumn{2}{c}{Asymmetric}    \\
            \cmidrule(lr){4-5} \cmidrule(lr){6-7}
            \multicolumn{3}{c}{}            & DSM      & {TDSM}       & DSM         & {TDSM}          \\
        \midrule
            \multirow{4}{*}{\rotatebox[origin=c]{90}{un}} &
              FID       & ($\downarrow$)    & \bf{2.54} & 2.84          & 4.00         & \bf{3.41}     \\
            & IS        & ($\uparrow$)      & 12.80     & \bf{12.94}    & 12.51        & \bf{12.83}    \\
            & Density   & ($\uparrow$)      & 87.28     & \bf{90.20}    & 83.65        & \bf{88.10}    \\
            & Coverage  & ($\uparrow$)      & 77.44     & \bf{77.63}    & 75.94        & \bf{77.57}    \\
            \multicolumn{1}{c}{} & \\[-0.66em]
            \multirow{4}{*}{\rotatebox[origin=c]{90}{cond}} &
              CW-FID    & ($\downarrow$)    & 67.52     & \bf{67.33}    & 78.93        & \bf{76.62}    \\
            & CAS       & ($\uparrow$)      & 42.15     & \bf{42.39}    & 39.60        & \bf{39.72}    \\
            & CW-Density& ($\uparrow$)      & 82.04     & \bf{85.44}    & 76.04        & \bf{81.69}    \\
            & CW-Coverage& ($\uparrow$)     & 75.20     & \bf{75.61}    & 70.39        & \bf{71.62}    \\
        \bottomrule
    \end{tabular}
    \label{tab:corrector}
    }
    \end{minipage}
    \hfill
    \begin{minipage}[b]{0.51\linewidth}
    \centering
    \captionof{table}{Ablation study of weight functions on the CIFAR-10 dataset under 40\% symmetric noise.
    }
    \adjustbox{max width=\textwidth}{%
    \begin{tabular}{c|lc c ccc}
        \toprule
            \multicolumn{3}{c}{\multirow{2}{*}[-0.5\dimexpr \aboverulesep + \belowrulesep + \cmidrulewidth]{Metric}}      & \multirow{2}{*}[-0.5\dimexpr \aboverulesep + \belowrulesep + \cmidrulewidth]{DSM}      &\multirow{2}{*}[-0.5\dimexpr \aboverulesep + \belowrulesep + \cmidrulewidth]{$\mS$-DSM} & \multicolumn{2}{c}{TDSM}     \\
            \cmidrule(lr){6-7}
            \multicolumn{3}{c}{} &&& $w_{\mS}$     & $w_{\hat{\mS}}$     \\
        \midrule
            \multirow{4}{*}{\rotatebox[origin=c]{90}{un}} &
              FID       & ($\downarrow$)    & 2.07      & 3.20          & 2.43          & 2.32             \\
            & IS        & ($\uparrow$)      & 9.83      & 9.92          & 9.96          & 9.97             \\
            & Density   & ($\uparrow$)      & 100.94    & 118.85        & 111.63        & 110.67           \\
            & Coverage  & ($\uparrow$)      & 80.93     & 81.27         & 82.03         & 82.23            \\
            \multicolumn{1}{c}{} & \\[-0.66em]
            \multirow{4}{*}{\rotatebox[origin=c]{90}{cond}} &
              CW-FID    & ($\downarrow$)    & 30.45     & 16.26         & 15.92         & 15.83            \\
            & CAS       & ($\uparrow$)      & 47.21     & 63.46         & 62.28         & 61.37            \\
            & CW-Density& ($\uparrow$)      & 73.02     & 107.24        & 97.80         & 97.59            \\
            & CW-Coverage& ($\uparrow$)     & 71.63     & 78.32         & 78.65         & 78.68            \\
        \bottomrule
    \end{tabular}
    \label{tab:weight}
    }
    \end{minipage}
\end{table}

\subsection{Ablation Study of Transition-Aware Weights}
\label{subsec:exp_weight}

\paragraph{Estimating the Transition Matrix $\mS$}
So far, we have assumed that the transition matrix $\mS$ is given. To demonstrate the practical usefulness of our model, we combine our model with an existing method of classifier learning with noisy label, VolMinNet~\citep{li2021provably}, to estimate the noisy-label classifier and the transition matrix, simultaneously. (See \cref{subsec:app_estimate_tm} for a detailed procedure.) Subsequently, we obtain the estimated transition matrix $\hat{\mS}$ and we evaluate the weight function from $\hat{\mS}$, denoted as $w_{\hat{\mS}}$. As shown in the $w_{\hat{\mS}}$ column of \cref{tab:weight}, we observe that the model outperforms the baseline model, even when utilizing the learned transition matrix.

\paragraph{Instance- and Time-Dependent Transition-Aware Weights}
To investigate the effect of instance- and time-dependent transition-aware weights, we train a model with the $\mS$-weighted DSM objective, where the weights contain instance- and time-independent label transition information. As shown in the $\mS$-DSM column of \cref{tab:weight}, we observe that the model is still robust to noisy labels compared to the baseline model as it still reflects the label transition information. However, the model performs slightly worse on some metrics than the one trained with instance- and time-dependent weight. Interestingly, the $\mS$-weighted model performs well on density-related metrics. However, since density metrics are insensitive to mode dropping~\citep{naeem2020reliable}, it is possible that the model ignores some areas that are relatively more affected by label noise due to instance-independent weights. In this case, the performance of other metrics will suffer, but the density metrics will be unaffected.

\section{Conclusion}
\label{sec:conc}

This paper addresses the problem of noisy labels in conditional diffusion models and proposes a new objective called Transition-aware weighted Denoising Score Matching. This objective utilizes a transition-aware weight function that leverages an instance-wise and time-dependent label transition information specifically tailored to the diffusion model. Experimental results on various datasets and noisy label settings demonstrate that the proposed methods outperform baseline models in both conditional and unconditional performance. 

\subsubsection*{Acknowledgments}
This work was supported by the National Research Foundation of Korea (NRF) grant funded by the Korea government (MSIT) (NRF-2019R1A5A1028324).

\bibliography{main}

\begin{thebibliography}{73}
\providecommand{\natexlab}[1]{#1}
\providecommand{\url}[1]{\texttt{#1}}
\expandafter\ifx\csname urlstyle\endcsname\relax
  \providecommand{\doi}[1]{doi: #1}\else
  \providecommand{\doi}{doi: \begingroup \urlstyle{rm}\Url}\fi

\bibitem[Anderson(1982)]{anderson1982reverse}
Brian~DO Anderson.
\newblock Reverse-time diffusion equation models.
\newblock \emph{Stochastic Processes and their Applications}, 12\penalty0 (3):\penalty0 313--326, 1982.

\bibitem[Bae et~al.(2022)Bae, Shin, Na, Jang, Song, and Moon]{bae2022noisy}
HeeSun Bae, Seungjae Shin, Byeonghu Na, JoonHo Jang, Kyungwoo Song, and Il-Chul Moon.
\newblock From noisy prediction to true label: Noisy prediction calibration via generative model.
\newblock In \emph{International Conference on Machine Learning}, pp.\  1277--1297. PMLR, 2022.

\bibitem[Bae et~al.(2024)Bae, Shin, Na, and chul Moon]{bae2024dirichletbased}
HeeSun Bae, Seungjae Shin, Byeonghu Na, and Il~chul Moon.
\newblock Dirichlet-based per-sample weighting by transition matrix for noisy label learning.
\newblock In \emph{The Twelfth International Conference on Learning Representations}, 2024.

\bibitem[Berthon et~al.(2021)Berthon, Han, Niu, Liu, and Sugiyama]{berthon2021confidence}
Antonin Berthon, Bo~Han, Gang Niu, Tongliang Liu, and Masashi Sugiyama.
\newblock Confidence scores make instance-dependent label-noise learning possible.
\newblock In Marina Meila and Tong Zhang (eds.), \emph{Proceedings of the 38th International Conference on Machine Learning}, volume 139 of \emph{Proceedings of Machine Learning Research}, pp.\  825--836. PMLR, 18--24 Jul 2021.

\bibitem[Chao et~al.(2022)Chao, Sun, Cheng, Lo, Chang, Liu, Chang, Chen, and Lee]{chao2022denoising}
Chen-Hao Chao, Wei-Fang Sun, Bo-Wun Cheng, Yi-Chen Lo, Chia-Che Chang, Yu-Lun Liu, Yu-Lin Chang, Chia-Ping Chen, and Chun-Yi Lee.
\newblock Denoising likelihood score matching for conditional score-based data generation.
\newblock In \emph{International Conference on Learning Representations}, 2022.

\bibitem[Cheng et~al.(2022)Cheng, Ning, Wang, Gao, Yang, Du, Han, and Liu]{cheng2022class}
De~Cheng, Yixiong Ning, Nannan Wang, Xinbo Gao, Heng Yang, Yuxuan Du, Bo~Han, and Tongliang Liu.
\newblock Class-dependent label-noise learning with cycle-consistency regularization.
\newblock \emph{Advances in Neural Information Processing Systems}, 35:\penalty0 11104--11116, 2022.

\bibitem[Cheng et~al.(2021)Cheng, Zhu, Li, Gong, Sun, and Liu]{cores}
Hao Cheng, Zhaowei Zhu, Xingyu Li, Yifei Gong, Xing Sun, and Yang Liu.
\newblock Learning with instance-dependent label noise: {A} sample sieve approach.
\newblock In \emph{9th International Conference on Learning Representations, {ICLR} 2021, Virtual Event, Austria, May 3-7, 2021}. OpenReview.net, 2021.

\bibitem[Dhariwal \& Nichol(2021)Dhariwal and Nichol]{dhariwal2021diffusion}
Prafulla Dhariwal and Alexander Nichol.
\newblock Diffusion models beat gans on image synthesis.
\newblock \emph{Advances in Neural Information Processing Systems}, 34:\penalty0 8780--8794, 2021.

\bibitem[Ghosh et~al.(2017)Ghosh, Kumar, and Sastry]{mae}
Aritra Ghosh, Himanshu Kumar, and PS~Sastry.
\newblock Robust loss functions under label noise for deep neural networks.
\newblock In \emph{Proceedings of the AAAI Conference on Artificial Intelligence}, volume~31, 2017.

\bibitem[Han et~al.(2018)Han, Yao, Yu, Niu, Xu, Hu, Tsang, and Sugiyama]{han2018co}
Bo~Han, Quanming Yao, Xingrui Yu, Gang Niu, Miao Xu, Weihua Hu, Ivor Tsang, and Masashi Sugiyama.
\newblock Co-teaching: Robust training of deep neural networks with extremely noisy labels.
\newblock \emph{Advances in neural information processing systems}, 31, 2018.

\bibitem[He et~al.(2016)He, Zhang, Ren, and Sun]{he2016deep}
Kaiming He, Xiangyu Zhang, Shaoqing Ren, and Jian Sun.
\newblock Deep residual learning for image recognition.
\newblock In \emph{Proceedings of the IEEE conference on computer vision and pattern recognition}, pp.\  770--778, 2016.

\bibitem[Heusel et~al.(2017)Heusel, Ramsauer, Unterthiner, Nessler, and Hochreiter]{heusel2017gans}
Martin Heusel, Hubert Ramsauer, Thomas Unterthiner, Bernhard Nessler, and Sepp Hochreiter.
\newblock Gans trained by a two time-scale update rule converge to a local nash equilibrium.
\newblock \emph{Advances in neural information processing systems}, 30, 2017.

\bibitem[Ho \& Salimans(2021)Ho and Salimans]{ho2021classifierfree}
Jonathan Ho and Tim Salimans.
\newblock Classifier-free diffusion guidance.
\newblock In \emph{NeurIPS 2021 Workshop on Deep Generative Models and Downstream Applications}, 2021.

\bibitem[Ho et~al.(2020)Ho, Jain, and Abbeel]{ho2020denoising}
Jonathan Ho, Ajay Jain, and Pieter Abbeel.
\newblock Denoising diffusion probabilistic models.
\newblock \emph{Advances in Neural Information Processing Systems}, 33:\penalty0 6840--6851, 2020.

\bibitem[Ho et~al.(2022)Ho, Salimans, Gritsenko, Chan, Norouzi, and Fleet]{ho2022video}
Jonathan Ho, Tim Salimans, Alexey~A. Gritsenko, William Chan, Mohammad Norouzi, and David~J. Fleet.
\newblock Video diffusion models.
\newblock In Alice~H. Oh, Alekh Agarwal, Danielle Belgrave, and Kyunghyun Cho (eds.), \emph{Advances in Neural Information Processing Systems}, 2022.

\bibitem[Jiang et~al.(2018)Jiang, Zhou, Leung, Li, and Fei-Fei]{mentornet}
Lu~Jiang, Zhengyuan Zhou, Thomas Leung, Li-Jia Li, and Li~Fei-Fei.
\newblock Mentornet: Learning data-driven curriculum for very deep neural networks on corrupted labels.
\newblock In \emph{International Conference on Machine Learning}, pp.\  2304--2313. PMLR, 2018.

\bibitem[Kaneko et~al.(2019)Kaneko, Ushiku, and Harada]{kaneko2019label}
Takuhiro Kaneko, Yoshitaka Ushiku, and Tatsuya Harada.
\newblock Label-noise robust generative adversarial networks.
\newblock In \emph{Proceedings of the IEEE/CVF Conference on Computer Vision and Pattern Recognition}, pp.\  2467--2476, 2019.

\bibitem[Karras et~al.(2022)Karras, Aittala, Aila, and Laine]{karras2022elucidating}
Tero Karras, Miika Aittala, Timo Aila, and Samuli Laine.
\newblock Elucidating the design space of diffusion-based generative models.
\newblock In Alice~H. Oh, Alekh Agarwal, Danielle Belgrave, and Kyunghyun Cho (eds.), \emph{Advances in Neural Information Processing Systems}, 2022.

\bibitem[Kim et~al.(2022{\natexlab{a}})Kim, Na, Kwon, Lee, Kang, and Moon]{kim2022maximum}
Dongjun Kim, Byeonghu Na, Se~Jung Kwon, Dongsoo Lee, Wanmo Kang, and Il-chul Moon.
\newblock Maximum likelihood training of implicit nonlinear diffusion model.
\newblock In S.~Koyejo, S.~Mohamed, A.~Agarwal, D.~Belgrave, K.~Cho, and A.~Oh (eds.), \emph{Advances in Neural Information Processing Systems}, volume~35, pp.\  32270--32284. Curran Associates, Inc., 2022{\natexlab{a}}.

\bibitem[Kim et~al.(2023)Kim, Kim, Kwon, Kang, and Moon]{kim2022refining}
Dongjun Kim, Yeongmin Kim, Se~Jung Kwon, Wanmo Kang, and Il-Chul Moon.
\newblock Refining generative process with discriminator guidance in score-based diffusion models.
\newblock In Andreas Krause, Emma Brunskill, Kyunghyun Cho, Barbara Engelhardt, Sivan Sabato, and Jonathan Scarlett (eds.), \emph{Proceedings of the 40th International Conference on Machine Learning}, volume 202 of \emph{Proceedings of Machine Learning Research}, pp.\  16567--16598. PMLR, 23--29 Jul 2023.

\bibitem[Kim et~al.(2022{\natexlab{b}})Kim, Kim, Lee, and chul Moon]{kim2022unsupervised}
Yeongmin Kim, Dongjun Kim, HyeonMin Lee, and Il~chul Moon.
\newblock Unsupervised controllable generation with score-based diffusion models: Disentangled latent code guidance.
\newblock In \emph{NeurIPS 2022 Workshop on Score-Based Methods}, 2022{\natexlab{b}}.

\bibitem[Kim et~al.(2024)Kim, Na, Jang, Park, Kim, Kang, and chul Moon]{kim2024training}
Yeongmin Kim, Byeonghu Na, JoonHo Jang, Minsang Park, Dongjun Kim, Wanmo Kang, and Il~chul Moon.
\newblock Training unbiased diffusion models from biased dataset.
\newblock In \emph{The Twelfth International Conference on Learning Representations}, 2024.

\bibitem[Kingma et~al.(2021)Kingma, Salimans, Poole, and Ho]{kingma2021variational}
Diederik Kingma, Tim Salimans, Ben Poole, and Jonathan Ho.
\newblock Variational diffusion models.
\newblock \emph{Advances in neural information processing systems}, 34:\penalty0 21696--21707, 2021.

\bibitem[Kong et~al.(2021)Kong, Ping, Huang, Zhao, and Catanzaro]{kong2021diffwave}
Zhifeng Kong, Wei Ping, Jiaji Huang, Kexin Zhao, and Bryan Catanzaro.
\newblock Diffwave: A versatile diffusion model for audio synthesis.
\newblock In \emph{International Conference on Learning Representations}, 2021.

\bibitem[Krizhevsky(2009)]{krizhevsky2009learning}
A~Krizhevsky.
\newblock Learning multiple layers of features from tiny images.
\newblock \emph{Master's thesis, University of Toronto}, 2009.

\bibitem[Kynk{\"a}{\"a}nniemi et~al.(2019)Kynk{\"a}{\"a}nniemi, Karras, Laine, Lehtinen, and Aila]{kynkaanniemi2019improved}
Tuomas Kynk{\"a}{\"a}nniemi, Tero Karras, Samuli Laine, Jaakko Lehtinen, and Timo Aila.
\newblock Improved precision and recall metric for assessing generative models.
\newblock \emph{Advances in Neural Information Processing Systems}, 32, 2019.

\bibitem[LeCun et~al.(2010)LeCun, Cortes, and Burges]{lecun2010mnist}
Yann LeCun, Corinna Cortes, and CJ~Burges.
\newblock Mnist handwritten digit database.
\newblock \emph{ATT Labs [Online]. Available: http://yann.lecun.com/exdb/mnist}, 2, 2010.

\bibitem[Li et~al.(2021)Li, Liu, Han, Niu, and Sugiyama]{li2021provably}
Xuefeng Li, Tongliang Liu, Bo~Han, Gang Niu, and Masashi Sugiyama.
\newblock Provably end-to-end label-noise learning without anchor points.
\newblock In \emph{International Conference on Machine Learning}, pp.\  6403--6413. PMLR, 2021.

\bibitem[Li et~al.(2023)Li, Han, Shan, and Chen]{li2023disc}
Yifan Li, Hu~Han, Shiguang Shan, and Xilin Chen.
\newblock Disc: Learning from noisy labels via dynamic instance-specific selection and correction.
\newblock In \emph{Proceedings of the IEEE/CVF Conference on Computer Vision and Pattern Recognition}, pp.\  24070--24079, 2023.

\bibitem[Liu \& Tao(2015)Liu and Tao]{liu2015classification}
Tongliang Liu and Dacheng Tao.
\newblock Classification with noisy labels by importance reweighting.
\newblock \emph{IEEE Transactions on pattern analysis and machine intelligence}, 38\penalty0 (3):\penalty0 447--461, 2015.

\bibitem[Meng et~al.(2022{\natexlab{a}})Meng, Gao, Kingma, Ermon, Ho, and Salimans]{meng2022on}
Chenlin Meng, Ruiqi Gao, Diederik~P Kingma, Stefano Ermon, Jonathan Ho, and Tim Salimans.
\newblock On distillation of guided diffusion models.
\newblock In \emph{NeurIPS 2022 Workshop on Score-Based Methods}, 2022{\natexlab{a}}.

\bibitem[Meng et~al.(2022{\natexlab{b}})Meng, He, Song, Song, Wu, Zhu, and Ermon]{meng2022sdedit}
Chenlin Meng, Yutong He, Yang Song, Jiaming Song, Jiajun Wu, Jun-Yan Zhu, and Stefano Ermon.
\newblock {SDE}dit: Guided image synthesis and editing with stochastic differential equations.
\newblock In \emph{International Conference on Learning Representations}, 2022{\natexlab{b}}.

\bibitem[Miyato \& Koyama(2018)Miyato and Koyama]{miyato2018cgans}
Takeru Miyato and Masanori Koyama.
\newblock cgans with projection discriminator.
\newblock In \emph{International Conference on Learning Representations}, 2018.

\bibitem[Naeem et~al.(2020)Naeem, Oh, Uh, Choi, and Yoo]{naeem2020reliable}
Muhammad~Ferjad Naeem, Seong~Joon Oh, Youngjung Uh, Yunjey Choi, and Jaejun Yoo.
\newblock Reliable fidelity and diversity metrics for generative models.
\newblock In \emph{International Conference on Machine Learning}, pp.\  7176--7185. PMLR, 2020.

\bibitem[Natarajan et~al.(2013)Natarajan, Dhillon, Ravikumar, and Tewari]{natarajan2013learning}
Nagarajan Natarajan, Inderjit~S Dhillon, Pradeep~K Ravikumar, and Ambuj Tewari.
\newblock Learning with noisy labels.
\newblock \emph{Advances in neural information processing systems}, 26, 2013.

\bibitem[Nichol et~al.(2022)Nichol, Dhariwal, Ramesh, Shyam, Mishkin, Mcgrew, Sutskever, and Chen]{nichol2022glide}
Alexander~Quinn Nichol, Prafulla Dhariwal, Aditya Ramesh, Pranav Shyam, Pamela Mishkin, Bob Mcgrew, Ilya Sutskever, and Mark Chen.
\newblock Glide: Towards photorealistic image generation and editing with text-guided diffusion models.
\newblock In \emph{International Conference on Machine Learning}, pp.\  16784--16804. PMLR, 2022.

\bibitem[Nordstrom et~al.(2022)Nordstrom, Hult, L\"{o}fman, and S\"{o}derberg]{nordstrom2022on}
Marcus Nordstrom, Henrik Hult, Fredrik L\"{o}fman, and Jonas S\"{o}derberg.
\newblock On image segmentation with noisy labels: Characterization and volume properties of the optimal solutions to accuracy and dice.
\newblock In S.~Koyejo, S.~Mohamed, A.~Agarwal, D.~Belgrave, K.~Cho, and A.~Oh (eds.), \emph{Advances in Neural Information Processing Systems}, volume~35, pp.\  34321--34333. Curran Associates, Inc., 2022.

\bibitem[Patrini et~al.(2017)Patrini, Rozza, Krishna~Menon, Nock, and Qu]{patrini2017making}
Giorgio Patrini, Alessandro Rozza, Aditya Krishna~Menon, Richard Nock, and Lizhen Qu.
\newblock Making deep neural networks robust to label noise: A loss correction approach.
\newblock In \emph{Proceedings of the IEEE conference on computer vision and pattern recognition}, pp.\  1944--1952, 2017.

\bibitem[Preechakul et~al.(2022)Preechakul, Chatthee, Wizadwongsa, and Suwajanakorn]{preechakul2022diffusion}
Konpat Preechakul, Nattanat Chatthee, Suttisak Wizadwongsa, and Supasorn Suwajanakorn.
\newblock Diffusion autoencoders: Toward a meaningful and decodable representation.
\newblock In \emph{Proceedings of the IEEE/CVF Conference on Computer Vision and Pattern Recognition}, pp.\  10619--10629, 2022.

\bibitem[Qin et~al.(2023)Qin, Zheng, Yao, Zhou, and Zhang]{qin2023class}
Yiming Qin, Huangjie Zheng, Jiangchao Yao, Mingyuan Zhou, and Ya~Zhang.
\newblock Class-balancing diffusion models.
\newblock In \emph{Proceedings of the IEEE/CVF Conference on Computer Vision and Pattern Recognition}, pp.\  18434--18443, 2023.

\bibitem[Ramaswamy et~al.(2016)Ramaswamy, Scott, and Tewari]{ramaswamy2016mixture}
Harish Ramaswamy, Clayton Scott, and Ambuj Tewari.
\newblock Mixture proportion estimation via kernel embeddings of distributions.
\newblock In \emph{International conference on machine learning}, pp.\  2052--2060. PMLR, 2016.

\bibitem[Rangwani et~al.(2021)Rangwani, Mopuri, and Babu]{rangwani2021class}
Harsh Rangwani, Konda~Reddy Mopuri, and R~Venkatesh Babu.
\newblock Class balancing gan with a classifier in the loop.
\newblock In \emph{Uncertainty in Artificial Intelligence}, pp.\  1618--1627. PMLR, 2021.

\bibitem[Rangwani et~al.(2022)Rangwani, Jaswani, Karmali, Jampani, and Babu]{rangwani2022improving}
Harsh Rangwani, Naman Jaswani, Tejan Karmali, Varun Jampani, and R~Venkatesh Babu.
\newblock Improving gans for long-tailed data through group spectral regularization.
\newblock In \emph{European Conference on Computer Vision}, pp.\  426--442. Springer, 2022.

\bibitem[Ravuri \& Vinyals(2019)Ravuri and Vinyals]{ravuri2019classification}
Suman Ravuri and Oriol Vinyals.
\newblock Classification accuracy score for conditional generative models.
\newblock \emph{Advances in neural information processing systems}, 32, 2019.

\bibitem[Rombach et~al.(2022)Rombach, Blattmann, Lorenz, Esser, and Ommer]{rombach2022high}
Robin Rombach, Andreas Blattmann, Dominik Lorenz, Patrick Esser, and Bj{\"o}rn Ommer.
\newblock High-resolution image synthesis with latent diffusion models.
\newblock In \emph{Proceedings of the IEEE/CVF Conference on Computer Vision and Pattern Recognition}, pp.\  10684--10695, 2022.

\bibitem[Saharia et~al.(2022)Saharia, Chan, Saxena, Li, Whang, Denton, Ghasemipour, Gontijo~Lopes, Karagol~Ayan, Salimans, Ho, Fleet, and Norouzi]{saharia2022photorealistic}
Chitwan Saharia, William Chan, Saurabh Saxena, Lala Li, Jay Whang, Emily~L Denton, Kamyar Ghasemipour, Raphael Gontijo~Lopes, Burcu Karagol~Ayan, Tim Salimans, Jonathan Ho, David~J Fleet, and Mohammad Norouzi.
\newblock Photorealistic text-to-image diffusion models with deep language understanding.
\newblock In S.~Koyejo, S.~Mohamed, A.~Agarwal, D.~Belgrave, K.~Cho, and A.~Oh (eds.), \emph{Advances in Neural Information Processing Systems}, volume~35, pp.\  36479--36494. Curran Associates, Inc., 2022.

\bibitem[Sajjadi et~al.(2018)Sajjadi, Bachem, Lucic, Bousquet, and Gelly]{sajjadi2018assessing}
Mehdi~SM Sajjadi, Olivier Bachem, Mario Lucic, Olivier Bousquet, and Sylvain Gelly.
\newblock Assessing generative models via precision and recall.
\newblock \emph{Advances in neural information processing systems}, 31, 2018.

\bibitem[Salimans et~al.(2016)Salimans, Goodfellow, Zaremba, Cheung, Radford, and Chen]{salimans2016improved}
Tim Salimans, Ian Goodfellow, Wojciech Zaremba, Vicki Cheung, Alec Radford, and Xi~Chen.
\newblock Improved techniques for training gans.
\newblock \emph{Advances in neural information processing systems}, 29, 2016.

\bibitem[Scott(2015)]{scott2015rate}
Clayton Scott.
\newblock A rate of convergence for mixture proportion estimation, with application to learning from noisy labels.
\newblock In \emph{Artificial Intelligence and Statistics}, pp.\  838--846. PMLR, 2015.

\bibitem[Scott et~al.(2013)Scott, Blanchard, and Handy]{scott2013classification}
Clayton Scott, Gilles Blanchard, and Gregory Handy.
\newblock Classification with asymmetric label noise: Consistency and maximal denoising.
\newblock In \emph{Conference on learning theory}, pp.\  489--511. PMLR, 2013.

\bibitem[Simonyan \& Zisserman(2015)Simonyan and Zisserman]{simonyan15very}
Karen Simonyan and Andrew Zisserman.
\newblock Very deep convolutional networks for large-scale image recognition.
\newblock In \emph{International Conference on Learning Representations}, 2015.

\bibitem[Sohl-Dickstein et~al.(2015)Sohl-Dickstein, Weiss, Maheswaranathan, and Ganguli]{sohl2015deep}
Jascha Sohl-Dickstein, Eric Weiss, Niru Maheswaranathan, and Surya Ganguli.
\newblock Deep unsupervised learning using nonequilibrium thermodynamics.
\newblock In \emph{International conference on machine learning}, pp.\  2256--2265. PMLR, 2015.

\bibitem[Song et~al.(2022)Song, Kim, Park, Shin, and Lee]{song2022learning}
Hwanjun Song, Minseok Kim, Dongmin Park, Yooju Shin, and Jae-Gil Lee.
\newblock Learning from noisy labels with deep neural networks: A survey.
\newblock \emph{IEEE Transactions on Neural Networks and Learning Systems}, 2022.

\bibitem[Song \& Ermon(2019)Song and Ermon]{song2019generative}
Yang Song and Stefano Ermon.
\newblock Generative modeling by estimating gradients of the data distribution.
\newblock In \emph{Proceedings of the 33rd International Conference on Neural Information Processing Systems}, pp.\  11918--11930, 2019.

\bibitem[Song et~al.(2020)Song, Sohl-Dickstein, Kingma, Kumar, Ermon, and Poole]{song2020score}
Yang Song, Jascha Sohl-Dickstein, Diederik~P Kingma, Abhishek Kumar, Stefano Ermon, and Ben Poole.
\newblock Score-based generative modeling through stochastic differential equations.
\newblock In \emph{International Conference on Learning Representations}, 2020.

\bibitem[Song et~al.(2021)Song, Durkan, Murray, and Ermon]{song2021maximum}
Yang Song, Conor Durkan, Iain Murray, and Stefano Ermon.
\newblock Maximum likelihood training of score-based diffusion models.
\newblock \emph{Advances in Neural Information Processing Systems}, 34:\penalty0 1415--1428, 2021.

\bibitem[Szegedy et~al.(2016)Szegedy, Vanhoucke, Ioffe, Shlens, and Wojna]{szegedy2016rethinking}
Christian Szegedy, Vincent Vanhoucke, Sergey Ioffe, Jon Shlens, and Zbigniew Wojna.
\newblock Rethinking the inception architecture for computer vision.
\newblock In \emph{Proceedings of the IEEE conference on computer vision and pattern recognition}, pp.\  2818--2826, 2016.

\bibitem[Thekumparampil et~al.(2018)Thekumparampil, Khetan, Lin, and Oh]{thekumparampil2018robustness}
Kiran~K Thekumparampil, Ashish Khetan, Zinan Lin, and Sewoong Oh.
\newblock Robustness of conditional gans to noisy labels.
\newblock \emph{Advances in neural information processing systems}, 31, 2018.

\bibitem[Vincent(2011)]{vincent2011connection}
Pascal Vincent.
\newblock A connection between score matching and denoising autoencoders.
\newblock \emph{Neural computation}, 23\penalty0 (7):\penalty0 1661--1674, 2011.

\bibitem[Voleti et~al.(2022)Voleti, Jolicoeur-Martineau, and Pal]{voleti2022mcvd}
Vikram Voleti, Alexia Jolicoeur-Martineau, and Christopher Pal.
\newblock {MCVD} - masked conditional video diffusion for prediction, generation, and interpolation.
\newblock In Alice~H. Oh, Alekh Agarwal, Danielle Belgrave, and Kyunghyun Cho (eds.), \emph{Advances in Neural Information Processing Systems}, 2022.

\bibitem[Wang et~al.(2021)Wang, Hua, Kodirov, Clifton, and Robertson]{proselflc}
Xinshao Wang, Yang Hua, Elyor Kodirov, David~A Clifton, and Neil~M Robertson.
\newblock Proselflc: Progressive self label correction for training robust deep neural networks.
\newblock In \emph{Proceedings of the IEEE/CVF Conference on Computer Vision and Pattern Recognition}, pp.\  752--761, 2021.

\bibitem[Wang et~al.(2019)Wang, Ma, Chen, Luo, Yi, and Bailey]{sce}
Yisen Wang, Xingjun Ma, Zaiyi Chen, Yuan Luo, Jinfeng Yi, and James Bailey.
\newblock Symmetric cross entropy for robust learning with noisy labels.
\newblock In \emph{Proceedings of the IEEE/CVF International Conference on Computer Vision}, pp.\  322--330, 2019.

\bibitem[Xia et~al.(2019)Xia, Liu, Wang, Han, Gong, Niu, and Sugiyama]{xia2019anchor}
Xiaobo Xia, Tongliang Liu, Nannan Wang, Bo~Han, Chen Gong, Gang Niu, and Masashi Sugiyama.
\newblock Are anchor points really indispensable in label-noise learning?
\newblock \emph{Advances in Neural Information Processing Systems}, 32:\penalty0 6838--6849, 2019.

\bibitem[Xiao et~al.(2015)Xiao, Xia, Yang, Huang, and Wang]{xiao2015learning}
Tong Xiao, Tian Xia, Yi~Yang, Chang Huang, and Xiaogang Wang.
\newblock Learning from massive noisy labeled data for image classification.
\newblock In \emph{Proceedings of the IEEE conference on computer vision and pattern recognition}, pp.\  2691--2699, 2015.

\bibitem[Yao et~al.(2020)Yao, Liu, Han, Gong, Deng, Niu, and Sugiyama]{yao2020dual}
Yu~Yao, Tongliang Liu, Bo~Han, Mingming Gong, Jiankang Deng, Gang Niu, and Masashi Sugiyama.
\newblock Dual t: Reducing estimation error for transition matrix in label-noise learning.
\newblock \emph{Advances in neural information processing systems}, 33:\penalty0 7260--7271, 2020.

\bibitem[Yu et~al.(2018)Yu, Liu, Gong, Batmanghelich, and Tao]{yu2018efficient}
Xiyu Yu, Tongliang Liu, Mingming Gong, Kayhan Batmanghelich, and Dacheng Tao.
\newblock An efficient and provable approach for mixture proportion estimation using linear independence assumption.
\newblock In \emph{Proceedings of the IEEE Conference on Computer Vision and Pattern Recognition}, pp.\  4480--4489, 2018.

\bibitem[Zhang et~al.(2021{\natexlab{a}})Zhang, Bengio, Hardt, Recht, and Vinyals]{zhang2021understanding}
Chiyuan Zhang, Samy Bengio, Moritz Hardt, Benjamin Recht, and Oriol Vinyals.
\newblock Understanding deep learning (still) requires rethinking generalization.
\newblock \emph{Communications of the ACM}, 64\penalty0 (3):\penalty0 107--115, 2021{\natexlab{a}}.

\bibitem[Zhang et~al.(2021{\natexlab{b}})Zhang, Niu, and Sugiyama]{zhang2021learning}
Yivan Zhang, Gang Niu, and Masashi Sugiyama.
\newblock Learning noise transition matrix from only noisy labels via total variation regularization.
\newblock In \emph{International Conference on Machine Learning}, pp.\  12501--12512. PMLR, 2021{\natexlab{b}}.

\bibitem[Zhang \& Sabuncu(2018)Zhang and Sabuncu]{gce}
Zhilu Zhang and Mert~R Sabuncu.
\newblock Generalized cross entropy loss for training deep neural networks with noisy labels.
\newblock In \emph{32nd Conference on Neural Information Processing Systems (NeurIPS)}, 2018.

\bibitem[Zheng et~al.(2021)Zheng, Awadallah, and Dumais]{meta}
Guoqing Zheng, Ahmed~Hassan Awadallah, and Susan Dumais.
\newblock Meta label correction for noisy label learning.
\newblock In \emph{Proceedings of the AAAI Conference on Artificial Intelligence}, volume~35, pp.\  11053--11061, 2021.

\bibitem[Zheng et~al.(2020)Zheng, Wu, Goswami, Goswami, Metaxas, and Chen]{lrt}
Songzhu Zheng, Pengxiang Wu, Aman Goswami, Mayank Goswami, Dimitris Metaxas, and Chao Chen.
\newblock Error-bounded correction of noisy labels.
\newblock In \emph{International Conference on Machine Learning}, pp.\  11447--11457. PMLR, 2020.

\bibitem[Zhu et~al.(2023)Zhu, Fjeldsted, Holland, Landon, Lintereur, and Scott]{zhu2023mixture}
Yilun Zhu, Aaron Fjeldsted, Darren Holland, George Landon, Azaree Lintereur, and Clayton Scott.
\newblock Mixture proportion estimation beyond irreducibility.
\newblock In Andreas Krause, Emma Brunskill, Kyunghyun Cho, Barbara Engelhardt, Sivan Sabato, and Jonathan Scarlett (eds.), \emph{Proceedings of the 40th International Conference on Machine Learning}, volume 202 of \emph{Proceedings of Machine Learning Research}, pp.\  42962--42982. PMLR, 23--29 Jul 2023.

\bibitem[Zhu et~al.(2021)Zhu, Song, and Liu]{zhu2021clusterability}
Zhaowei Zhu, Yiwen Song, and Yang Liu.
\newblock Clusterability as an alternative to anchor points when learning with noisy labels.
\newblock In Marina Meila and Tong Zhang (eds.), \emph{Proceedings of the 38th International Conference on Machine Learning}, volume 139 of \emph{Proceedings of Machine Learning Research}, pp.\  12912--12923. PMLR, 18--24 Jul 2021.

\end{thebibliography}
\bibliographystyle{iclr2024_conference}

\appendix

\newpage
\tableofcontents
\newpage

\section{Proofs and derivations}    \label{sec:app_proof}

\subsection{Discussion of Remark}       \label{subsec:app_proof_prop1}

We provide a detailed derivation for the statement in the Remark. 

\propositiondsm*

Although it is a direct consequence of previous papers~\citep{vincent2011connection,song2019generative}, we start by presenting the theoretical derivation from them. We denote the explicit score matching objective as $\mathcal{L}_{\text{ESM}}(\boldsymbol{\theta}; p_{\text{data}}(\rmX,Y))$, i.e., \begin{align}
    \mathcal{L}_{\text{ESM}} (\boldsymbol{\theta}; p&_{\text{data}}(\rmX,Y)) \nonumber \\
    & \coloneqq \E_t \Big \{ \lambda(t) \E_{y \sim p_{\text{data}}(Y)} \E_{\rvx_t \sim p_{t}(\rmX_t|y)} \Big [ \big{|}\big{|} s_{\boldsymbol{\theta}}(\rvx_t,y,t) - \nabla_{\rvx_t} \log p_{t} (\rvx_t|Y=y) \big{|}\big{|}_2^2  \Big ] \Big\}.
\end{align}
Then, by the previous work~\citep{vincent2011connection,song2019generative}, the denoising score matching objective is equivalent to the explicit score matching objective, i.e., for all $p_{\text{data}}(\rmX,Y)$,
\begin{align}
    \mathcal{L}_{\text{ESM}}  (\boldsymbol{\theta}; p_{\text{data}}(\rmX,Y)) = \mathcal{L}_{\text{DSM}} (\boldsymbol{\theta}; p_{\text{data}}(\rmX,Y)) + C,
\end{align}
where $C$ is a constant that does not depend on $\boldsymbol{\theta}$. Therefore, the optimal points of the two objective functions are the same.

By assuming $Y=\ttY$, we can apply $\mathcal{L}_{\text{ESM}}$ and $\mathcal{L}_{\text{DSM}}$ to a noisy labeled dataset. Let $\boldsymbol{\theta}^*_{\text{ESM}} \coloneqq \argmin_{\boldsymbol{\theta}} \mathcal{L}_{\text{ESM}} (\boldsymbol{\theta} ; \tilde{p}_{\text{data}}(\rmX,\ttY))$. Then, $s_{{\boldsymbol{\theta}}^*_{\text{ESM}}} = \nabla_{\rvx_t} \log p_{t} (\rvx_t|\ttY=y)$. Thus, $s_{{\boldsymbol{\theta}}^*_{\text{ESM}}} = s_{{\boldsymbol{\theta}}^*_{\text{DSM}}} = \nabla_{\rvx_t} \log p_{t} (\rvx_t|\ttY=y)$.

\subsection{Proof of Theorem \ref{thm2}}    \label{subsec:app_proof_thm2}
\thma*
\begin{proof}
    First, for all $t$, the perturbed distribution $p_t$ satisfies:
    \begin{align}
        \label{eq:ccln_t}
        p_t(\rvx_t|\ttY=\tty) = \sum_{y=1}^c p(Y=y|\ttY=\tty) p_t(\rvx_t|Y=y) \quad \forall \rvx_t, \tty,
    \end{align}
    under the class-conditional label noise setting (see \pcref{eq:ccln} in the main paper). It means that the noise label transition does not depend on the timesteps. This is because the label is determined by the data distribution $p_\text{data}$. Then, \cref{eq:thm2} can be derived as follows.
    \begin{align}
        \nabla_{\rvx_t} \log p_t(\rvx_t|\ttY=\tty) & = \frac{\nabla_{\rvx_t} p_t(\rvx_t|\ttY=\tty)}{p_t(\rvx_t|\ttY=\tty)} \\
        & = \frac{\sum_{y=1}^c p(Y=y|\ttY=\tty) \nabla_{\rvx_t} p_t(\rvx_t|Y=y)}{p_t(\rvx_t|\ttY=\tty)} \quad (\because \text{\cref{eq:ccln_t}}) \\
        & = \sum_{y=1}^c \frac{ p(Y=y|\ttY=\tty)p_t(\rvx_t|Y=y)}{p_t(\rvx_t|\ttY=\tty)} \frac{\nabla_{\rvx_t} p_t(\rvx_t|Y=y)}{p_t(\rvx_t|Y=y)} \\
        & = \sum_{y=1}^c \frac{ p(Y=y|\ttY=\tty)p_t(\rvx_t|Y=y)}{p_t(\rvx_t|\ttY=\tty)} \nabla_{\rvx_t} \log p_t(\rvx_t|Y=y) \\
        & = \sum_{y=1}^c w(\rvx_t, \tilde{y}, y, t) \nabla_{\rvx_t} \log p_t(\rvx_t|Y=y)
    \end{align}
    \qedhere
\end{proof}

\subsection{Proof of Proposition \ref{prop2}}    \label{subsec:app_proof_prop2}
\prop*
\begin{proof}
    We derive the property that the transition-aware weight function represents the instance-wise label transition under the class-conditional label noise setting, i.e.,
    \begin{align}       \label{eq:property_twf}
        w(\rvx_t, \tty, y, t) = p_t(Y=y|\ttY=\tty,\rvx_t).
    \end{align}
    
    The derivation is based on Bayes' theorem and the class-conditional label noise assumption.
    \begin{align}       \label{eq:property_twf_derive}
        w(\rvx_t, \tty, y, t) & = p(Y=y|\ttY=\tty) \frac{p_t(\rvx_t|Y=y)}{p_t(\rvx_t|\ttY=\tty)} \nonumber \\
        & = p(Y=y|\ttY=\tty) \frac{p(\ttY=\tty)}{p(Y=y)} \frac{p_t(Y=y|\rvx_t)p_t(\rvx_t)}{p_t(\ttY=\tty|\rvx_t)p_t(\rvx_t)} \nonumber \\
        & = \frac{p(\ttY=\tty|Y=y)p(Y=y|\rvx_t)}{p_t(\ttY=\tty|\rvx_t)} \nonumber \\
        & = \frac{p(\ttY=\tty|Y=y,\rvx_t)p(Y=y|\rvx_t)}{p_t(\ttY=\tty|\rvx_t)}~(\because \text{conditional indep. of } \ttY \text{ and } \rmX_t \text{ given } Y) \nonumber \\
        & = \frac{p(\ttY=\tty,Y=y|\rvx_t)}{p_t(\ttY=\tty|\rvx_t)} \nonumber \\
        & = p_t(Y=y|\ttY=\tty,\rvx_t).
    \end{align}
    
    \qedhere
\end{proof}

\subsection{Proof of Theorem \ref{thm3}}    \label{subsec:app_proof_thm3}
\thmb*
\begin{proof}
    First, by Remark and \cref{thm2}, we have the following equation for $\theta^*_{\text{TDSM}}$:
    \begin{align}       \label{eq:pf3_1}
         \sum_{y=1}^{c}  w(\rvx_t, \tty, y, t) \mathbf{s}_{{\boldsymbol{\theta}}^*_{\text{TDSM}}} (\rvx_t, y, t) = \nabla_{\rvx_t} \log p_t (\rvx_t|\ttY=\tty) = \sum_{y=1}^{c}  w(\rvx_t, \tty, y, t)  \nabla_{\rvx_t} \log p_t (\rvx_t|Y=y),
    \end{align}
    for all $\rvx_t, \tty, t$.
    Let $\mW^{(t)}(\rvx_t) = w (\rvx_t,\cdot, \cdot, t)$ is the $c \times c$ matrix, where $\emW^{(t)}_{\tty,y}(\rvx_t)=w(\rvx_t,\tty,y,t)$ for $\tty,y=1,...,c$. Then, \cref{eq:pf3_1} can be written in matrix form:
    \begin{align}       \label{eq:pf3_2}
         \mW^{(t)}(\rvx_t) \Big [ \mathbf{s}_{{\boldsymbol{\theta}}^*_{\text{TDSM}}} (\rvx_t, y, t) \Big ]_{y=1..c} = \mW^{(t)}(\rvx_t) \Big [ \nabla_{\rvx_t} \log p_t (\rvx_t|Y=y) \Big ]_{y=1,...,c} ,
    \end{align}
    for all $\rvx_t, t$. Then, it is enough to show that $\mW^{(t)}(\rvx_t)$ is invertible. This is because, by multiplying the inverse of $\mW^{(t)}(\rvx_t)$ for both sides in \cref{eq:pf3_2}, we obtain $\mathbf{s}_{{\boldsymbol{\theta}}^*_{\text{TDSM}}} (\rvx_t, y, t) = \nabla_{\rvx_t} \log p_t (\rvx_t | Y=y)$.
    
    We can express $\mW^{(t)}(\rvx_t) = \tilde{\mL}^{(t)}(\rvx_t) \mS \mL^{(t)}(\rvx_t)$ where $\tilde{\mL}^{(t)}(\rvx_t) \coloneqq \text{diag} \big ( 1 / p_t(\rvx_t|\ttY=\tty)\big ) $ and  $\mL^{(t)}(\rvx_t) \coloneqq \text{diag} \big (p_t(\rvx_t|Y=y)\big ) $. Then, $\tilde{\mL}(\rvx_t)$ and $\mL(\rvx_t)$ is invertible since the support of $p_t$ is the whole space. Also, $\mS$ is invertible due to the assumption of the statement. Thus, $\mW^{(t)}(\rvx_t)$ is invertible.    
\end{proof}

\subsection{Proof of Proposition \ref{thm4}}    \label{subsec:app_proof_thm4}
\thmc*
\begin{proof}
    Similar to the proof of \cref{thm3} in \cref{subsec:app_proof_thm3}, we have the following equation for $\theta^*_{\text{S-DSM}}$:
    \begin{align}       \label{eq:pf4_1}
         \sum_{y=1}^{c} p(Y=y|\ttY=\tty) \mathbf{s}_{{\boldsymbol{\theta}}^*_{\text{S-DSM}}} (\rvx_t, y, t) & = \nabla_{\rvx_t} \log p_t (\rvx_t|\ttY=\tty) \\
         & = \sum_{y=1}^{c}  w(\rvx_t, \tty, y, t)  \nabla_{\rvx_t} \log p_t (\rvx_t|Y=y),
    \end{align}\
    for all $\rvx_t, \tty, t$.
    With the same notation in \cref{subsec:app_proof_thm3}, \cref{eq:pf4_1} can be written in matrix form:
    \begin{align}       \label{eq:pf4_2}
         \mS \Big [ \mathbf{s}_{{\boldsymbol{\theta}}^*_{\text{S-DSM}}} (\rvx_t, y, t) \Big ]_{y=1..c} & = \mW^{(t)}(\rvx_t) \Big [ \nabla_{\rvx_t} \log p_t (\rvx_t|Y=y) \Big ]_{y=1,...,c}, \\
         \Rightarrow \Big [ \mathbf{s}_{{\boldsymbol{\theta}}^*_{\text{S-DSM}}} (\rvx_t, y, t) \Big ]_{y=1..c} & = \mS^{-1}\mW^{(t)}(\rvx_t) \Big [ \nabla_{\rvx_t} \log p_t (\rvx_t|Y=y) \Big ]_{y=1,...,c},
    \end{align}
    for all $\rvx_t, t$. Since $w(\rvx_t, \tty, y, t) \neq p(Y=y|\ttY=\tty)$ for label noise setting, $\mS^{-1}\mW^{(t)}(\rvx_t) \neq \mI$. Thus, $\mathbf{s}_{{\boldsymbol{\theta}}^*_{\text{S-DSM}}} (\rvx_t, y, t)$ differs from the clean-label conditional score, $\nabla_{\rvx_t} \log p_t (\rvx_t|Y=y)$.
\end{proof}

\subsection{Derivation of the TDSM with other formats of objective function}     \label{subsec:app_tdsm_other}

Assuming a Gaussian perturbation kernel, noise or data samples from the perturbation kernel can be expressed as a linear transformation on the score vector. Since our proposed TDSM is also formed as convex combination of score networks, it can be applied by replacing a single network output with a weighted sum of network outputs.

We provide an example for the derivation of the TDSM with EDM framework we used primarily in our experiments. In EDM~\citep{karras2022elucidating}, the perturbed data sample $\rvx_t$ from $p_t$ is obtained by adding i.i.d. Gaussian noise $\rvn$ of standard deviation $t$ to the data instance $\rvx$. They introduce a denoiser network $\rmD$ that is trained by minimizing the data reconstruction loss:
\begin{align}       \label{eq:edm_dsm}
    \mathcal{L}_{\text{DSM-RC}} (\boldsymbol{\theta}; p_{\text{data}}(\rmX,Y)) \coloneqq \E_t \Big \{ \lambda(t) \E_{\rvx,y \sim p_\text{data}(\rmX,Y)} \E_{\rvn \sim \mathcal{N}(\mathbf{0}|t^2 \rmI)} \Big [ \big{|}\big{|} \rmD(\rvx+\rvn,y,t) - \rvx \big{|}\big{|}_2^2  \Big ] \Big\},
\end{align}
The optimal denoiser network $\rmD^{*}_{\text{DSM}}$ trained by \cref{eq:edm_dsm} satisfy:
\begin{align}       \label{eq:edm_dsm_optimal}
    \nabla_{\rvx_t} \log p_t(\rvx_t | Y=y) = ( \rmD^{*}_{\text{DSM}}(\rvx_t,y,t)-\rvx_t ) / t^2.
\end{align}

Then, we can formulate the TDSM objective function for applying \cref{eq:edm_dsm} to a noisy labled dataset, called TDSM-RC objective.
\begin{align}       \label{eq:edm_tdsm}
    \mathcal{L}&_{\text{TDSM-RC}} (\boldsymbol{\theta}; \tilde{p}_{\text{data}}(\rmX,\ttY)) \nonumber \\
    & \coloneqq \E_t \Big \{ \lambda(t) \E_{\rvx,\tty \sim \tilde{p}_\text{data}(\rmX,\ttY)} \E_{\rvn \sim \mathcal{N}(\mathbf{0}|t^2 \rmI)} \Big [ \big{|}\big{|} \sum_{y=1}^{c} w(\rvx+\rvn,\tty,y,t) \rmD(\rvx+\rvn,y,t) - \rvx \big{|}\big{|}_2^2  \Big ] \Big\}.
\end{align}
By \cref{thm2} and \cref{eq:edm_dsm_optimal}, the optimal denoiser network $\rmD^{*}_{\text{TDSM}}$ trained by \cref{eq:edm_tdsm} on a noisy-label data distribution satisfy:
\begin{align}       \label{eq:edm_tdsm_optimal}
     \sum_{y=1}^c w(\rvx_t, \tilde{y}, y, t) \nabla_{\rvx_t} \log p_t(\rvx_t|Y=y) & = \nabla_{\rvx_t} \log p_t(\rvx_t | \ttY=\tty) \nonumber \\
     & = \Big ( \sum_{y=1}^{c} w(\rvx_t,\tty,y,t)\rmD^{*}_{\text{TDSM}}(\rvx_t,y,t)-\rvx_t \Big ) / t^2.
\end{align}
Because $\sum_{y=1}^{c} w(\rvx_t,\tty,y,t) = 1$ for all $\rvx_t,\tty,t$, we can further derive as follows:
\begin{align}       \label{eq:edm_tdsm_optimal2}
     \sum_{y=1}^c w(\rvx_t, \tilde{y}, y, t) \nabla_{\rvx_t} \log p_t(\rvx_t|Y=y) =  \sum_{y=1}^{c}  w(\rvx_t,\tty,y,t)\Big( \frac{\rmD^{*}_{\text{TDSM}}(\rvx_t,y,t)-\rvx_t}{t^2} \Big ) .
\end{align}
By following the same derivation as in the proof of \cref{thm3}, the followings are satisfied:
\begin{align}       \label{eq:edm_tdsm_optimal3}
     \nabla_{\rvx_t} \log p_t(\rvx_t|Y=y) = ( \rmD^{*}_{\text{TDSM}}(\rvx_t,y,t)-\rvx_t ) / t^2 .
\end{align}
Therefore, we can verify that the denoiser network which is trained by the TDSM-RC objective function on the noisy-label data distribution converges to the clean denoiser network.

\subsection{Derivation of the Eq. \ref{eq:weight}}     \label{subsec:app_eq}

We can compute the transition-aware weight function $w$ using the noisy label prediction probability ${p_t(\ttY|\rvx_t)}$ and the transition matrix $\mS$. To achieve this, we use Bayes' theorem on both the numerator and the denominator:
\begin{align}       \label{eq:likelihood_ratio}
    \frac{p_t(\rvx_t|Y=y)}{p_t(\rvx_t|\ttY=\tty)} = \frac{p_t(Y=y|\rvx_t)p_t(\rvx_t)p(\ttY=\tty)}{p_t(\ttY=\tty|\rvx_t)p_t(\rvx_t)p(Y=y)} = \frac{p(\ttY=\tty)}{p_t(\ttY=\tty|\rvx_t)}\frac{p_t(Y=y|\rvx_t)}{p(Y=y)}. 
\end{align}
Furthermore, because of the class-conditional label noise assumption, we know that:
\begin{align}       \label{eq:ccln_d}
    \frac{p_t(\ttY=i|\rvx_t)}{p(\ttY=i)} = \sum_{j=1}^c p(Y=j|\ttY=i) \frac{p_t(Y=j|\rvx_t)}{p(Y=j)} \text{ for } i=1,...,c.
\end{align}
This equation can be written in matrix form using the invertible $\mS$ as follows:
\begin{align}       \label{eq:ccln_m}
    \Big [ \tfrac{p_t(\ttY=i|\rvx_t)}{p(\ttY=i)} \Big ]_{i=1..c} = \mS \Big [ \tfrac{p_t(Y=j|\rvx_t)}{p(Y=j)} \Big ]_{j=1..c} \iff  \Big [ \tfrac{p_t(Y=j|\rvx_t)}{p(Y=j)} \Big ]_{j=1..c} = \mS^{-1} \Big [ \tfrac{p_t(\ttY=i|\rvx_t)}{p(\ttY=i)} \Big ]_{i=1..c}.
\end{align}
Then, we can compute the transition-aware weight function $w$ using the noisy label prediction probability ${p_t(\ttY|\rvx_t)}$ and the transition matrix $\mS$:
\begin{align}       \label{eq:weight_appendix}
    w(\rvx_t,\tty,y,t) =  \frac{\emS_{\tty, y}p(\ttY=\tty)}{p_t(\ttY=\tty|\rvx_t)} \sum\limits_{i=1}^c \frac{\emS^{-1}_{y,i} p_t(\ttY=i|\rvx_t)}{p(\ttY=i)}.
\end{align}

\subsection[Computing the reverse transition matrix from the transition matrix]{Computing the reverse transition matrix $\mS$ from the transition matrix $\mT$}    \label{subsec:app_analysis_matrix}
We would like to clarify that we have defined the transition matrix $\mT$ where $\emT_{i,j} = p(\ttY=j|Y=i)$ and the reverse transition matrix $\mS$ where $\emS_{i,j} = p(Y=j|\tty=i)$. 
The elements of the reverse transition matrix can be computed as follows:
\begin{align}       \label{eq:matrix1}
    \emS_{i,j} = p(Y=j|\ttY=i) = p(\ttY=i|Y=j)p(Y=j) / p(\ttY=i) = \emT_{j,i}p(Y=j) / p(\ttY=i).
\end{align}
If we have the transition matrix $\mT$ and the noisy label prior $p(\ttY)$, we can compute the clean label prior $p(Y)$ using the following relationship.
\begin{align}
    \operatorname{diag}\Bigg (\Big[ \frac{1}{p(\ttY=i)}\Big]_{i=1..c}\Bigg) \mT^{T} \Big [ p(Y=j) \Big ]_{j=1..c} = \bm{1}_c
\end{align}
This equation is based on \cref{eq:matrix1} and $\sum_{j=1}^{c} \emS_{i,j}=1$. Thus, we can compute the reverse transition matrix $\mS$ from the transition matrix $\mT$.

\section{Related works}    \label{sec:app_rel}

\subsection{Diffusion models} \label{subsec:app_rel0}

The goal of generative models is to approximate the data distribution $p_{\text{data}}(\rvx_0)$ to the model distribution $p_{\boldsymbol{\theta}}(\rvx_0)$. Specifically, likelihood-based generative models define an explicit specification of the distribution, and the objectives of these models are to maximize the likelihood. These models typically define the easy-to-sample prior distribution and generate samples from this prior distribution.

Denoising Diffusion Probabilistic Model (DDPM) defines a fixed noise process, characterized as a Markov chain, that perturbs the data instance by adding Gaussian noise~\citep{sohl2015deep,ho2020denoising}:
\begin{align}
    q(\rvx_{1:T}|\rvx_0) := \prod_{t=1}^{T} q(\rvx_t | \rvx_{t-1}) \text{ where } q(\rvx_t|\rvx_{t-1}) := \mathcal{N}(\rvx_t; \sqrt{1-\beta_{t}} \rvx_{t-1}, \beta_t \mathbf{I}),
\end{align}
where $\rvx_{1:T}$ are latent variables of the same dimensionality as $\rvx_0$ and $\{ \beta_t \}$ be the varaince schedule parameters.
Then, the goal of DDPM is to approximate this fixed noise process by a trainable Markov chain with Gaussian transitions, which is called denoising process:
\begin{align}
    p_{\boldsymbol{\theta}}(\rvx_{0:T}) := p_T(\rvx_{T}) \prod_{t=1}^{T} p_{t-1|t}^{\boldsymbol{\theta}}(\rvx_{t-1} | \rvx_{t}) \text{ where } p_{t-1|t}^{\boldsymbol{\theta}}(\rvx_{t-1}|\rvx_{t}) := \mathcal{N}(\rvx_{t-1}; \boldsymbol{\mu}_{\boldsymbol{\theta}}(\rvx_t, t), \boldsymbol{\Sigma}_{\boldsymbol{\theta}}(\rvx_t, t)),
\end{align}
where $p_T(\rvx_T)$ is the prior distribution, e.g., the standard Gaussian distribution. The training objective of DDPM is to minimize the upper bound of the negative log-likelihood:
\begin{align}
    \E[ -\log p_{\boldsymbol{\theta}}(\rvx_{0}) ] \leq \E_q \bigg [ - \log p_{0|1}^{\boldsymbol{\theta}}(\rvx_{0} | \rvx_{1}) + \sum_{t=2}^{T} D_{\text{KL}} \big ( q(\rvx_{t-1} | \rvx_t, \rvx_0 ) ~||~ p_{t-1|t}^{\boldsymbol{\theta}}(\rvx_{t-1} | \rvx_{t}) \big ) \nonumber \\ 
    +  D_{\text{KL}}  \big ( q(\rvx_{T} | \rvx_0 ) ~||~ p_{T}(\rvx_{T}) \big ) \bigg ]
\end{align}

For the continuous time spaces, this DDPM process can be described through Stochastic Differential Equations (SDEs)~\citep{song2021maximum,song2020score}. Specifically, \cref{eq:fwd} represents the forward diffusion process, which is defined by the drift function $\mathbf{f}$ and the volatility function $g$.
\begin{align}
    \mathrm{d}\rvx_t = \mathbf{f}(\rvx_t,t)\mathrm{d}t+ g(t)\mathrm{d}\rvw_t, 
\end{align}
where $\rvw_t$ denotes the standard Brownian motion and $t\in [0,T]$. The forward process gradually perturbs the data instance $\rvx=\rvx_0$ to $\rvx_T$.

The SDE community also elucidated the existence of a corresponding reverse diffusion process~\citep{anderson1982reverse}:
\begin{align}
    \mathrm{d}\rvx_t = [\mathbf{f}(\rvx_t,t) - g^2(t)\nabla_{\rvx_{t}}\log p_t(\rvx_t) ]\mathrm{d}\bar{t}+ g(t)\mathrm{d}\bar{\rvw}_t ,
\end{align}
where $p_t(\rvx_t)$ is the probability density of $\rvx_t$, defined by the forward process, and $\bar{\rvw}_t$ is the reverse-time standard Brownian motion. The reverse process gradually denoises from $\rvx_T$ to $\rvx_0$. We can sample data instances through the reverse process.

In spite that the generation task will require the reverse diffusion process, this process is intractable because a data score $\nabla_{\rvx_{t}}\log p_t(\rvx_t)$ is generally not accessible. Therefore, the diffusion model aims to train the score network to approximate $\nabla_{\rvx_{t}} \log p_t(\rvx_t)$ through the score matching objective function with L2 loss~\citep{song2020score}.
Like the training objective of DDPM, it is known that the score matching objective is known to be an upper bound on the negative log-likelihood for specific temporal weight functions.
This score matching objective is equivalently formulated as the noise prediction~\citep{ho2020denoising,dhariwal2021diffusion} or the data reconstruction~\citep{kingma2021variational,karras2022elucidating} objectives.

\subsection{Conditional diffusion models} \label{subsec:app_rel1}

Conditional diffusion models have also been developed in order to generate samples that match a desired condition $y$. For the conditional models, we need to approximate the conditional score $\nabla_{\rvx_t} \log p_{t} (\rvx_t|y)$, so we additionally take a condition input to a score network~\citep{dhariwal2021diffusion,karras2022elucidating}. Additionally, there are some approaches to guide conditional generation. One approach decomposes the conditional score using Bayes' theorem, and estimates the gradient of the log posterior probability of label using an auxiliary classifier:
\begin{align}       \label{eq:cg_app}
    \nabla_{\rvx_t} \log p_t (\rvx_t|Y=y) = \nabla_{\rvx_t} \log p_t (\rvx_t) + \nabla_{\rvx_t} \log p_t (Y=y|\rvx_t).
\end{align}
This is known as the classifier guidance method~\citep{dhariwal2021diffusion, chao2022denoising, kim2022refining}. To amplify the effect of the condition, a positive scaling factor can be multiplied to the classifier gradient term. 

On the other hand, the classifier-free guidance method~\citep{ho2021classifierfree} is also proposed, which is a way to achieve the same effect as the classifier guidance without using a classifier. This method utilizes the score network, which produces both unconditional and conditional scores by introducing an auxiliary class for the unconditional score. Then, they use a linear combination of the unconditional score $\mathbf{s}_{\boldsymbol{\theta}} (\rvx_t, t)$ and the conditional score $\mathbf{s}_{\boldsymbol{\theta}} (\rvx_t, y, t)$ with $\alpha>0$ for sampling.
\begin{align}       \label{eq:cfg_app}
    \mathbf{s}_{\text{CFG}} (\rvx_t, y, t) \coloneqq (1+\alpha) \mathbf{s}_{\boldsymbol{\theta}} (\rvx_t, y, t) - \alpha \mathbf{s}_{\boldsymbol{\theta}} (\rvx_t, t).
\end{align}

Conditional diffusion models have been used in various applications by using conditions as labels~\citep{kong2021diffwave,meng2022on}, text~\citep{nichol2022glide,rombach2022high}, or latent representations~\citep{kim2022unsupervised,preechakul2022diffusion}. 

\subsection{Learning with noisy labels}      \label{subsec:app_rel2}

Deep neural networks are known to be vulnerable to overfitting, even when trained on a dataset with random labels~\citep{zhang2021understanding}. Consequently, the presence of noisy labels in a dataset poses a significant challenge. To mitigate this problem, numerous approaches to classifier learning from noisy labels have been introduced, including sample selection~\citep{cores,han2018co,mentornet}, label correction~\citep{proselflc,meta,lrt}, robust loss~\citep{mae,sce,gce}, and transition matrix estimation~\citep{patrini2017making, xia2019anchor, zhang2021learning}.

Specifically, transition matrix estimation methods model the relationship between clean and noisy labels when estimating the transition matrix, which represents the probability that a clean label of a sample is corrupted to another label. \cite{patrini2017making} proposed two loss correction structures: forward and backward correction. They also provided the theoretical analysis of the statistical consistency. To estimate the transition matrix, they use the anchor points that belong to a specific class with probability one. \cite{xia2019anchor} argued that anchor points are not available in the datasets. To address this problem, they proposed $T$-revision, which revises the learned transition matrix and validates from a noisy validation dataset. \cite{yao2020dual} proposed the dual-$T$ estimator by factorizing the product of two easily estimated transition matrices. \cite{zhang2021learning} pointed out the overconfident predictions of neural networks and proposed the new regularization terms which is based on the total variation distance. \cite{li2021provably} proposed another regularization term which minimizes the volume of the simplex formed by the transition matrix. \cite{zhu2021clusterability} suggested the new transition matrix estimator based on the clusterability, which implies that nearby instances would have the same label. \cite{cheng2022class} estimated the transition matrix using a forward-backward cycle-consistency regularization.

Mixture proportion estimation has been used to solve weakly supervised learning problems, so it is an area that is closely related to the transition matrix estimation~\citep{scott2015rate}.\footnote{This phenomenon can also be found in \cref{eq:ccln} in the main paper, where each noisy-label conditional distribution can be expressed by a mixture of clean-label conditional distributions.} For example, \cite{scott2013classification} proposed a consistent estimator from the results of mixture proportion estimation to train the classifier under asymmetric label noise. There have been a number of studies based on mixture fraction estimation, each with assumptions such as anchor set~\citep{liu2015classification,scott2015rate}, separability~\citep{ramaswamy2016mixture}, linear independence~\citep{yu2018efficient}, and tight posterior bound~\citep{zhu2023mixture}. Our estimation of transition-aware weights differs from the traditional mixture proportion methods in that our weights depend on the diffusion timestep, which requires training a time-dependent noisy label classifier. Additionally, it is important to note that our weight function represents the weights related to the data score relationship, and these values do not directly indicate the proportion of the distribution. Therefore, conventional methods for estimating mixture proportions based on the samples may not be directly applicable.

\subsection{Learning generative models with uncurated label distribution}      \label{subsec:app_rel3}

\begin{table}[t]
    \caption{Comparison between the label-noise robust GAN (rGAN)~\citep{thekumparampil2018robustness,kaneko2019label} and diffusion models (ours).}
    \label{tab:compare_gan}
    \centering
    \begin{adjustbox}{max width=\textwidth}
        \begin{tabular}{lcc}
            \toprule
             & \cite{thekumparampil2018robustness,kaneko2019label} & Ours \\
            \midrule
            Base model & GAN & Diffusion model \\ 
            Objective & Noisy-label conditional distribution matching & Noisy-label conditional score matching  \\
            Label transition information & $p(\tilde{y}|y)$ & $p_t(y|\tilde{y},x_t)$ \\
            -- Time-dependent label transition & \xmark & \cmark \\
            -- Instance-wise label transition & \xmark & \cmark \\
            \bottomrule
        \end{tabular}
    \end{adjustbox}
\end{table}

The labeled datasets required for training conditional generative models may suffer from the uncurated label distribution, such as noisy labels and imbalanced labels. Some previous studies have pointed out these problems.

There are a few studies for generative model learning from noisy labels. \citet{thekumparampil2018robustness,kaneko2019label} proposed algorithms to overcome the difficulties of training GANs with noisy labels by matching the noisy-label conditional probability with the transition matrix. Specifically, the structure of a generator is similar to the traditional GAN, but they modify the discriminator structure by incorporating the label transition matrix. When the discriminator is fed the samples generated by the generator and the corresponding labels, the labels are corrupted by the label transition matrix to become noisy labels. Since the given dataset is also a noisy labeled dataset, training the discriminator in this way makes it a match between the noisy labeled datasets. Then, the generator can have a clean label conditional distribution.
However, the diffusion model cannot completely mitigate the impact of noisy labels with the transition matrix alone because it targets the gradient of the log-likelihood, which is explained in \cref{sec:method}. Therefore, we propose to adapt the transition matrix framework to fit the diffusion models. We summarize the differences between the GAN-based models and our diffusion model in \cref{tab:compare_gan}.

In addition to the problem of noisy labels, various efforts have been made to tackle the issue of uncurated label distribution in generative models. For example, \citep{rangwani2021class,rangwani2022improving,qin2023class,kim2024training} addressed the problem of class imbalance or long-tailed distribution when training generative models. \citet{rangwani2021class} highlighted the problem of long-tailed distributions in conditional GAN training, and they proposed the class balancing regularizer with the theoretical evidence. \citet{rangwani2022improving} identified class-specific mode collapse in conditional GANs trained on long-tailed distributions, and they introduced a group spectral regularization to reduce the spectral norms of grouped parameters, as motivated by their analysis. More recently, \citet{qin2023class} aimed to alleviate the class imbalance problem in diffusion models. They proposed a new class-balancing diffusion objective based on theoretical analysis. Also, \citet{kim2024training} proposed time-dependent importance reweighted denoising score matching to address the problem of dataset bias. Similar to these works, we identify challenges of diffusion models under noisy labeled datasets and propose a novel label-noise robust objective based on our theoretical analysis.

\section{Guidance methods with transition-aware weights}        \label{sec:app_guidance}
In addition to generating conditional samples directly from a conditional score network, there are two methods for guiding conditional generation in diffusion models: classifier guidance~\citep{song2020score,dhariwal2021diffusion} and classifier-free guidance~\citep{ho2021classifierfree}. However, these guidance methods also suffer from label noise, as shown in the below experiment. To address this issue, we propose strategies to apply our transition-aware weight function in these methods, which can mitigate the effects of noisy labels.

The classifier guidance method~\citep{song2020score,dhariwal2021diffusion} utilizes an unconditional diffusion model and a classifier to generate samples that satisfy a certain condition by estimating a conditional score using \cref{eq:cg}.
\begin{align}       \label{eq:cg}
    \nabla_{\rvx_t} \log p_t (\rvx_t|Y=y) = \nabla_{\rvx_t} \log p_t (\rvx_t) + \nabla_{\rvx_t} \log p_t (Y=y|\rvx_t).
\end{align}
However, when dealing with noisy labels, the classifier provides misleading guidance even though the unconditional diffusion model is not affected by noisy labels.
In this case, our proposed transition-aware weight function can provide correct guidance using an unconditional diffusion model and a noisy-label classifier.
We can use \cref{thm2} and \cref{eq:cg} to derive the relationship between the gradient of the log probability of clean and noisy labels:
{\small
\begin{align}       \label{eq:rel_cls1}
    \nabla_{\rvx_t} \log p_t (\rvx_t|\ttY=\tty) & = \sum_{y=1}^{c} w(\rvx_t, \tty, y,t) \nabla_{\rvx_t} \log p_t (\rvx_t|Y=y),  \\
    \nabla_{\rvx_t} \log p_t (\rvx_t) + \nabla_{\rvx_t} \log p_t (\ttY=\tty|\rvx_t) & = \sum_{y=1}^{c} w(\rvx_t, \tty, y,t) \{ \nabla_{\rvx_t} \log p_t (\rvx_t) + \nabla_{\rvx_t} \log p_t (Y=y|\rvx_t) \}.
\end{align}
}%

Because $\sum_{y=1}^{c} w(\rvx_t,\tty,y,t) = 1$ for all $\rvx_t,\tty,t$, we have:
\begin{align}       \label{eq:rel_cls}
    \nabla_{\rvx_t} \log p_t (\ttY=\tty|\rvx_t) = \sum_{y=1}^{c} w(\rvx_t, \tty, y,t) \nabla_{\rvx_t} \log p_t (Y=y|\rvx_t) \text{ for }  \tty=1,...,c, \\
    \nabla_{\rvx_t} \log p_t (Y=y|\rvx_t) = \sum_{\tty=1}^{c} w^{*}(\rvx_t,\tty, y, t) \nabla_{\rvx_t} \log p_t (\ttY=\tty|\rvx_t) \text{ for }  y=1,...,c,
\end{align}
where $w^* (\rvx_t,\tty, y, t)$ is the $(y,\tty)$-th element of the matrix $(w(\rvx_t, \cdot, \cdot, t))^{-1}$, and $w^* (\rvx_t,\tty, y, t)$ is well-defined if the transition matrix $\mS$ is invertible. Using this derivation, we propose a transition-aware weighted classifier guidance method that can be sampled by a noisy-label classifier:
\begin{align}       \label{eq:tcg_2}
    \nabla_{\rvx_t} \log p_t (\rvx_t|Y=y) = \nabla_{\rvx_t} \log p_t (\rvx_t) + \sum_{\tty=1}^{c} w^{*}(\rvx_t,\tty, y, t) \nabla_{\rvx_t} \log p_t (\ttY=\tty|\rvx_t) .
\end{align}

We can evaluate the weighted sum of gradients by $\nabla_{\rvx_t} \big \{ \sum_{\tty=1}^{c} \bar{w}^{*}(\rvx_t,\tty, y, t) \log p_t (\ttY=\tty|\rvx_t) \big \} $, where $\bar{w}^{*}(\rvx_t,\tty, y, t) $ is a detached version of $w^{*}(\rvx_t,\tty, y, t) $. This method can be applied without additional training by utilizing the already trained noisy classifier.

On the other hand, the classifier-free guidance method~\citep{ho2021classifierfree} uses unconditional and conditional scores, without a classifier, to achieve the same effect as the classifier guidance method. Therefore, when a training dataset contains noisy labels, we only need to change the training objective of the conditional score network from DSM to TDSM.

\begin{wrapfigure}[10]{R}{0.51\textwidth}
    \begin{subfigure}[b]{0.49\linewidth}
         \centering
         \includegraphics[width=\linewidth]{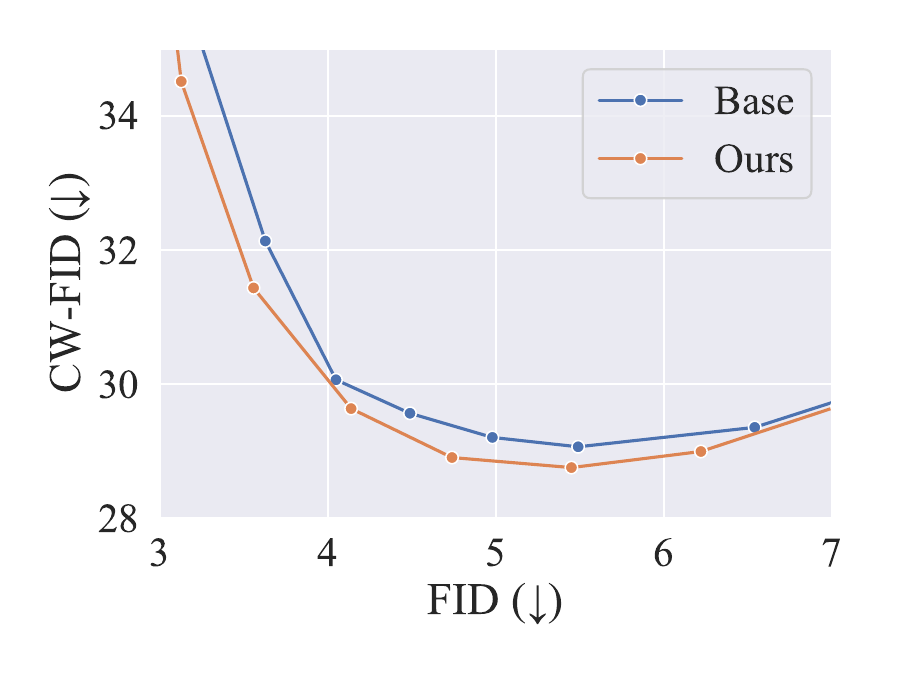}
         \caption{Classifier guidance}
         \label{subfig:cg}
     \end{subfigure}
     \hfill
     \begin{subfigure}[b]{0.49\linewidth}
         \centering
         \includegraphics[width=\linewidth]{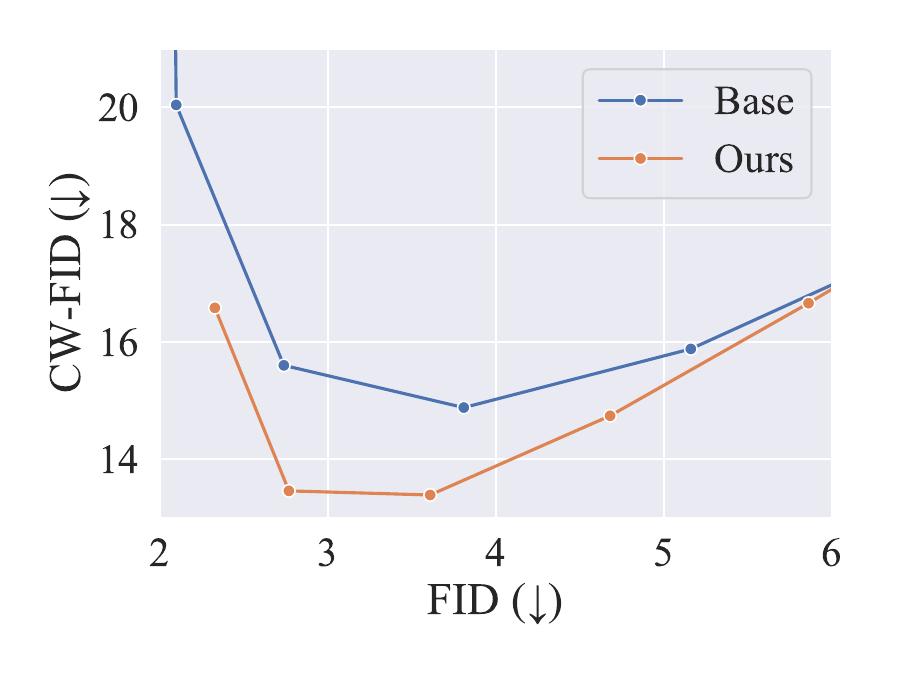}
         \caption{Classifier-free guidance}
         \label{subfig:cfg}
     \end{subfigure}
    \caption{Trade-off between FID and CW-FID on CIFAR-10 dataset under 40\% symmetric noise rate.}
    \label{fig:cg_cfg}
\end{wrapfigure}

We experiment with applying guidance methods on the noisy labeled dataset. We report the performance by changing the hyperparameter that balances the effects of the unconditional score network and the guidance signal. This hyperparameter is known to trade off between unconditional and conditional performance~\citep{chao2022denoising}. \cref{fig:cg_cfg} observes that our approach generates better conditioned images for the same image quality compared to the baseline.

\section{Affine score}    \label{sec:app_practical}
Although our model tends to produce more samples that match the given condition, it still produces some mislabeled samples. To address this issue, we propose an affine combination method of the noisy-label conditional score and the clean-label conditional score by leveraging their linear relationship, which is proved by \cref{thm2}. For each condition $y\in \gY$ and non-negative $\lambda$, we can obtain an equation as follows.
\begin{align}       \label{eq:affine}
    & (1+\lambda) \nabla_{\rvx_t} \log p_t (\rvx_t|Y=y) - \lambda \nabla_{\rvx_t} \log p_t (\rvx_t|\ttY=y) \nonumber \\
    & = (1+\lambda) \nabla_{\rvx_t} \log p_t (\rvx_t|Y=y) - \lambda \sum_{j=1}^c w(\rvx_t,j,y,t) \nabla_{\rvx_t} \log p_t (\rvx_t|Y=j)  \\
    & = \{ 1+\lambda (1 - w(\rvx_t,y,y,t))\} \nabla_{\rvx_t} \log p_t (\rvx_t|Y=y) + \sum_{j\neq y} (-\lambda w(\rvx_t,j,y,t) ) \nabla_{\rvx_t} \log p_t (\rvx_t|Y=j). \nonumber
\end{align}
It should be noted that, for non-negative $\lambda$, the followings are satisfied: $\{ 1+\lambda (1 - w(\rvx_t,y,y,t))\} + \sum_{j\neq y} (-\lambda w(\rvx_t,j,y,t) ) = 1$. Therefore, \cref{eq:affine}, called an affine score, is an affine combination of the clean-label conditional scores. 

Specifically, this affine score gives more weight to the target label $y$, while exerting the opposite force on the non-target labels. Theoretically, $w$ satisfies $1 - w(\rvx_t, y, y, t) \leq 0$, and $-\lambda w(\rvx_t,j,y,t) \leq 0$ for $j\neq y$. We experimentally confirmed that the affine score with the proper $\lambda$ maintained the sample quality and produced almost no conditionally mismatched outliers. 

\begin{figure}[t]
    \centering
    \includegraphics[width=\textwidth]{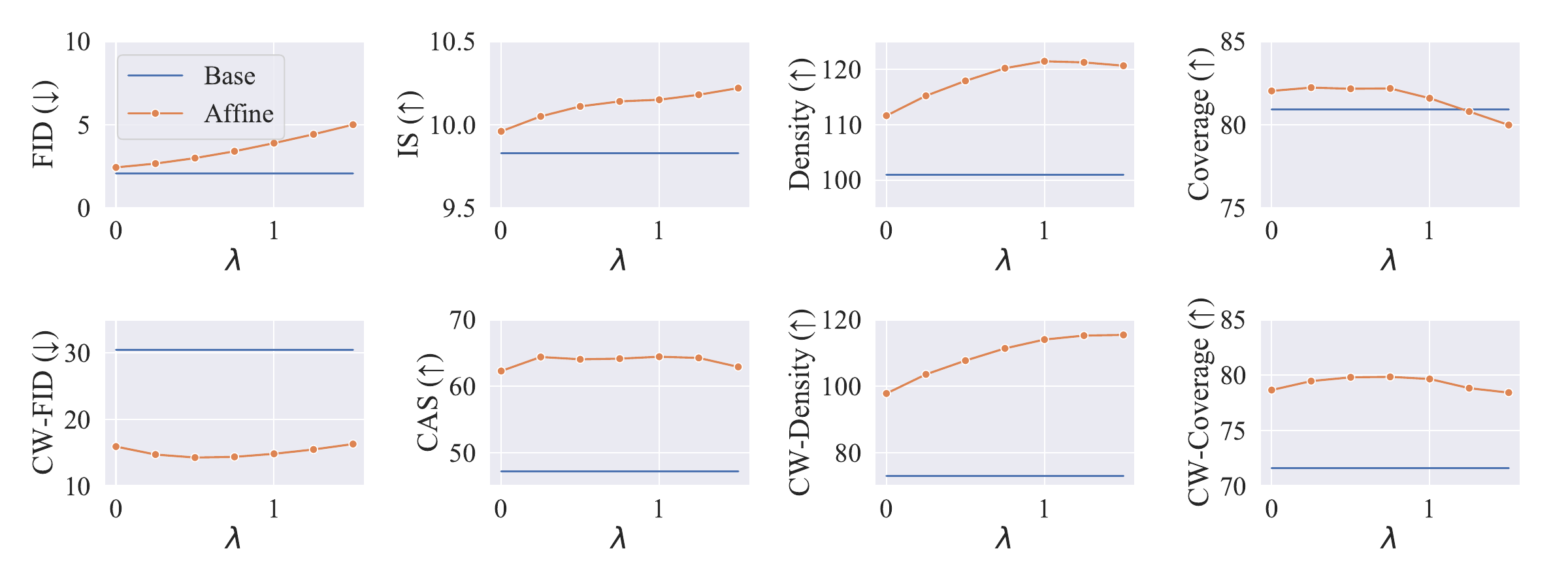}
    \caption{Ablation study of the hyperparameter $\lambda$ in the affine scores using the model trained on the CIFAR-10 datasets under 40\% symmetric noise rate.}
    \label{fig:affine}
\end{figure}

\cref{fig:affine} illustrates the performance of images generated from an affine score by varying the hyperparameter $\lambda$. The results indicate that using an affine score yields better samples across all performance aspects, except for FID. The generated images presented in this paper employ the affine score with $\lambda=1$.

\section{Additional experimental settings}    \label{sec:app_exp_setting}

\subsection{Datasets} \label{subsec:app_datasets}

We describe the generation of noisy labeled datasets in more detail. The overall setup follows \citet{kaneko2019label}. Specifically, for symmetric noise, labels are randomly perturbed with the probability of the noise rate. For asymmetric noise, we perturbed labels with the probability of the noise rate only for similar classes. In this case, for MNIST, labels are flipped by $\texttt{2}\rightarrow\texttt{7}$, $\texttt{3}\rightarrow\texttt{8}$, and $\texttt{5}\leftrightarrow\texttt{6}$, and for CIFAR-10, labels are flipped by $\texttt{truck}\rightarrow\texttt{automobile}$, $\texttt{bird}\rightarrow\texttt{airplane}$,  $\texttt{deer}\rightarrow\texttt{horse}$,  $\texttt{cat}\leftrightarrow\texttt{dog}$. For CIFAR-100, we have circularly flipped classes in the same superclass.

\subsection{Model configurations} \label{subsec:app_model_config}
We utilized 8 NVIDIA Tesla P40 GPUs and employed CUDA 11.4 and PyTorch 1.12 versions in our experiments. Our model framework and code are based on EDM~\citep{karras2022elucidating}.~\footnote{https://github.com/NVlabs/edm} For all experiments, we used DDPM++ network architecture with a U-net backbone, which is originally proposed by \citet{song2020score} and modified by \citet{karras2022elucidating}. The experimental settings align with CIFAR-10 experiments in \citet{karras2022elucidating}. The score network was trained with a batch size of 512, and the training iterations were set to 400,000 for MNIST and CIFAR-10, 200,000 for CIFAR-100. For Clothing-1M, the score network is trained with a batch size 256 for 200,000 training iterations.

For the noisy-label classifier, we use shallow convolutional neural network backbones for MNIST, ResNet-18~\citep{he2016deep} backbones for CIFAR-10, and ResNet-34 backbones for CIFAR-100 and Clothing-1M. When computing the noisy-label probability, we multiply the softmax output of the classifier and the transition matrix to ensure that the output belongs to the simplex formed by the transition matrix. The classifier was trained by the cross-entropy loss with Adam optimizer using a learning rate of 0.001 and a batch size of 1,024 for 200,000 training iterations. We use the EDM deterministic sampler~\citep{karras2022elucidating} with 18 steps (NFE is 35), which is the default setting of CIFAR-10 in \citet{karras2022elucidating}. 

\subsection{Evaluation metrics} \label{subsec:app_metrics}
We evaluate conditional generative models from multiple perspectives using four unconditional generation metrics, including Fréchet Inception Distance (FID)~\citep{heusel2017gans}, Inception Score (IS)~\citep{salimans2016improved}, Density, and Coverage~\citep{naeem2020reliable}, and four conditional generation metrics, namely CW-FID, CW-Density, CW-Coverage~\citep{chao2022denoising}, and Classification Accuracy Score (CAS)~\citep{ravuri2019classification}. We use DLSM~\citep{chao2022denoising} code~\footnote{https://github.com/chen-hao-chao/dlsm} for evaluations.

The term \textit{unconditional} metric refers to a metric that can measure samples regardless of their labels. We generate samples by performing conditional generation on each class. We use these samples to evaluate both unconditional and conditional metrics on each model. The unconditional metric is calculated from the generated images only, not accounting their labels. The unconditional metric is evaluated in the same way in other papers addressing the conditional generation~\citep{kaneko2019label,chao2022denoising}.

FID measures the distance between real and generated images in the pre-trained feature space~\citep{szegedy2016rethinking}, indicating the fidelity and diversity of generated images. IS quantifies how well the generated images capture diverse classes and whether each image looks like a single class, reflecting image fidelity. The Density and Coverage are reliable versions~\citep{naeem2020reliable} of Precision and Recall~\citep{sajjadi2018assessing, kynkaanniemi2019improved}, respectively. These two metrics examine how well one image distribution covers the other in the feature space~\citep{szegedy2016rethinking}. Higher Density and Coverage values indicate better image quality and diversity, respectively. We do not use Precision and Recall metrics. This is because the nearest neighbor distribution produces an overestimated distribution around the outliers~\citep{naeem2020reliable}, so these metrics are not appropriate in the presence of noisy labels.

To measure how well the generated images match the given conditions, we evaluate metrics on Class-Wise (CW) images. The CW-metric evaluates the metric separately for each class and averages them~\citep{chao2022denoising}. Note that CW-FID, also called intra-FID, is a widely used metric in the field of conditional generative models~\citep{miyato2018cgans,kaneko2019label}. Previous label-noise robust GAN model~\citep{kaneko2019label} mainly used this metric, and they mentioned that this metric is a measure of the quality of a conditional generative distribution. Additionally, we evaluate CAS, which is the classification accuracy of the test dataset, where the classifier is trained on generated samples. This metric demonstrates that the generated images contain representative class information. The classifier, which is to evaluate CAS metric, is trained by the cross-entropy loss with Adam optimizer using a learning rate of 0.01 with batch size 128 for 200 epochs. The classifiers are shallow convolutional neural networks for MNIST, ResNet-50 for CIFAR-10 and CIFAR-100, and pre-trained ResNet-50 for Clothing-1M.

It should be noted that the MNIST dataset is not suitable for evaluation using these metrics except for CAS, so instead we use density and coverage metrics based on the features of a randomly initialized VGG-16~\citep{simonyan15very}, as suggested by \citet{naeem2020reliable}.

\subsection{Experimental setting for 2-D toy case}
\label{subsec:app_2d}

We consider a 2-class Gaussian mixture model with $p(\rvx|y=1)=\mathcal{N}(\rvx;(3,3)^T, \mathbf{I})$, $p(\rvx|y=2)=\mathcal{N}(\rvx;(-3,-3)^T, \mathbf{I})$, and $p(y=1)=p(y=2)=0.5$. Also, we set the transition matrix as $\mS=\big(\begin{smallmatrix} 0.8 & 0.2\\ 0.2 & 0.8 \end{smallmatrix}\big)$. In this example, we use the VE SDE for the diffusion process, e.g., the conditional distributions of class 1 on time $t$ are: $p_t(\rvx_t|y=1)=\mathcal{N}(\rvx;(3,3)^T, (1+t^2)\mathbf{I})$. Then, we can evaluate $w(\rvx_t, \tty, y, t)$ from definition. Note that $w(\rvx_t, \tty, y=1, t)+w(\rvx_t, \tty, y=2, t)=1$ for all $\rvx_t, \tty, t$.

\subsection{Estimating the transition matrix} \label{subsec:app_estimate_tm}

We utilized the VolMinNet~\citep{li2021provably} framework to obtain the noisy label classifier and the transition matrix in an end-to-end manner. The VolMinNet loss function consists of the sum of the cross-entropy loss for the noisy label using the transition matrix and the loss that minimizes the volume of the simplex formed by the transition matrix. We utilize this loss to train the classifier. In this case, we get the estimated transition matrix, and the noisy-label probability is obtained by applying the transition matrix to the classifier output.

To train the transition matrix using the VolMinNet loss, we make the elements of the transition matrix as trainable parameters. The initialization and the optimizer setting of the transition matrix is followed by Li et al.~\citep{li2021provably}. In particular, the transition matrix was trained for about 7,500 iterations, which is different from the number of iterations used for training the classifier. Therefore, we fix the transition matrix after the matrix training iterations while only the classifier was further trained. Since our noisy-label classifier needs to learn the noisy labels, it requires more iterations. We observed that continuously training of the transition matrix causes it to converge towards the identity matrix.

\section{Additional experiment results}    \label{sec:app_exp}

\subsection{Analysis on 2-D toy case}
\label{subsec:app_anaylsis_2d}

\begin{figure}[t]
    \centering
    \includegraphics[width=0.8\linewidth]{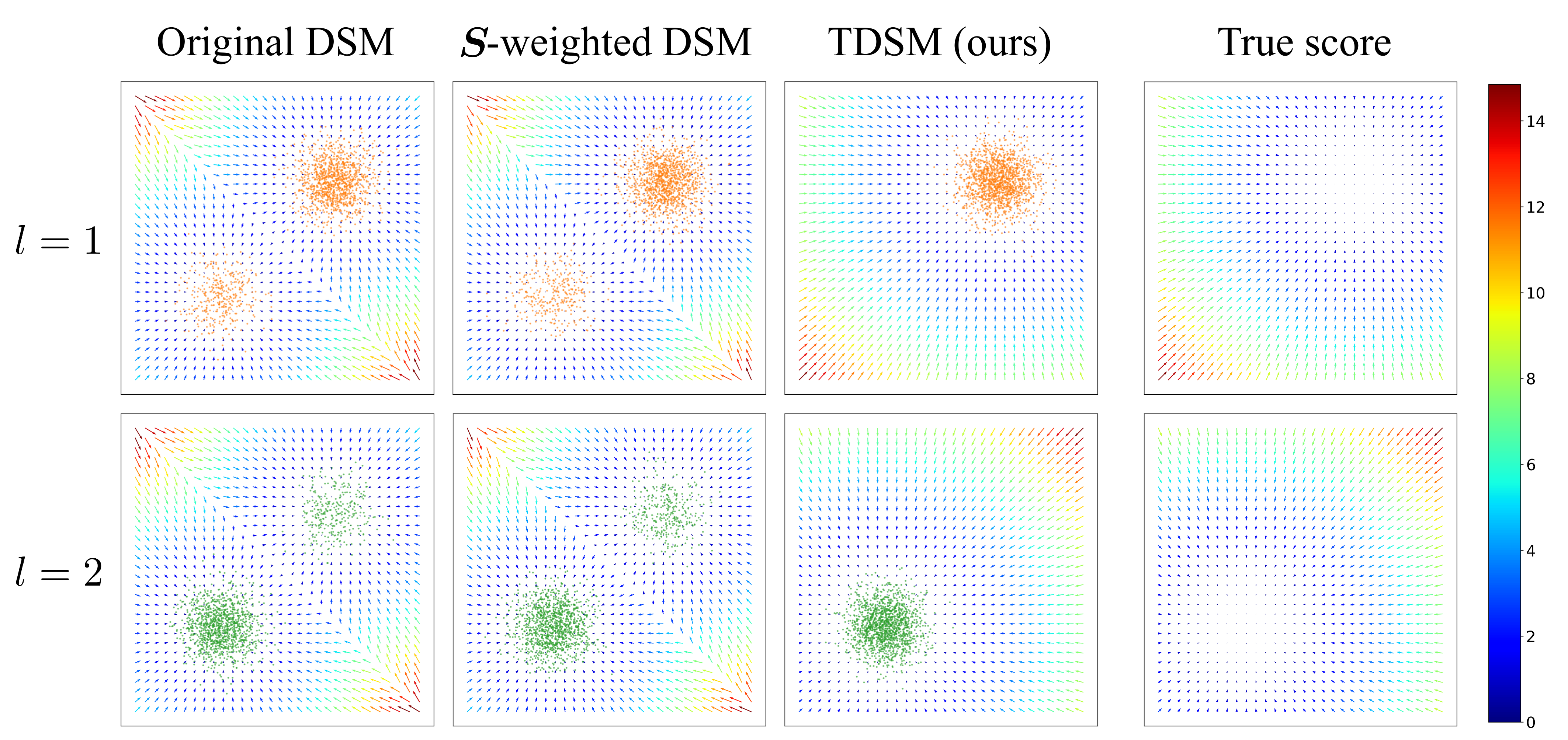}
    \caption{Comparison of the optimal score networks in a 2-dimensional toy case. The dots depict generated samples based on each score network. The arrows represent the score vectors, with the color representing the norm of the vectors.}
    \label{fig:2d}
\end{figure}

\cref{fig:2d} visualizes the optimal score networks of DSM variants in a 2-dimensional toy case. We consider a 2-class Gaussian mixture model with $p(\rvx|y=1)=\mathcal{N}(\rvx;(3,3)^T, \mathbf{I})$, $p(\rvx|y=2)=\mathcal{N}(\rvx;(-3,-3)^T, \mathbf{I})$,
and we set the transition matrix as $\mS=\big(\begin{smallmatrix} 0.8 & 0.2\\ 0.2 & 0.8 \end{smallmatrix}\big)$.
$\mS$-weighted DSM means that the weight function is determined solely by the transition matrix $\mS$, i.e., $w'(\rvx, \tty,y,t)=p(Y=y|\ttY=\tty)$. This analysis reveals that the instance-independent transition-aware weights cannot produce a clean-label conditional score in the diffusion models, which is different from the GAN models.

\begin{figure}[t]
     \centering
     \begin{subfigure}[b]{\textwidth}
         \centering
         \includegraphics[width=\textwidth]{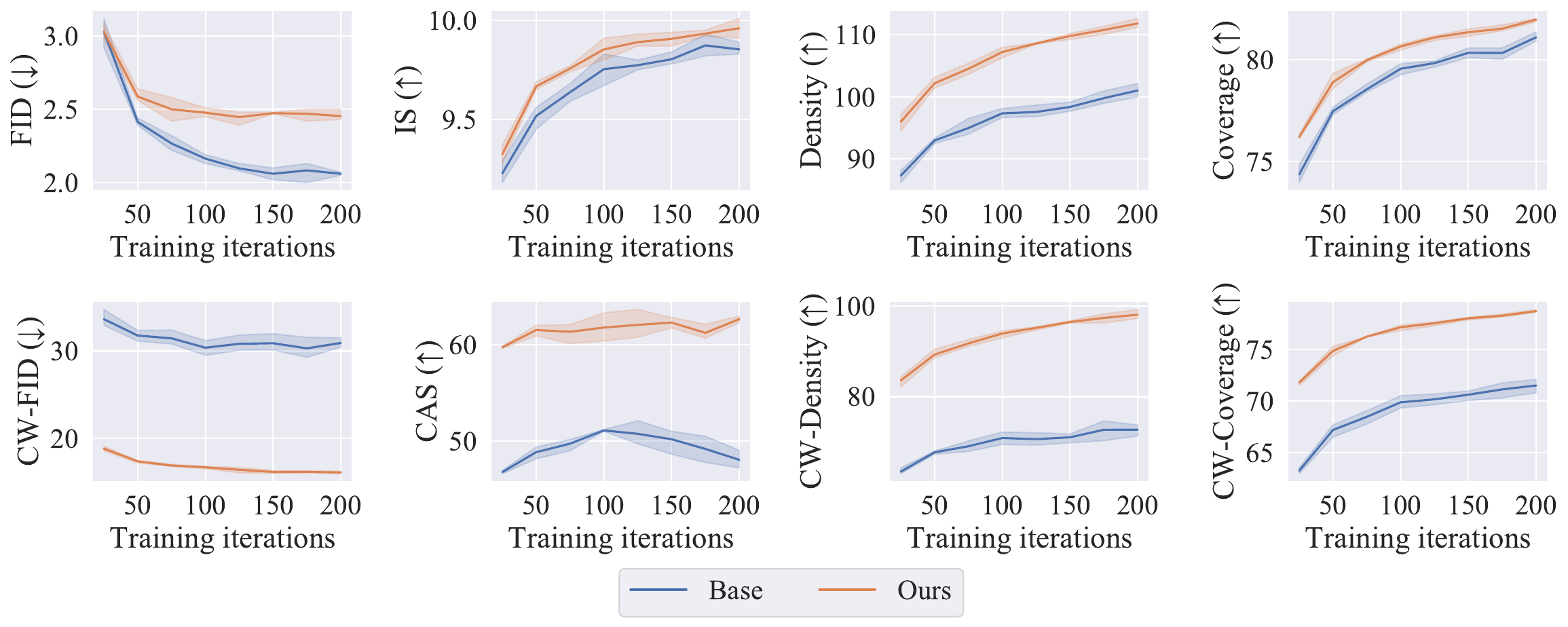}
         \caption{40\% symmetric noise}
         \label{subfig:repeat_sym}
     \end{subfigure}
     \hfill 
     \begin{subfigure}[b]{\textwidth}
         \centering
         \includegraphics[width=\textwidth]{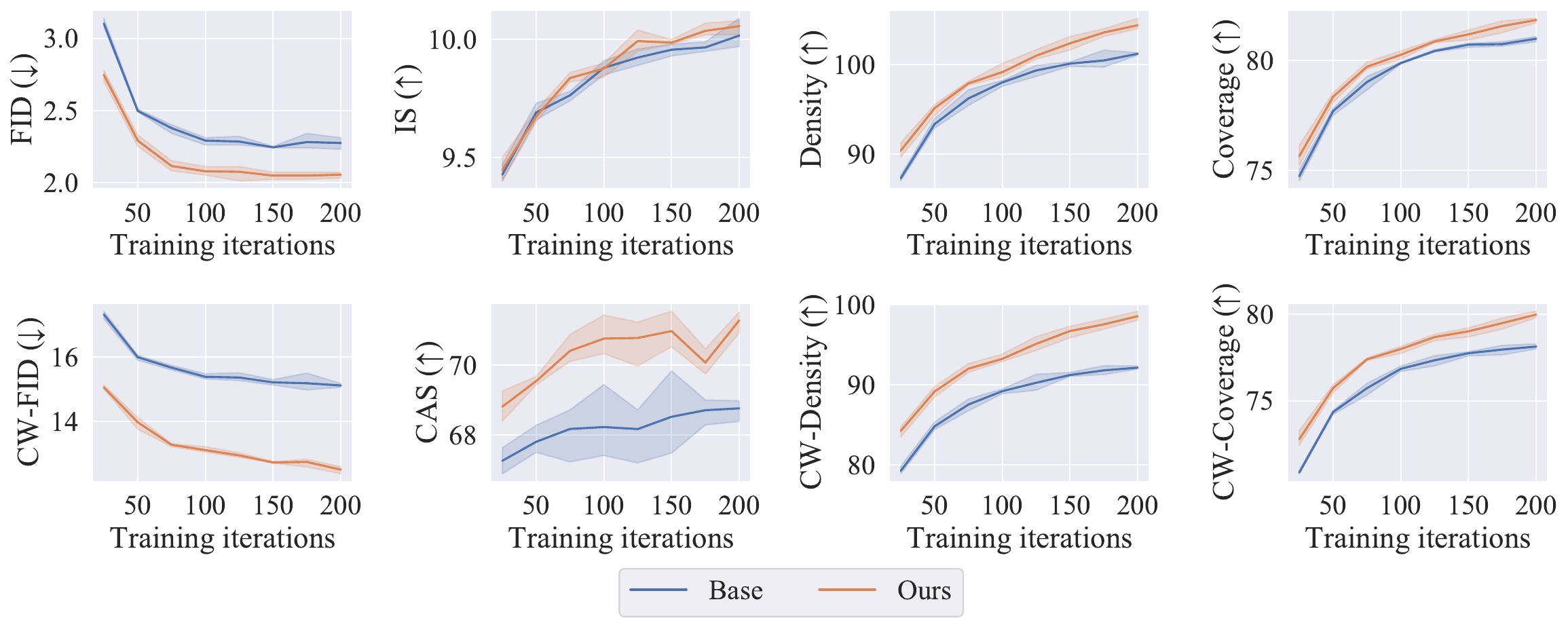}
         \caption{40\% asymmetric noise}
         \label{subfig:repeat_asym}
     \end{subfigure}
    \caption{The results of repeated experiments on the CIFAR-10 dataset for each training iteration (every 25 million images). We ran the experiment three times. The solid lines represent the mean of the metrics, and the shaded regions illustrate the minimum and maximum values.}
    \label{fig:repeat}
\end{figure}

\subsection{Repeated Experiments}
\label{subsec:app_repeat}
we run repeated experiments on the CIFAR-10 dataset with 40\% symmetric and asymmetric noise. \cref{fig:repeat} shows the average values of each metric along the training iterations, with min-max values. We also perform t-tests for all cases, and in most cases we observe a statistically significant improvement over the baselines.

\begin{figure}[t]
    \centering
    \includegraphics[width=\textwidth]{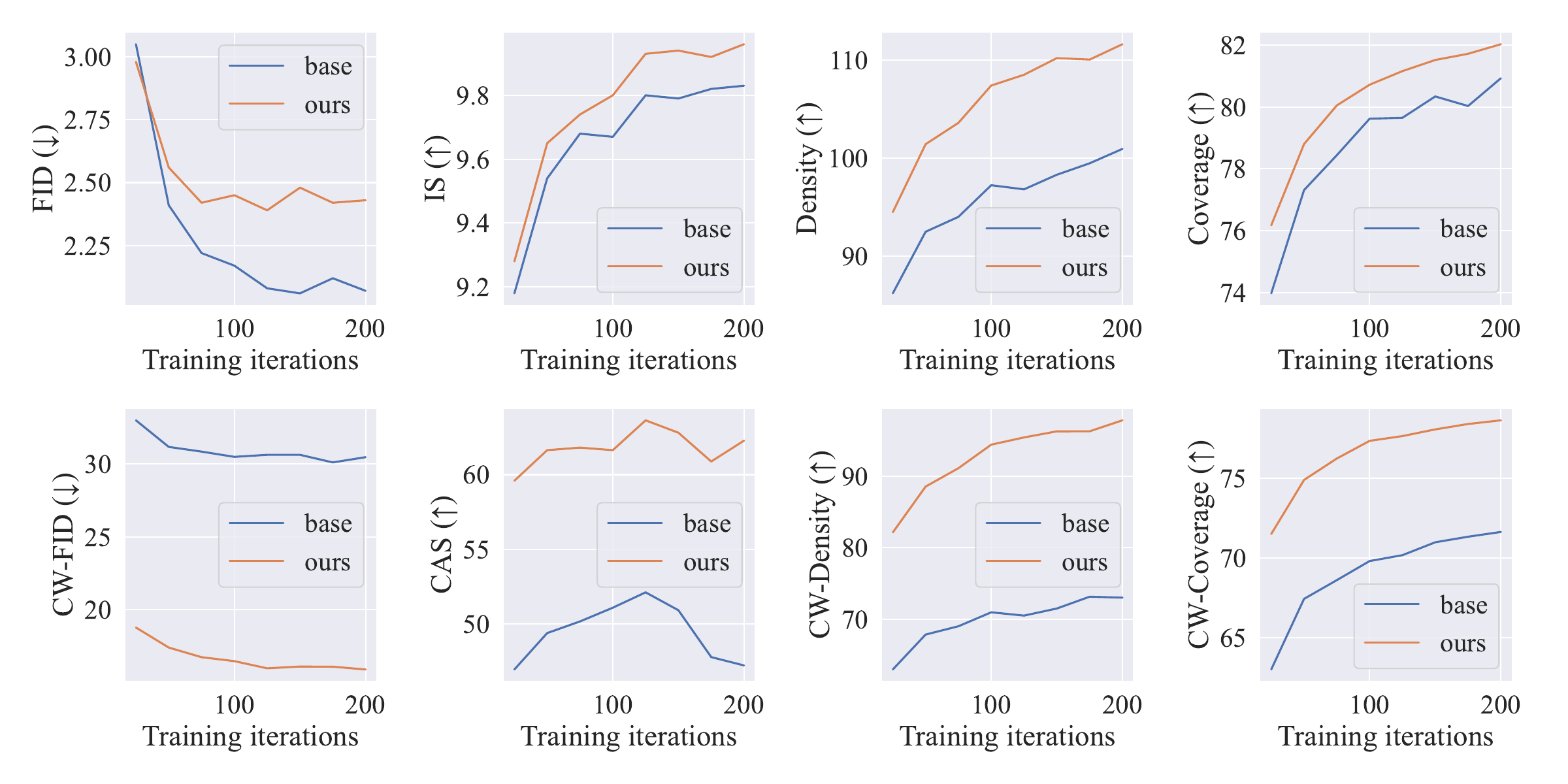}
    \caption{The experiment results on the CIFAR-10 datasets under 40\% symmetric noise rate with respect to the training iterations (in every 25 millions of images).}
    \label{fig:traj}
\end{figure}

\subsection{Learning trajectory}
\label{subsec:traj}
\cref{fig:traj} presents the trajectory of the metrics during the training process. For the conditional metrics, we find that the baseline model does not reach the performance of our early-stage model, even with continued training. Furthermore, for all metrics except FID, our model consistently outperforms the baseline across all snapshots. This observation highlights the persistent efforts of our model in estimating the clean-label conditional distribution and the difficulty of existing models in overcoming label noise.

\subsection{Training and inference time}
\label{subsec:app_time}

First, the inference (or sampling) time remains unchanged compared to the baseline model. This is because we only change the training objective to ensure that the score network becomes the clean-label conditional score given a noisy labeled dataset. During inference, the existing sampler is used for sampling, which is unaffected by the introduced weights, thus preserving the inference time.

In the process of evaluating the training objective, it is necessary to compute the weighted sum of the score networks. However, we introduce the practical techniques for reducing on time and memory usage in \cref{subsec:prac}. In our experiments, \cref{tab:training_time} shows that the training time for our models on CIFAR-10 and CIFAR-100 is only about 1.5 times, not the number of classes times, as long.

\begin{table}[t]
    \centering
    \caption{Training time on the CIFAR-10 and CIFAR-100 datasets. The values represent the training time (second) per 1K images.}
    \begin{tabular}{cccccc}
        \toprule
        \multirow{2}{*}[-0.5\dimexpr \aboverulesep + \belowrulesep + \cmidrulewidth]{Dataset} & \multirow{2}{*}[-0.5\dimexpr \aboverulesep + \belowrulesep + \cmidrulewidth]{Model} & \multicolumn{2}{c}{Symmetric} & \multicolumn{2}{c}{Asymmetric} \\
        \cmidrule(lr){3-4} \cmidrule(lr){5-6}
        && 20\% & 40\% & 20\% & 40\%  \\
        \midrule
        \multirow{2}{*}{CIFAR-10} & DSM & \multicolumn{4}{c}{1.77} \\
        & TDSM & 2.21 & 2.43 & 1.85 & 1.90 \\
        \midrule
        \multirow{2}{*}{CIFAR-100} & DSM & \multicolumn{4}{c}{1.78}  \\
        & TDSM & 2.35 & 3.06 & 2.01 & 2.27 \\
        \bottomrule
    \end{tabular}
    \label{tab:training_time}
\end{table}

In addition, \cref{fig:traj} provides a comparison of performance based on training iterations. For the conditional metrics, our models outperform the best baseline model for all training iterations, and for the unconditional metric, we observe that most metrics are similar to or better than the best baseline model's performance around the halfway point of our model's training. Consequently, we can achieve better performance on noisy labeled datasets within the same training time.

\begin{table}[t]
    \centering
    \caption{Experimental results on the CIFAR-10 and CIFAR-100 datasets under 40\% noise rate, compared with the GAN-based model. `GAN' is the baseline GAN model and `rGAN` is the label-noise robust GAN model.
    }
    \adjustbox{max width=\textwidth}{%
    \begin{tabular}{cc|lc cccc p{0.01\textwidth} cccc}
        \toprule
            & \multicolumn{3}{c}{\multirow{2}{*}[-0.5\dimexpr \aboverulesep + \belowrulesep + \cmidrulewidth]{Metric}}& \multicolumn{4}{c}{Symmetric}                        && \multicolumn{4}{c}{Asymmetric}           \\
            \cmidrule{5-8} \cmidrule{10-13}
            & \multicolumn{3}{c}{}         & GAN            & rGAN      & DSM           & TDSM      && GAN          & rGAN      & DSM          & TDSM       \\
        \midrule
        \multirow{8}{*}{\rotatebox[origin=c]{90}{CIFAR-10}} & \multirow{4}{*}{\rotatebox[origin=c]{90}{un}} &
              FID       & ($\downarrow$)    & 14.93         & 11.08     & \bf{2.07}     & 2.43      && 11.04        & 10.46     & 2.23          & \bf{2.06} \\
            && IS       & ($\uparrow$)      & 8.21          & 8.51      & 9.83          & \bf{9.96} && 8.54         & 8.56      & \bf{10.09}    & 10.02     \\
            && Density  & ($\uparrow$)      & 70.02         & 78.90     & 100.94        & \bf{111.63}&& 78.06       & 77.47     & 101.25        & \bf{105.19}\\
            && Coverage & ($\uparrow$)      & 52.51         & 58.92     & 80.93         & \bf{82.03}&& 59.19        & 58.45     & 81.10         & \bf{81.90}\\
            \\[-0.66em]
            & \multirow{4}{*}{\rotatebox[origin=c]{90}{cond}}
            & CW-FID    & ($\downarrow$)    & 53.26         & 27.03     & 30.45         & \bf{15.92}&& 29.10        & 26.29     & 15.18         & \bf{12.54}\\
            && CAS      & ($\uparrow$)      & 33.26         & 53.27     & 47.21         & \bf{62.28}&& 45.07        & 49.34     & 68.98         & \bf{71.51}\\
            && CW-Density& ($\uparrow$)     & 49.46         & 71.63     & 73.02         & \bf{97.80}&& 68.40        & 70.71     & 92.13         & \bf{99.21}\\
            && CW-Coverage& ($\uparrow$)    & 44.02         & 56.51     & 71.63         & \bf{78.65}&& 55.48        & 55.91     & 78.12         & \bf{79.98}\\
        \midrule
        \multirow{8}{*}{\rotatebox[origin=c]{90}{CIFAR-100}}&  \multirow{4}{*}{\rotatebox[origin=c]{90}{un}} &
              FID       & ($\downarrow$)    & 19.46         & 15.09     & \bf{3.36}     & 6.85      && 13.81        & 13.96     & \bf{2.73}     & 2.81      \\
            && IS        & ($\uparrow$)     & 8.05          & 9.11      & 11.86         & \bf{12.07}&& 9.12         & 9.43      & 12.51         & \bf{12.57}\\
            && Density   & ($\uparrow$)     & 69.85         & 75.16     & 81.70         & \bf{88.45}&& 79.13        & 78.16     & \bf{87.06}    & 87.01     \\
            && Coverage  & ($\uparrow$)     & 45.14         & 50.42     & \bf{73.92}    & 72.12     && 53.25        & 52.40     & \bf{76.56}    & 76.27     \\
            \\[-0.66em]
            & \multirow{4}{*}{\rotatebox[origin=c]{90}{cond}}
            & CW-FID    & ($\downarrow$)    & 137.35        & 109.64    & 100.04        & \bf{93.24}&& 114.71       & 105.93    & 89.13         & \bf{73.13}\\
            && CAS       & ($\uparrow$)     & 9.89          & 19.84     & 15.41         & \bf{21.17}&& 13.84        & 18.08     & 23.50         & \bf{34.47}\\
            && CW-Density& ($\uparrow$)     & 33.37         & 53.21     & 49.77         & \bf{60.60}&& 47.92        & 60.29     & 60.27         & \bf{74.30}\\
            && CW-Coverage& ($\uparrow$)    & 33.50         & 42.31     & 60.64         & \bf{63.89}&& 41.57        & 45.09     & 64.19         & \bf{71.48}\\
        \bottomrule
    \end{tabular}
    }
\label{tab:gan}
\end{table}
\subsection{Comparison with GAN-based models}
\label{subsec:app_gan}
We compare the generation performances with the label-noise robust GAN model~\citep{kaneko2019label} in \cref{tab:gan}. The diffusion models generate images much better than the GAN model even in the noisy label dataset, and our models boost the performances by increasing the robustness to label noise. Overall, the results demonstrate that our proposed approach is effective in handling label noise in diffusion models, leading to improved image generation performance under noisy label settings.

\begin{table}[t]
    \centering
    \caption{Experimental results for DDPM++~\citep{song2020score} backbone on the CIFAR-10 dataset with various noise settings. The percentages in headers represent the noise rate. `un' and `cond' indicate whether a metric is related to unconditional or conditional generation, respectively. \textbf{Bold} numbers indicate better performance.
    }
    \adjustbox{max width=\textwidth}{%
    \begin{tabular}{c|lc cc cc cc cc}
        \toprule
            \multicolumn{3}{c}{}            & \multicolumn{4}{c}{Symmetric}                          & \multicolumn{4}{c}{Asymmetric}                          \\
            \cmidrule(lr){4-7} \cmidrule(lr){8-11}
            \multicolumn{3}{c}{Metric}      & \multicolumn{2}{c}{20\%} & \multicolumn{2}{c}{40\%}   & \multicolumn{2}{c}{20\%} & \multicolumn{2}{c}{40\%}    \\
            \cmidrule(lr){4-5} \cmidrule(lr){6-7} \cmidrule(lr){8-9} \cmidrule(lr){10-11}
            \multicolumn{3}{c}{}            & DSM          & TDSM      & DSM          & TDSM      & DSM          & TDSM      & DSM          & TDSM          \\
        \midrule
        \multirow{4}{*}{\rotatebox[origin=c]{90}{un}} &
              FID       & ($\downarrow$)    & \bf{2.11}     & 2.24     & \bf{2.27}    & 2.61      & 2.20         & \bf{2.07} & 2.44         & \bf{2.24}     \\
            & IS        & ($\uparrow$)      & 9.85          & \bf{10.04}& 9.79         & \bf{9.93} & 9.89         & \bf{9.98} & \bf{9.95}    & \bf{9.95}     \\
            & Density   & ($\uparrow$)      & 103.17        & \bf{110.15}& 101.67       & \bf{110.72}& 104.01       & \bf{106.22}& 104.30      & \bf{106.04}   \\
            & Coverage  & ($\uparrow$)      & 81.21         & \bf{82.14}& 80.69        & \bf{81.28}& 81.58        & \bf{81.87}& 81.48        & \bf{81.73}    \\
        \multicolumn{1}{c}{}\\[-0.66em]
        \multirow{4}{*}{\rotatebox[origin=c]{90}{cond}}
            & CW-FID    & ($\downarrow$)    & 16.11         & \bf{11.95}& 29.31        & \bf{15.87}& 12.06        & \bf{10.99}& 15.32        & \bf{12.92}    \\
            & CAS       & ($\uparrow$)      & 66.86         & \bf{71.18}& 52.70        & \bf{61.35}& 72.75        & \bf{73.80}& 68.58        & \bf{71.22}    \\
            & CW-Density& ($\uparrow$)      & 91.83         & \bf{103.94} & 74.57        & \bf{98.02}& 99.30        & \bf{103.30}& 95.12       & \bf{100.30}   \\
            & CW-Coverage& ($\uparrow$)     & 78.47         & \bf{80.72}  & 71.76        & \bf{78.27}& 80.50        & \bf{81.14}& 78.67        & \bf{79.77}    \\
        \bottomrule
    \end{tabular}
    }
\label{tab:vp}
\end{table}

\subsection{Experimental results with other diffusion framework}
\label{subsec:app_backbone}
\cref{tab:vp} shows the experimental results with the DDPM++~\citep{song2020score} framework. Consistent with previous results in \cref{tab:main}, we observe that our model outperforms the baseline model on all conditional metrics and most unconditional metrics.

\begin{figure}[t]
     \centering
     \begin{subfigure}[b]{0.75\textwidth}
         \centering
         \includegraphics[width=\textwidth]{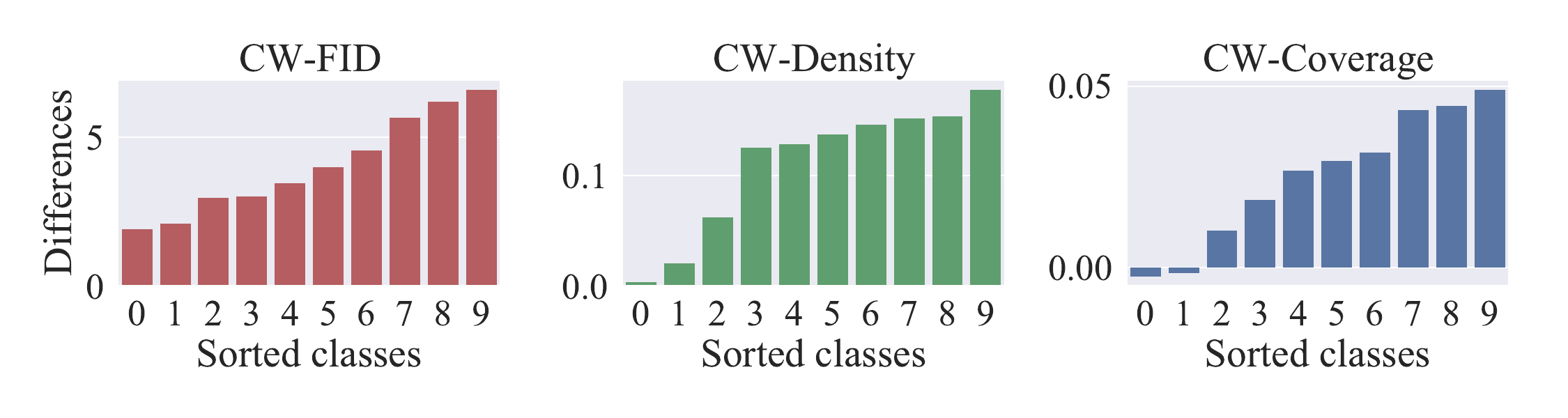}
         \caption{CIFAR-10 with 20\% symmetric noise}
         \label{subfig:cw_cifar10_sym_20}
     \end{subfigure}
     \\
     \begin{subfigure}[b]{0.75\textwidth}
         \centering
         \includegraphics[width=\textwidth]{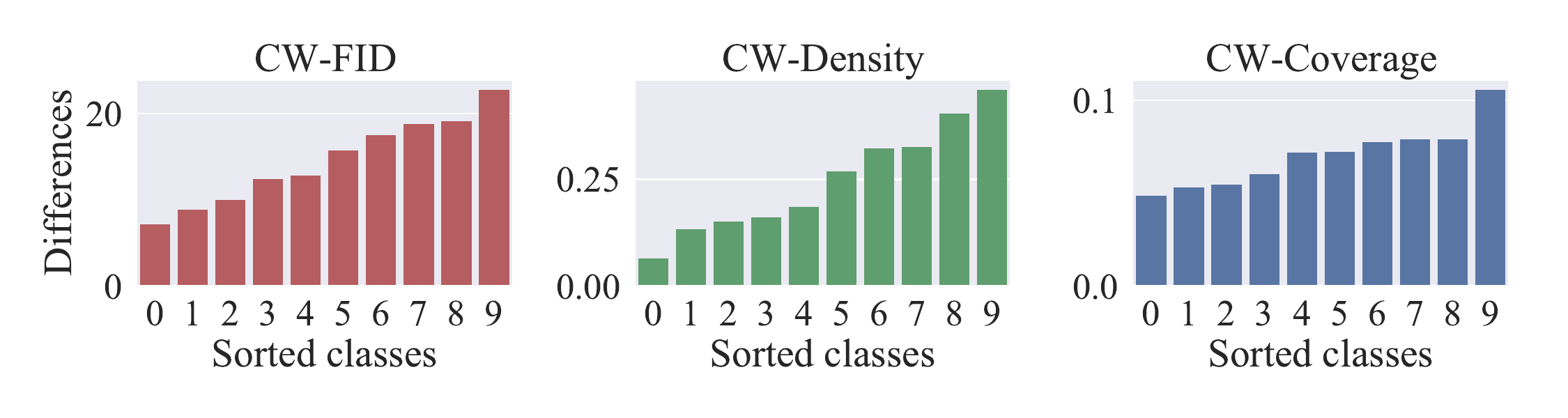}
         \caption{CIFAR-10 with 40\% symmetric noise}
         \label{subfig:cw_cifar10_sym_40}
     \end{subfigure}
     \\
     \begin{subfigure}[b]{0.75\textwidth}
         \centering
         \includegraphics[width=\textwidth]{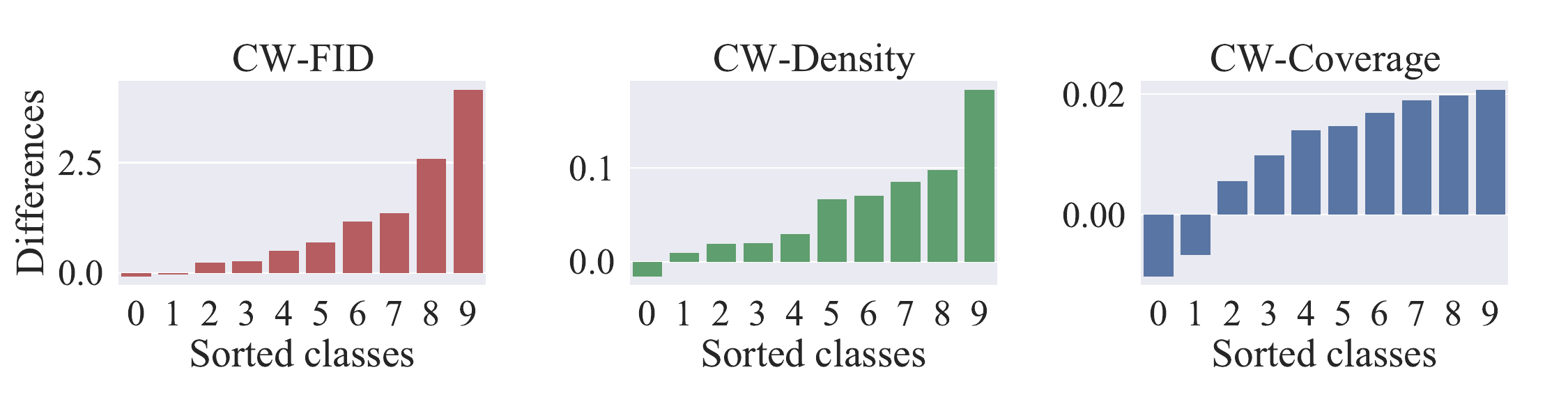}
         \caption{CIFAR-10 with 20\% asymmetric noise}
         \label{subfig:cw_cifar10_asym_20}
     \end{subfigure}
     \\
     \begin{subfigure}[b]{0.75\textwidth}
         \centering
         \includegraphics[width=\textwidth]{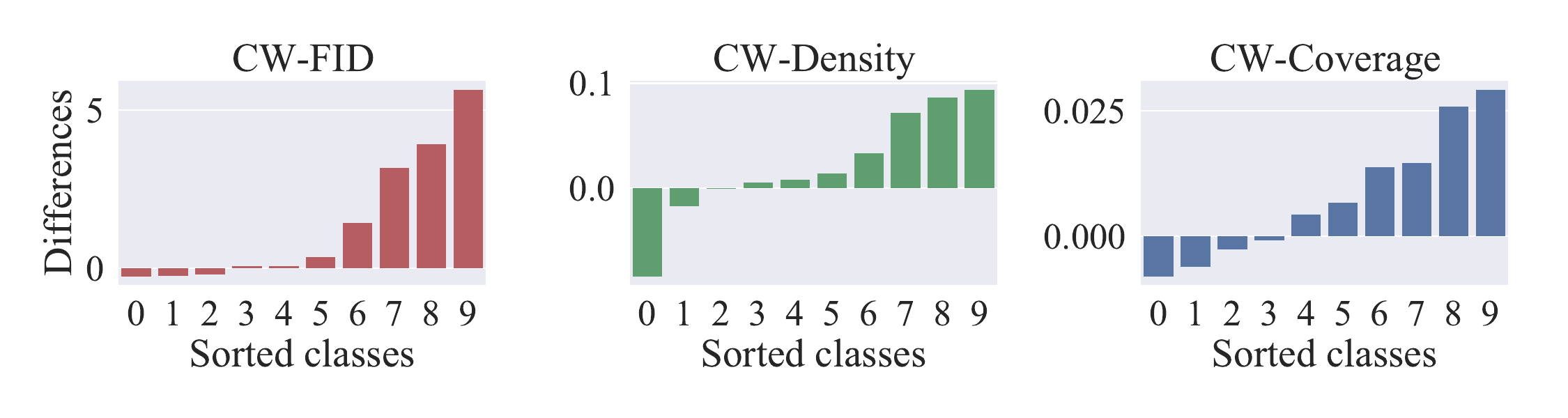}
         \caption{CIFAR-10 with 40\% asymmetric noise}
         \label{subfig:cw_cifar10_asym_40}
     \end{subfigure}
        \caption{The sorted differences between the DSM (baseline) and TDSM (ours) on the CIFAR-10 dataset with various noise settings.}
        \label{fig:cw}
\end{figure}

\subsection{Performance differences for the class-wise metrics}
\label{subsec:app_cw}
\cref{fig:cw} shows the performance difference between our models and the baseline models for each class. As shown in the figure, our models outperform in most classes. In particular, we see an improvement in performance even for classes that are not affected by label noise for asymmetric noise cases.

\subsection{Additional results for combining with the existing noisy label corrector}
\label{subsec:app_corrector}

We conducted experiments by 1) obtaining the corrected labels from the existing classifier learning methods with noisy labels, VolMinNet~\citep{li2021provably} and DISC~\citep{li2023disc}, and 2) training the diffusion model with the corrected labels. We use the same transition matrix structure with a revised noise ratio for training the diffusion model.
The results are summarized in \cref{tab:corrector1} for VolMinNet and \cref{tab:corrector2} for DISC. The experimental results indicate that applying our TDSM objective consistently improves the performance even with the corrected labels. Therefore, we believe that our approach tackles the noisy label problem from a diffusion model learning perspective, providing an orthogonal direction compared to conventional noisy label methods.

\begin{table}[t]
    \centering
    \caption{Experimental results of combining with the noisy label corrector, VolMinNet~\citep{li2021provably}, on the CIFAR-10 and CIFAR-100 datasets under various noise settings.
    }
    \adjustbox{max width=\textwidth}{%
    \begin{tabular}{cc|lc cc cc cc cc c}
        \toprule
            & \multicolumn{3}{c}{}          & \multicolumn{4}{c}{Symmetric}                          & \multicolumn{4}{c}{Asymmetric}                        & Clean\\
            \cmidrule(lr){5-8} \cmidrule(lr){9-12} \cmidrule(lr){13-13}
            & \multicolumn{3}{c}{Metric}    & \multicolumn{2}{c}{20\%} & \multicolumn{2}{c}{40\%}   & \multicolumn{2}{c}{20\%} & \multicolumn{2}{c}{40\%}  & 0\%\\
            \cmidrule(lr){5-6} \cmidrule(lr){7-8} \cmidrule(lr){9-10} \cmidrule(lr){11-12} \cmidrule(lr){13-13}
            & \multicolumn{3}{c}{}          & DSM          & TDSM      & DSM          & TDSM      & DSM          & TDSM      & DSM          & TDSM   & DSM\\
        \midrule
        \multirow{8}{*}{\rotatebox[origin=c]{90}{CIFAR-10}} & \multirow{4}{*}{\rotatebox[origin=c]{90}{un}} &
              FID       & ($\downarrow$)    & \bf{1.94}     & 2.04      & \bf{1.97}     & 2.18      & \bf{1.92}     & 2.03      & \bf{1.95}      & 1.98 & 1.92\\
            && IS       & ($\uparrow$)      & 9.90          & \bf{10.02} & 9.90         & \bf{10.09}& \bf{9.98}     & \bf{9.98} & 9.96    &  \bf{9.99}    & 10.03\\
            && Density  & ($\uparrow$)      & 101.32        & \bf{104.83}& 103.26       & \bf{110.09}& 102.15       & \bf{103.78}& 102.33       & \bf{102.98}& 103.08\\
            && Coverage & ($\uparrow$)      & 81.25         & \bf{82.04}& 81.45         & \bf{82.30}& 81.65         & \bf{81.79}& 81.67         & \bf{81.98} & 81.90\\
        \\[-0.66em]
        & \multirow{4}{*}{\rotatebox[origin=c]{90}{cond}}
            & CW-FID    & ($\downarrow$)    & 10.57         & \bf{10.56}& 11.42         & \bf{10.88}& 10.33         & \bf{10.26}& 10.62         & \bf{10.61} & 10.23\\
            && CAS      & ($\uparrow$)      & 75.85         & \bf{75.96}& 73.32         & \bf{73.38}& 75.63         & \bf{76.54}& 76.27         & \bf{76.53} & 77.74\\
            && CW-Density& ($\uparrow$)     & 99.92         & \bf{103.61}& 99.03         & \bf{107.22}& 100.86      & \bf{102.62}& 100.75        & \bf{101.37} & 102.63\\
            && CW-Coverage& ($\uparrow$)    & 80.65         & \bf{81.48}& 80.25         & \bf{81.46}& 81.09         & \bf{81.31}& 80.94         & \bf{81.10} & 81.57\\
        \midrule
        \multirow{8}{*}{\rotatebox[origin=c]{90}{CIFAR-100}} & \multirow{4}{*}{\rotatebox[origin=c]{90}{un}} &
              FID       & ($\downarrow$)    & \bf{2.62}     & 3.79      & \bf{2.85}     & 3.76       & 2.51     & \bf{2.46} & 3.74          & \bf{3.24}      & 2.51\\
            && IS       & ($\uparrow$)      & 12.53         & \bf{12.63}& 12.46         & \bf{12.70}& 12.81         & \bf{12.90}& 12.52         & \bf{12.86} & 12.80\\
            && Density  & ($\uparrow$)      & 87.20         & \bf{87.68}& 85.01         & \bf{91.76}& 86.43         & \bf{88.63}& 84.77    & \bf{87.97}      & 87.98\\
            && Coverage & ($\uparrow$)      & 77.27    & \bf{77.29}     & 76.40    & \bf{76.55}     & 77.58         & \bf{78.31}& 76.67    & \bf{77.76}      & 77.63\\
        \\[-0.66em]        & \multirow{4}{*}{\rotatebox[origin=c]{90}{cond}}
            & CW-FID    & ($\downarrow$)    & 68.69         & \bf{67.96}& 72.17        & \bf{71.18}& 67.43         & \bf{66.96}& 75.05         & \bf{73.06} & 66.97\\
            && CAS      & ($\uparrow$)      & 36.08         & \bf{38.38}& 35.32         & \bf{36.47}& 39.64         & \bf{40.46}& 38.84         & \bf{40.33} & 39.50\\
            && CW-Density& ($\uparrow$)     & 79.73         & \bf{80.11}& 74.58         & \bf{82.39}& 80.72         & \bf{82.74}& 77.50         & \bf{80.65} & 82.58\\
            && CW-Coverage& ($\uparrow$)    & 75.04         & \bf{75.51}& 72.65         & \bf{73.24}& 75.34         & \bf{76.12}& 72.68         & \bf{73.42} & 75.78\\
        \bottomrule
    \end{tabular}
    }
\label{tab:corrector1}
\end{table}

\begin{table}[t]
    \centering
    \caption{Experimental results of combining with the noisy label corrector, DISC~\citep{li2023disc}, on the CIFAR-100 datasets under various noise settings.
    }
    \adjustbox{max width=\textwidth}{%
    \begin{tabular}{cc|lc cc cc cc cc c}
        \toprule
            & \multicolumn{3}{c}{}          & \multicolumn{4}{c}{Symmetric}                          & \multicolumn{4}{c}{Asymmetric}                        & Clean\\
            \cmidrule(lr){5-8} \cmidrule(lr){9-12} \cmidrule(lr){13-13}
            & \multicolumn{3}{c}{Metric}    & \multicolumn{2}{c}{20\%} & \multicolumn{2}{c}{40\%}   & \multicolumn{2}{c}{20\%} & \multicolumn{2}{c}{40\%}  & 0\%\\
            \cmidrule(lr){5-6} \cmidrule(lr){7-8} \cmidrule(lr){9-10} \cmidrule(lr){11-12} \cmidrule(lr){13-13}
            & \multicolumn{3}{c}{}          & DSM          & TDSM      & DSM          & TDSM      & DSM          & TDSM      & DSM          & TDSM   & DSM\\
        \midrule
        \multirow{8}{*}{\rotatebox[origin=c]{90}{CIFAR-100}} & \multirow{4}{*}{\rotatebox[origin=c]{90}{un}} &
              FID       & ($\downarrow$)    & \bf{2.47} & 2.65          & \bf{2.54} & 2.84          & 2.51  & \bf{2.45} & 4.00  & \bf{3.41}     & 2.51\\
            && IS       & ($\uparrow$)      & 12.69     & \bf{12.83}    &  12.80    & \bf{12.94}    & 12.75 & \bf{12.96}& 12.51 & \bf{12.83}    & 12.80\\
            && Density  & ($\uparrow$)      & 87.89     & \bf{89.91}    &  87.28    & \bf{90.20}    & 87.79 & \bf{90.52}& 83.65 & \bf{88.10}    & 87.98\\
            && Coverage & ($\uparrow$)      & 77.61     & \bf{78.08}    &  77.44    & \bf{77.63}    & 77.62 & \bf{78.42}& 75.94 & \bf{77.57}    & 77.63\\
        \\[-0.66em]
        & \multirow{4}{*}{\rotatebox[origin=c]{90}{cond}}
            & CW-FID    & ($\downarrow$)    & 67.10     & \bf{66.82}    & 67.52     & \bf{67.33}    & 67.32 & \bf{66.68}& 78.93 & \bf{76.62}    & 66.97\\
            && CAS      & ($\uparrow$)      & 39.19     & \bf{41.83}    & 42.15     & \bf{42.39}    & 41.40 & \bf{41.90}& 39.60 & \bf{39.72}    & 39.50\\
            && CW-Density& ($\uparrow$)     & 82.62     & \bf{84.94}    & 82.04     & \bf{85.44}    & 82.58 & \bf{85.52}& 76.04 & \bf{81.69}    & 82.58\\
            && CW-Coverage& ($\uparrow$)    & 75.70     & \bf{76.54}    & 75.20     & \bf{75.61}    & 75.44 & \bf{76.00}& 70.39 & \bf{71.62}    & 75.78\\
        \bottomrule
    \end{tabular}
    }
\label{tab:corrector2}
\end{table}

\subsection{Ablation studies on skip threshold}
\label{subsec:ablation_skip}

\begin{figure}[t]
    \centering
    \includegraphics[width=0.4\linewidth]{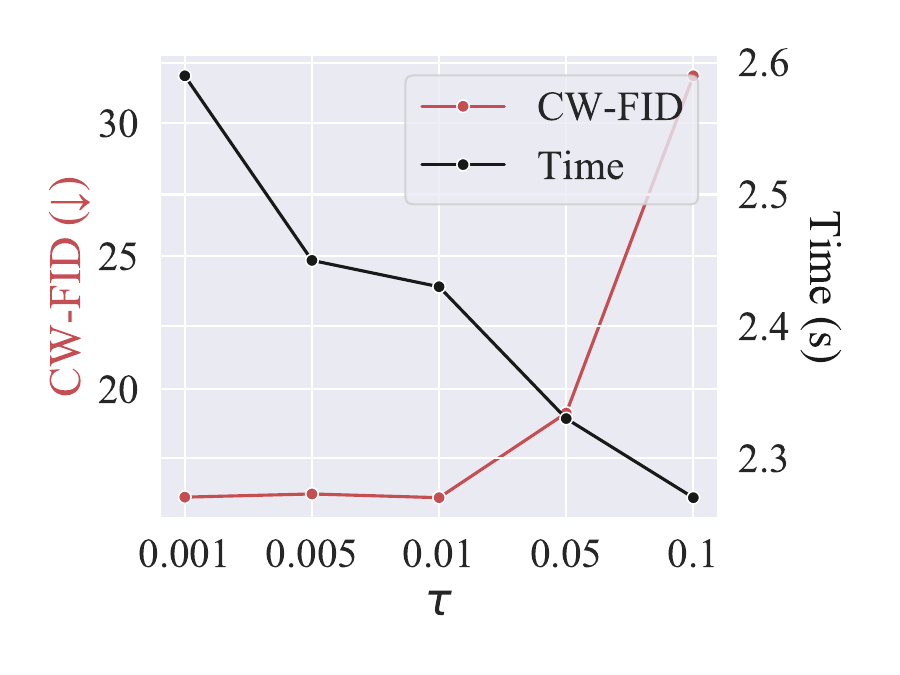}
    \caption{Ablation study of the skip threshold $\tau$.}
    \label{fig:skip}
\end{figure}

\cref{fig:skip} demonstrates the trade-off between CW-FID and the training time per 1K images depending on the skip threshold $\tau$, as explained in \cref{subsec:prac}. Using a small threshold leads to a long training time due to more network evaluations, and using a high threshold leads to a decrease in performance due to inaccurate targets. From this result, we set $\tau$ as 0.01 for all experiments.

\subsection{Additional results for the benchmark dataset with annotated label}
\label{subsec:app_clean}

\begin{table}[t]
    \centering
    \caption{Full experimental results on the clean MNIST, CIFAR-10, and CIFAR-100 dataset.
    }
    \adjustbox{max width=\textwidth}{%
    \begin{tabular}{c|lc cc p{0.01\textwidth} cc p{0.01\textwidth} cc}
        \toprule
            \multicolumn{2}{c}{\multirow{2}{*}{Metric}} && \multicolumn{2}{c}{MNIST} && \multicolumn{2}{c}{CIFAR-10} && \multicolumn{2}{c}{CIFAR-100}    \\
            \cmidrule{4-5} \cmidrule{7-8} \cmidrule{10-11}
             \multicolumn{2}{c}{}                       && Base      & Ours          && Base         & Ours          && Base         & Ours              \\
        \midrule
            \multirow{4}{*}{\rotatebox[origin=c]{90}{un}} &
              FID       &($\downarrow$)                & -         & -             && 1.92         & \bf{1.91}     && \bf{2.51}    & 2.67              \\
            & IS        &($\uparrow$)                   & -         & -             && 10.03        & \bf{10.10}    && 12.80        & \bf{12.85}        \\
            & Density   &($\uparrow$)                   & 86.20 & \bf{88.08}&& 103.08       & \bf{104.35}   && 87.98        & \bf{90.04}        \\
            & Coverage & ($\uparrow$)                   & 82.90 & \bf{83.69}&& 81.90        & \bf{82.07}    && 77.63        & \bf{78.28}        \\
            \multicolumn{1}{c}{} & \\[-0.66em]
            \multirow{4}{*}{\rotatebox[origin=c]{90}{cond}} &
              CW-FID   & ($\downarrow$)                & -         & -             && 10.23        & \bf{10.18}    && 66.97        & \bf{66.68}        \\
            & CAS       &($\uparrow$)                   & \bf{98.55}& 98.50         && \bf{77.74}   & 77.07         && \bf{39.50}   & 39.10             \\
            & CW-Density & ($\uparrow$)                & 85.79 & \bf{87.96}&& 102.63       & \bf{103.69}   && 82.58        & \bf{84.96}        \\
            & CW-Coverage &($\uparrow$)                 & 82.09 & \bf{82.98}&& 81.57        & \bf{81.88}    && 75.78        & \bf{76.53}        \\
        \bottomrule
    \end{tabular}
    \label{tab:app_clean}
    }
\end{table}

\begin{table}[t]
    \centering
    \caption{Experimental results on the clean CIFAR-10 dataset varying the noise rate parameters in TDSM. The percentage in the header indicates the assumed noise rate.
    }
    \adjustbox{max width=\textwidth}{%
    \begin{tabular}{lc cc cc cc}
        \toprule
            \multicolumn{2}{c}{\multirow{2}{*}[-0.5\dimexpr \aboverulesep + \belowrulesep + \cmidrulewidth]{Metric}} & \multirow{2}{*}[-0.5\dimexpr \aboverulesep + \belowrulesep + \cmidrulewidth]{DSM} & \multicolumn{5}{c}{TDSM} \\
            \cmidrule(lr){4-8}
             \multicolumn{2}{c}{}                       &       & 1\%       & 2.5\%      & 5\%         & 7.5\%      & 10\%              \\
        \midrule
              FID       &($\downarrow$)                & 1.92         & 1.94 & 1.93 & \bf{1.91} & 2.02 & 2.00    \\
             IS        &($\uparrow$)                   & 10.03        & 10.08 & 10.04 &\bf{10.10}&10.04& 10.06     \\
             Density   &($\uparrow$)                   & 103.08       & 103.83& 104.07&104.35&\bf{105.44}&105.15   \\
             Coverage & ($\uparrow$)                   & 81.90        & 81.79&81.90&\bf{82.07}&82.04&81.95    \\
        \bottomrule
    \end{tabular}
    \label{tab:app_clean_abl}
    }
\end{table}

The experiments for \cref{tab:clean} in \cref{subsec:clean} applied our models to benchmark dataset with the conjecture that some annotated labels may be noisy. For the conditional metrics, we need to have true labels, but due to the conjecture, it may not appropriate to use these metrics with the annotated labels. Nevertheless, it is possible to evaluate the conditional metrics with the annotated labels. The results are shown in \cref{tab:app_clean}, and we find that our model further improves the intra-class sample quality. 

To see the impact of the noise rate of TDSM on the clean dataset, we train TDSM models on the clean CIFAR-10 dataset assuming different symmetric noise rates.
As shown in \cref{tab:app_clean_abl}, the model performs best at a noise rate of 5\%. In addition, most TDSM models show improvements compared to DSM. This suggests that the consideration of label transitions over timesteps contributes to improved performance even on clean datasets. However, it is important to note that additional research is needed to account for the actual noise labels present in the benchmark dataset because our experiments assume symmetric noise.

\subsection{Experimental results on severe noise rate}
\label{subsec:app_severe}

To verify the robustness of the proposed model under extreme noise, we perform the experiments on the CIFAR-10 dataset with severe symmetric noise rates of 60\% and 80\%. The results are presented in \cref{tab:app_severe}. These results show that TDSM consistently improves performance even at severe noise rates. 

\begin{table}[t]
    \centering
    \caption{Experimental results on the CIFAR-10 datasets with symmetric noise containing severe noise rates. The percentages in headers represent the noise rate. `un' and `cond' indicate whether a metric is unconditional or conditional. \textbf{Bold} numbers indicate better performance.
    }
    \adjustbox{max width=\textwidth}{%
    \begin{tabular}{c|lc cc cc cc cc}
        \toprule
            \multicolumn{3}{c}{\multirow{2}{*}[-0.5\dimexpr \aboverulesep + \belowrulesep + \cmidrulewidth]{Metric}}    & \multicolumn{2}{c}{20\%} & \multicolumn{2}{c}{40\%}   & \multicolumn{2}{c}{60\%} & \multicolumn{2}{c}{80\%}  \\
            \cmidrule(lr){4-5} \cmidrule(lr){6-7} \cmidrule(lr){8-9} \cmidrule(lr){10-11}
            \multicolumn{3}{c}{}          & DSM          & {TDSM}      & DSM          & {TDSM}      & DSM          & {TDSM}      & DSM          & {TDSM}   \\
        \midrule
        \multirow{4}{*}{\rotatebox[origin=c]{90}{un}} &
              FID       & ($\downarrow$)    & \bf{2.00}     & 2.06      & \bf{2.07}     & 2.43      & \bf{2.17}          & \bf{2.55} & \bf{2.22}     & 2.44 \\
            & IS       & ($\uparrow$)      & 9.91          & \bf{9.97} & 9.83          & \bf{9.96} & 9.75    & \bf{10.09}     & 9.73           &  \bf{9.79}    \\
            & Density  & ($\uparrow$)      & 100.03        & \bf{106.13}& 100.94       & \bf{111.63}& 102.17       & \bf{110.78}& 102.09       & \bf{103.69}\\
            & Coverage & ($\uparrow$)      & 81.13         & \bf{81.89}& 80.93         & \bf{82.03}& 81.00         & \bf{81.65}& 80.81         & \bf{81.15} \\
        \multicolumn{1}{c}{}\\[-0.66em]
        \multirow{4}{*}{\rotatebox[origin=c]{90}{cond}}
            & CW-FID    & ($\downarrow$)    & 16.21         & \bf{12.16}& 30.45         & \bf{15.92}& 48.94         & \bf{23.36}& 72.51         & \bf{53.57} \\
            & CAS      & ($\uparrow$)      & 66.80         & \bf{70.92}& 47.21         & \bf{62.28}& 29.72         & \bf{52.38}& 9.93         & \bf{24.68} \\
            & CW-Density& ($\uparrow$)     & 88.45         & \bf{99.52}& 73.02         & \bf{97.80}& 58.42         & \bf{86.58}& 43.97        & \bf{54.30} \\
            & CW-Coverage& ($\uparrow$)    & 77.80         & \bf{80.29}& 71.63         & \bf{78.65}& 61.88         & \bf{75.08}& 44.53         & \bf{56.47} \\
        \bottomrule
    \end{tabular}
    }
\label{tab:app_severe}
\end{table}

\subsection{Further analysis of the transition-aware weighted function}
\label{subsec:app_analysis_transition}

In this subsection, we clarify the meaning of the transition-aware weighted function $w$.
Label recovery is an intuitive way to overcome the problem of noisy labels in a dataset. Our method does not specifically focus on robustness to noisy labels through label recovery; instead, it focuses on robustness to noisy labels from a diffusion model training perspective. Consequently, our approach can be synergistically combined with existing label recovery methods to further improve performance.

It is important to analyze how our method overcomes the noisy label robustness in diffusion models. Our analysis indicates that our proposed TDSM objective provides the diffusion model with information about the diffused label noise in the dataset. This information is represented by the transition-aware weight function $w$, depending upon the diffusion time. Therefore, we estimate the instance- and time-dependent label transition probability with the transition-aware weight function.

We want to emphasize that the transition-aware weight function $w$ plays a different role from a typical classifier. This probability represents the relationship between the noisy and clean label conditional scores (which are also time-dependent) and is an important element of the proposed training objective of a diffusion model. Therefore, we need to estimate this label transition probabilities over diffusion timesteps.

Specifically, our transition-aware weight function can be reformulated as follows:
\begin{equation}
    w(\rvx_t, \tilde{y}, y, t) = \frac{p_t(Y=y|\rvx_t)p(\tilde{Y}=\tilde{y}|Y=y)}{p_t(\tilde{Y}=\tilde{y}|\rvx_t)}.
\end{equation}
As seen in the expression, the weight function is composed of the clean label classifier $p_t(Y=y|\rvx_t)$, label transition prior $p(\tilde{Y}=\tilde{y}|Y=y)$, and the noisy label classifier  $p_t(\tilde{Y}=\tilde{y}|\rvx_t)$. Given the triplet of perturbed data instance and its corresponding noisy labels and timestep, i.e., $(\rvx_t, \tilde{y}, t)$, each component of $w$ has the following characteristics with respect to $y$: 1) $p(Y=y|\rvx_t)$ is maximized when $y$ is the clean label of the clean data $\rvx_0$ of $\rvx_t$; 2) $p(\tilde{Y}=\tilde{y}|Y=y)$ is maximized when $y$ is the noisy label $\tilde{y}$ in general. The reason for 2) is that in general, a given noisy label is sufficiently likely to be a clean label. These two trade-offs imply that the $w$ function does not behave like a clean label classifier.

Furthermore, for large enough $t$, the distribution of $\rvx_t$ converges to a label-independent prior distribution by the design of the diffusion process, so that $p_t(Y|\rvx_t)$ converges to a uniform distribution. Therefore, for sufficiently large $t$, the $w$ function converges to a transition matrix $\mS$. This phenomenon can also be demonstrated in \cref{fig:contour} of the 2-D Gaussian mixture model example. In summary, the transition-aware weight function needed to overcome noisy labels in a diffusion model is not represented clean label recovery information only, and this function has time-varying information. 

\begin{figure}[t]
    \centering
    \includegraphics[width=0.5\textwidth]{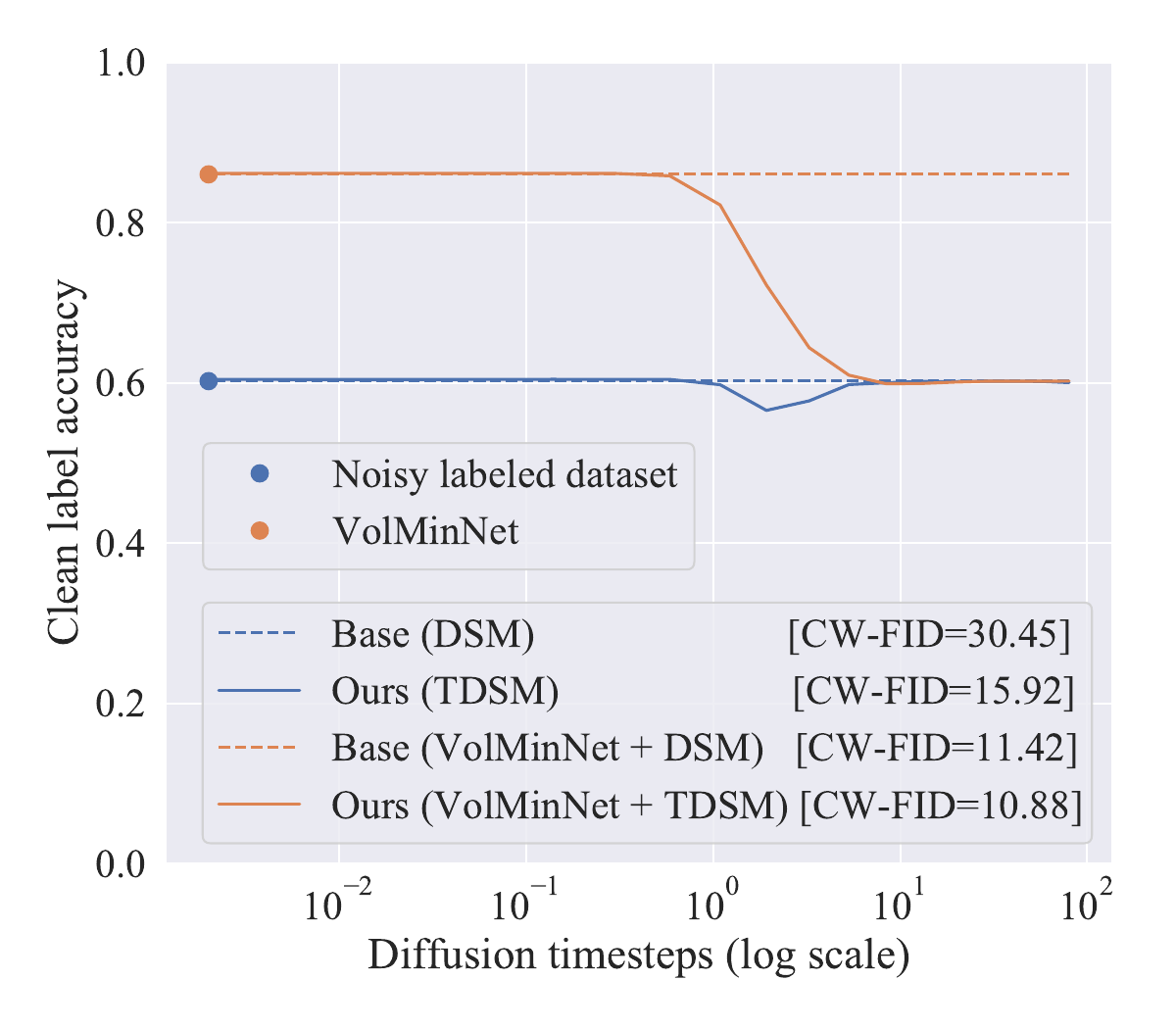}
    \caption{The clean label accuracy of the transition-aware weight function over diffusion timesteps on the CIFAR-10 training datasets under 40\% symmetric noise.}
    \label{fig:app_clean_label_acc}
\end{figure}

It is possible to check the accuracy of the transition-aware weight function on the training dataset since the function provides the predicted probability of a clean label given a noisy label and an instance. \cref{fig:app_clean_label_acc} shows the clean label accuracy of the transition-aware weight function over diffusion timesteps. Since the noisy labels in the dataset are used as the input conditions of the baseline model for the entire diffusion timesteps, we plot the percentages of clean labels in the datasets as dots at $t=0$ and plot a dashed horizontal line for comparison.

Focusing on `Ours (VolMinNet + TDSM)', which provides the best generation performance, we found that the behavior of the $w$ function varies with diffusion timesteps, as mentioned above. In particular, as $t$ increases, the clean label accuracy converges to the clean rate of the given dataset, which is 0.6. This is because the $w$ function converges to the label transition prior $p(\tilde{Y}=\tilde{y}|Y=y)$. `Base (VolMinNet + DSM)', which does not take this into account, learns the diffusion model only with corrected labels, leading to the performance difference. Therefore, while existing noisy label methods that focus on clean label recovery certainly contribute significantly to generative performance, considering the TDSM objective in diffusion model training enables additional performance improvement independent of clean label recovery.

\subsection{Analysis of the noisy label classifier}
\label{subsec:app_cls_abl}

\begin{figure}[t]
     \centering
     \begin{subfigure}[b]{0.49\textwidth}
         \centering
         \includegraphics[width=\textwidth]{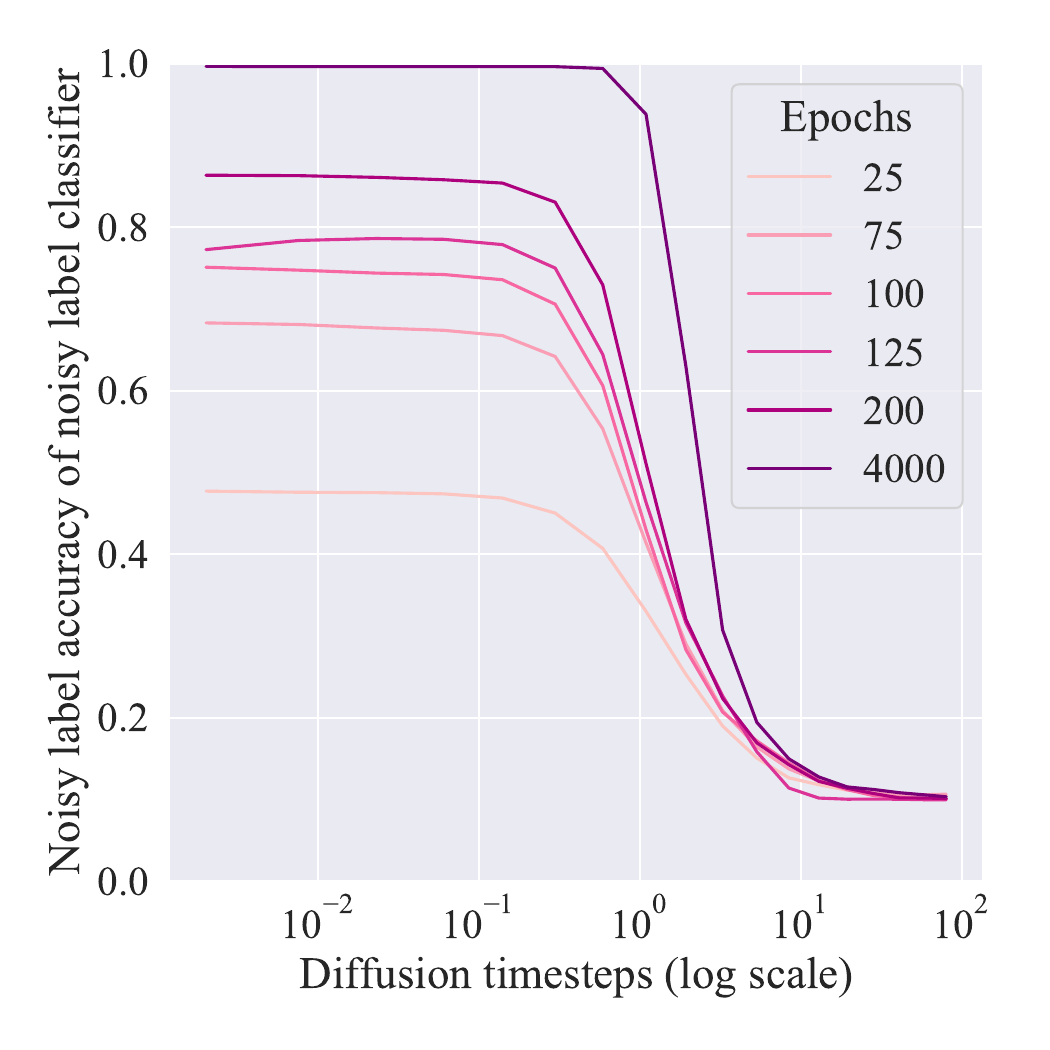}
         \caption{Noisy label accuracy of noisy label classifier}
         \label{subfig:cls_abl_noisy}
     \end{subfigure}
     \hfill 
     \begin{subfigure}[b]{0.49\textwidth}
         \centering
         \includegraphics[width=\textwidth]{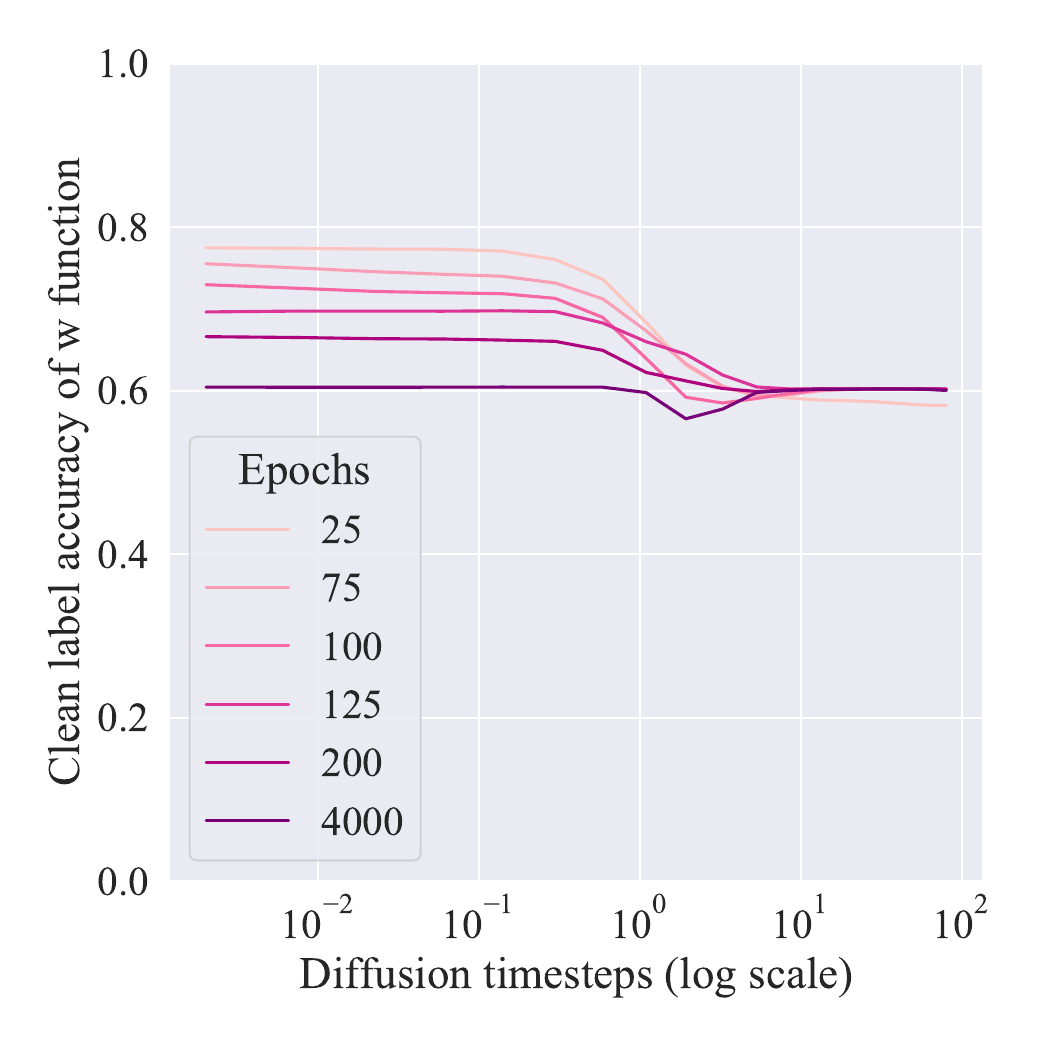}
         \caption{Clean label accuracy of $w$ function}
         \label{subfig:cls_abl_clean}
     \end{subfigure}
        \caption{Classification performance of the noisy label classifier with varying training levels over diffusion timesteps on the CIFAR-10 dataset with 40\% symmetric noise. The numbers in the legend represent the training epochs of the noisy label classifier.}
        \label{fig:cls_abl}
\end{figure}

\begin{figure}[t]
    \centering
    \includegraphics[width=\textwidth]{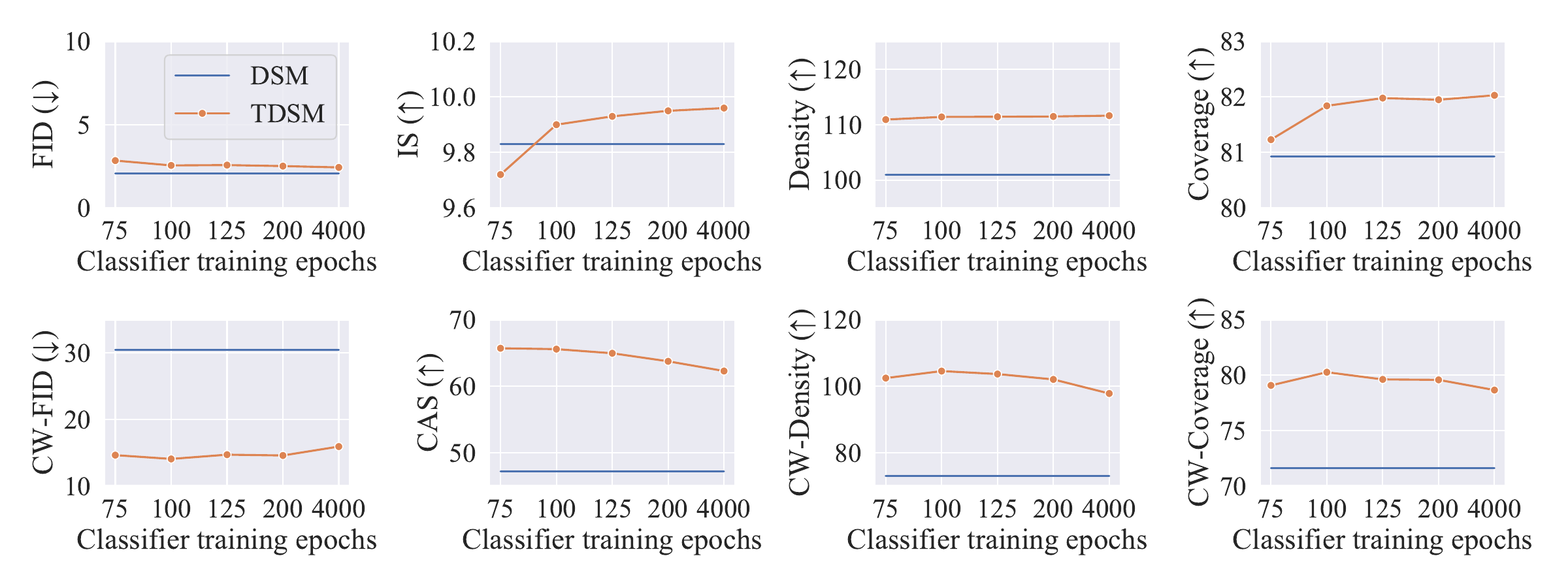}
    \caption{Generation performance of TDSM models on the CIFAR-10 dataset with 40\% symmetric noise with respect to the training epochs of the noisy label classifier. DSM is independent of the classifier, but is shown as a horizontal line for comparison.}
    \label{fig:cls_abl_gen}
\end{figure}

\begin{table}[t]
    \centering
    \caption{Generation performance of TDSM models with different training levels of noisy label classifiers on the CIFAR-10 dataset with 40\% symmetric noise. The epochs represent the training epochs of the noisy label classifier.
    }
    \adjustbox{max width=\textwidth}{%
    \begin{tabular}{c|lc c ccccc}
        \toprule
            \multicolumn{3}{c}{\multirow{2}{*}[-0.5\dimexpr \aboverulesep + \belowrulesep + \cmidrulewidth]{Metric}} & \multirow{2}{*}[-0.5\dimexpr \aboverulesep + \belowrulesep + \cmidrulewidth]{DSM} & \multicolumn{5}{c}{TDSM} \\
            \cmidrule(lr){5-9}
             \multicolumn{3}{c}{}                       &       & 75 epochs   & 100 epochs & 125 epochs & 200 epochs   & 4000 epochs            \\
        \midrule
            \multirow{4}{*}{\rotatebox[origin=c]{90}{un}} &
              FID       &($\downarrow$)                & \bf{2.07}   & 2.84 & 2.55 & 2.57 &2.51 & 2.43    \\
             &IS        &($\uparrow$)                   & 9.83        & 9.72 & 9.90 & 9.93 &9.95 & \bf{9.96}    \\
             &Density   &($\uparrow$)                   & 100.94       & 110.92& 111.40 & 111.43 & 111.48 & \bf{111.63}   \\
             &Coverage & ($\uparrow$)                   & 80.93        & 81.23& 81.84 & 81.98 &81.95 &\bf{82.03}  \\
            \multicolumn{1}{c}{} & \\[-0.66em]
            \multirow{4}{*}{\rotatebox[origin=c]{90}{cond}} &
              CW-FID   & ($\downarrow$)                & 30.45         & 14.63    & \bf{14.06} & 14.70 &  14.58 &  15.92             \\
            & CAS       &($\uparrow$)                   & 47.21& \bf{65.70}    & 65.57 & 64.95 & 63.74  &   62.28      \\
            & CW-Density & ($\uparrow$)                & 73.02& 102.43& \bf{104.54} & 103.65 & 102.03 &97.80        \\
            & CW-Coverage &($\uparrow$)                 & 71.63 & 79.07& \bf{80.26} & 79.61 & 79.57 & 78.65          \\
        \bottomrule
    \end{tabular}
    \label{tab:cls_abl}
    }
\end{table}

To investigate the effect of the noisy label classifier, we train the diffusion models with different training levels of noisy label classifiers.
To analyze the noisy label classifiers, we measured 1) the noisy label classification performance of the noisy label classifier over diffusion timesteps; and 2) the clean label classification performance of the $w$ function, evaluated with the classifier, over diffusion timesteps, in \cref{fig:cls_abl}. We also compare the generation performance for the diffusion model trained with each classifier in \cref{fig:cls_abl_gen,tab:cls_abl}.

The performance of the noisy label classifier in predicting noisy labels increased with the number of epochs trained on the classifier, but its ability to predict clean labels in clean data space decreased. Meanwhile, in terms of the generation performance of the diffusion model, the less trained classifier performed better with respect to the conditional metrics, and the more trained classifier performed better with respect to the unconditional metrics. Note that all TDSM models outperform the baseline, DSM, on most metrics.

We interpret this result as follows. 1) Due to the iterative sampling procedure of the diffusion model, the noisy label classifier needs to learn enough of all the perturbed samples of the diffusion timesteps to improve the quality of the generated samples. 2) At this time, excessive training of the noisy label classifier leads to an overconfidence problem, particularly in the data space at $t=0$. This overconfidence reduces the information of the clean label and affects the information of the condition. Therefore, it is possible to improve the generation performance by training the noisy label classifier more effectively, e.g., by adjusting the temporal weight function $\lambda(t)$, and further work is needed.

\subsection{Analysis of the potential label noise in the benchmark dataset}
\label{subsec:app_potential}

\begin{table}[t]
    \centering
    \caption{Generation performance of TDSM models for different assumed noise rates on the CIFAR-10 dataset with 40\% symmetric noise. The percentage in parentheses refers to the noise rate assumed by the TDSM models.
    }
    \adjustbox{max width=\textwidth}{%
    \begin{tabular}{c|lc c cc}
        \toprule
            \multicolumn{3}{c}{Metric} & {DSM} & TDSM (40\%) & TDSM (43\%) \\
        \midrule
            \multirow{4}{*}{\rotatebox[origin=c]{90}{un}} &
              FID       &($\downarrow$)                & \bf{2.07}   & 2.43 & 2.53   \\
             &IS        &($\uparrow$)                   & 9.83        &  \bf{9.96} & \bf{9.96}    \\
             &Density   &($\uparrow$)                   & 100.94       & 111.63 & \bf{113.21}  \\
             &Coverage & ($\uparrow$)                   & 80.93        & 82.03 & \bf{82.18}  \\
            \multicolumn{1}{c}{} & \\[-0.66em]
            \multirow{4}{*}{\rotatebox[origin=c]{90}{cond}} &
              CW-FID   & ($\downarrow$)                & 30.45         &  15.92  & \bf{15.76}           \\
            & CAS       &($\uparrow$)                   & 47.21&  \bf{62.28} & 62.10       \\
            & CW-Density & ($\uparrow$)                & 73.02& 97.80 & \bf{100.06}       \\
            & CW-Coverage &($\uparrow$)                 & 71.63 & 78.65   & \bf{79.11}       \\
        \bottomrule
    \end{tabular}
    \label{tab:potential}
    }
\end{table}

In \cref{subsec:clean}, we discuss the presence of noise in the annotated labels of existing benchmark datasets. This could potentially affect the results in our experiments on benchamrk dataset with synthetic label noise. To assess this influence, we apply TDSM to the CIFAR-10 dataset with 40\% symmetric noise, assuming a 43\% noise rate. In this case, we assume that the 60\% of unaffected data in our noisy label generation process has the 5\% potential noise predicted by the previous experiment, resulting in an additional 3\% of total data.

\cref{tab:potential} shows the results of the TDSM model assuming a 43\% noise rate on the CIFAR-10 dataset with 40\% symmetric noise. Interestingly, we find that assuming potential label noise yields additional performance improvements for most metrics. This result further supports our conjecture that the benchmark dataset contains examples with noisy or ambiguous labels.

\subsection{Additional generated images}
\label{subsec:app_gen_img}

\cref{fig:app_mnist_sym,fig:app_mnist_asym,fig:app_cifar10_sym,fig:app_cifar10_asym,fig:app_cifar100_sym,fig:app_cifar100_asym} provides the uncurated generated images of the baseline and our models in \cref{tab:main}. As with the quantitative results, our models demonstrate the ability to generate images that closely match the given conditions.

\subsection{Generated images on Clothing-1M}
\label{subsec:app_clothing}

\cref{fig:app_clothing} contains the uncurated generated images of the baseline and our models, trained on the Clothing-1M dataset. Although our model generates images that exhibit some alignment with the specified conditions, the disparity is not significant. This observation can be attributed to the underlying assumption of class-conditional label noise. Consequently, it is evident that further research is necessary in the area of generative model learning from instance-dependent noisy labels, similar to the research in supervised learning dealing with such noisy labels.

\newpage
\begin{figure}[p]
    \centering
    \begin{subfigure}[b]{0.4\textwidth}
        \centering
        \includegraphics[width=\textwidth]{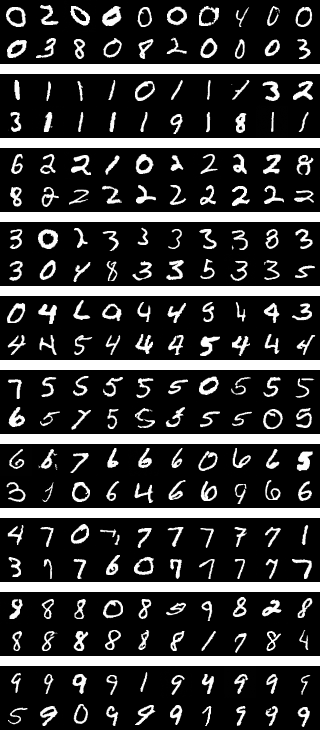}
        \caption{Base}
        \label{subfig:app_mnist_sym_base}
    \end{subfigure}
    \hspace{0.1\textwidth}
    \begin{subfigure}[b]{0.4\textwidth}
        \centering
        \includegraphics[width=\textwidth]{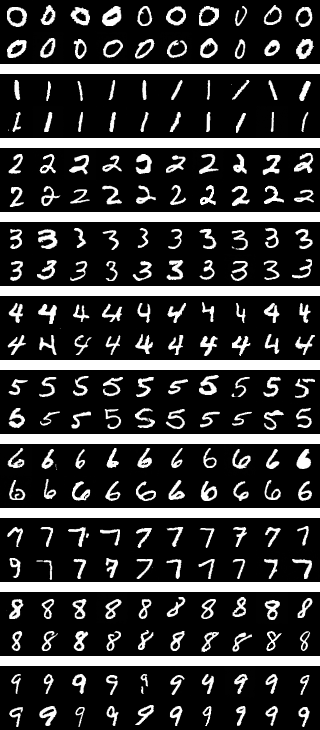}
        \caption{Ours}
        \label{subfig:app_mnist_sym_ours}
    \end{subfigure}
    \caption{The uncurated generated images of (a) baseline and (b) our models on the MNIST dataset with 40\% symmetric noise. Each block has images of the same class. The class labels are \texttt{0}, \texttt{1}, \texttt{2}, \texttt{3}, \texttt{4}, \texttt{5}, \texttt{6}, \texttt{7}, \texttt{8}, and \texttt{9}, from top to bottom.}
    \label{fig:app_mnist_sym}
\end{figure}

\newpage
\begin{figure}[p]
    \centering
    \begin{subfigure}[b]{0.4\textwidth}
        \centering
        \includegraphics[width=\textwidth]{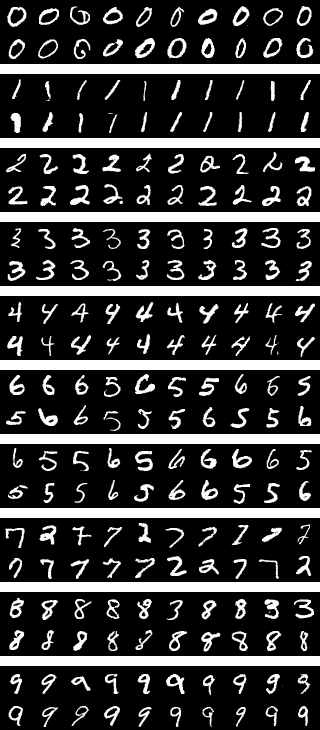}
        \caption{Base}
        \label{subfig:app_mnist_asym_base}
    \end{subfigure}
    \hspace{0.1\textwidth}
    \begin{subfigure}[b]{0.4\textwidth}
        \centering
        \includegraphics[width=\textwidth]{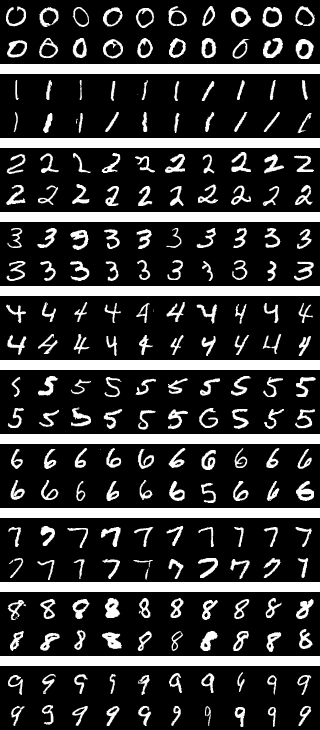}
        \caption{Ours}
        \label{subfig:app_mnist_asym_ours}
    \end{subfigure}
    \caption{The uncurated generated images of (a) baseline and (b) our models on the MNIST dataset with 40\% asymmetric noise. Each block has images of the same class. The class labels are \texttt{0}, \texttt{1}, \texttt{2}, \texttt{3}, \texttt{4}, \texttt{5}, \texttt{6}, \texttt{7}, \texttt{8}, and \texttt{9}, from top to bottom. The labels are flipped by $\texttt{2}\rightarrow\texttt{7}$, $\texttt{3}\rightarrow\texttt{8}$, and $\texttt{5}\leftrightarrow\texttt{6}$.}
    \label{fig:app_mnist_asym}
\end{figure}

\newpage
\begin{figure}[p]
    \centering
    \begin{subfigure}[b]{0.4\textwidth}
        \centering
        \includegraphics[width=\textwidth]{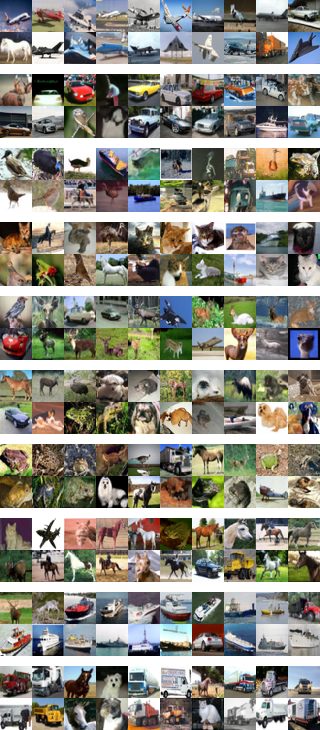}
        \caption{Base}
        \label{subfig:app_cifar10_sym_base}
    \end{subfigure}
    \hspace{0.1\textwidth}
    \begin{subfigure}[b]{0.4\textwidth}
        \centering
        \includegraphics[width=\textwidth]{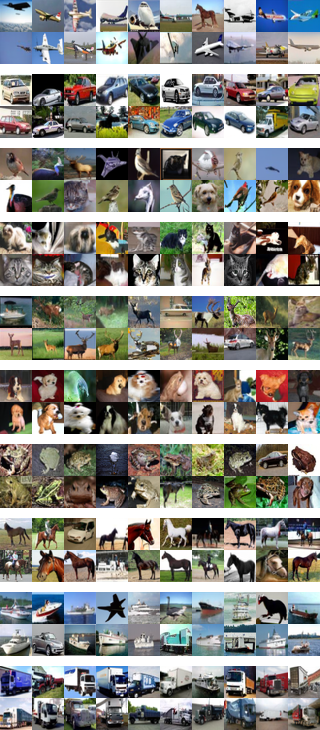}
        \caption{Ours}
        \label{subfig:app_cifar10_sym_ours}
    \end{subfigure}
    \caption{The uncurated generated images of (a) baseline and (b) our models on the CIFAR-10 dataset with 40\% symmetric noise. Each block has images of the same class. The class labels are \texttt{airplane}, \texttt{automobile}, \texttt{bird}, \texttt{cat}, \texttt{deer}, \texttt{dog}, \texttt{frog}, \texttt{horse}, \texttt{ship}, and \texttt{truck}, from top to bottom.}
    \label{fig:app_cifar10_sym}
\end{figure}

\newpage
\begin{figure}[p]
    \centering
    \begin{subfigure}[b]{0.4\textwidth}
        \centering
        \includegraphics[width=\textwidth]{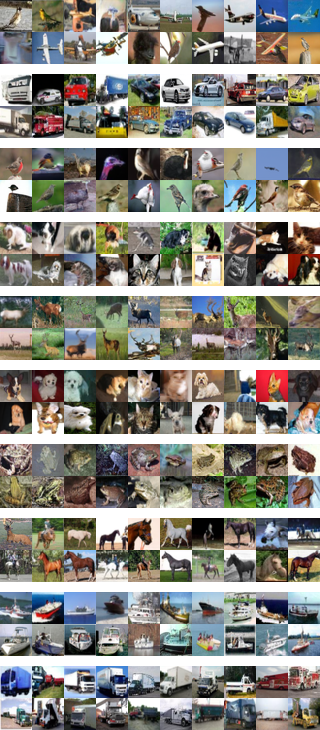}
        \caption{Base}
        \label{subfig:app_cifar10_asym_base}
    \end{subfigure}
    \hspace{0.1\textwidth}
    \begin{subfigure}[b]{0.4\textwidth}
        \centering
        \includegraphics[width=\textwidth]{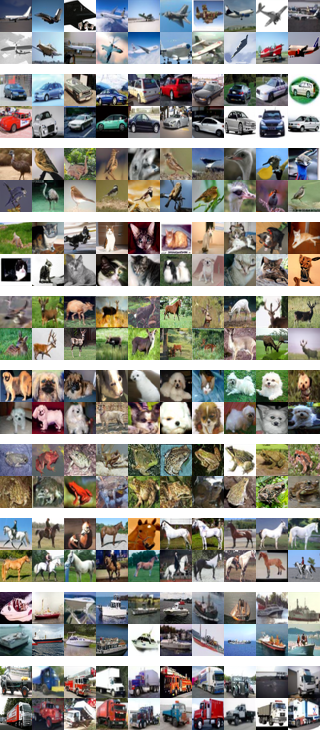}
        \caption{Ours}
        \label{subfig:app_cifar10_asym_ours}
    \end{subfigure}
    \caption{The uncurated generated images of (a) baseline and (b) our models on the CIFAR-10 dataset with 40\% asymmetric noise. Each block has images of the same class. The class labels are \texttt{airplane}, \texttt{automobile}, \texttt{bird}, \texttt{cat}, \texttt{deer}, \texttt{dog}, \texttt{frog}, \texttt{horse}, \texttt{ship}, and \texttt{truck}, from top to bottom. The labels are flipped by $\texttt{truck}\rightarrow\texttt{automobile}$, $\texttt{bird}\rightarrow\texttt{airplane}$,  $\texttt{deer}\rightarrow\texttt{horse}$,  $\texttt{cat}\leftrightarrow\texttt{dog}$.}
    \label{fig:app_cifar10_asym}
\end{figure}

\newpage
\begin{figure}[p]
    \centering
    \begin{subfigure}[b]{0.4\textwidth}
        \centering
        \includegraphics[width=\textwidth]{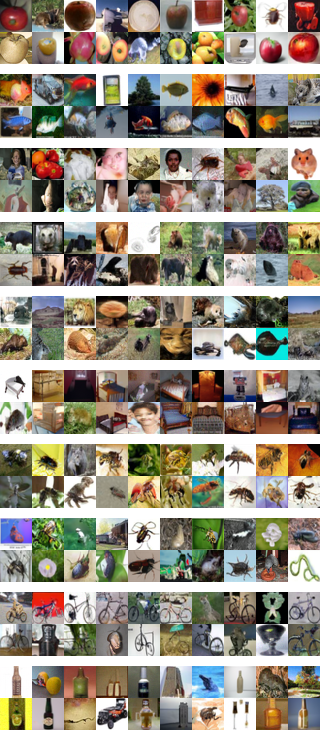}
        \caption{Base}
        \label{subfig:app_cifar100_sym_base}
    \end{subfigure}
    \hspace{0.1\textwidth}
    \begin{subfigure}[b]{0.4\textwidth}
        \centering
        \includegraphics[width=\textwidth]{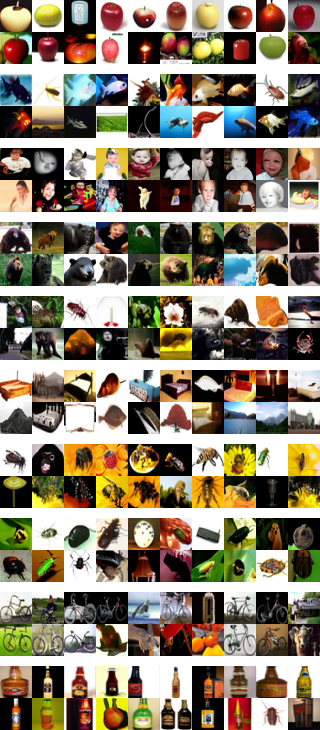}
        \caption{Ours}
        \label{subfig:app_cifar100_sym_ours}
    \end{subfigure}
    \caption{The uncurated generated images of (a) baseline and (b) our models on the CIFAR-100 dataset with 40\% symmetric noise. Each block has images of the same class. The class labels are \texttt{apple}, \texttt{aquarium fish}, \texttt{baby}, \texttt{bear}, \texttt{beaver}, \texttt{bed}, \texttt{bee}, \texttt{beetle}, \texttt{bicycle}, and \texttt{bottle}, from top to bottom.}
    \label{fig:app_cifar100_sym}
\end{figure}

\newpage
\begin{figure}[p]
    \centering
    \begin{subfigure}[b]{0.4\textwidth}
        \centering
        \includegraphics[width=\textwidth]{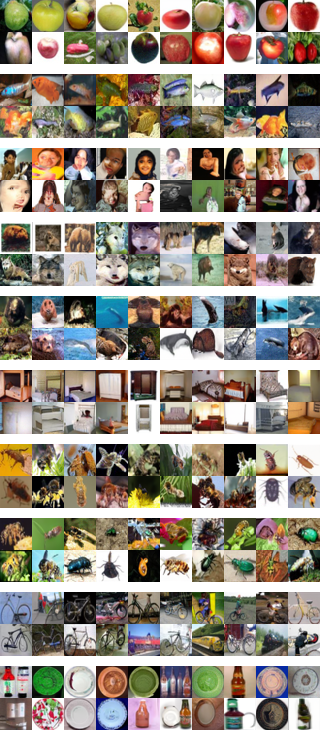}
        \caption{Base}
        \label{subfig:app_cifar100_asym_base}
    \end{subfigure}
    \hspace{0.1\textwidth}
    \begin{subfigure}[b]{0.4\textwidth}
        \centering
        \includegraphics[width=\textwidth]{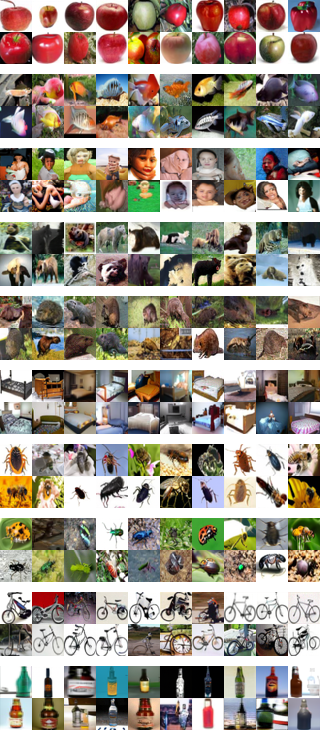}
        \caption{Ours}
        \label{subfig:app_cifar100_asym_ours}
    \end{subfigure}
    \caption{The uncurated generated images of (a) baseline and (b) our models on the CIFAR-100 dataset with 40\% asymmetric noise. Each block has images of the same class. The class labels are \texttt{apple}, \texttt{aquarium fish}, \texttt{baby}, \texttt{bear}, \texttt{beaver}, \texttt{bed}, \texttt{bee}, \texttt{beetle}, \texttt{bicycle}, and \texttt{bottle}, from top to bottom. For these labels, the labels are flipped to \texttt{mushroom}, \texttt{flatfish}, \texttt{boy}, \texttt{leopard}, \texttt{dolphin}, \texttt{chair}, \texttt{beetle}, \texttt{butterfly}, \texttt{bus}, and \texttt{bowl}, respectively.}
    \label{fig:app_cifar100_asym}
\end{figure}

\newpage
\begin{figure}[p]
    \centering
    \begin{subfigure}[b]{0.4\textwidth}
        \centering
        \includegraphics[width=\textwidth]{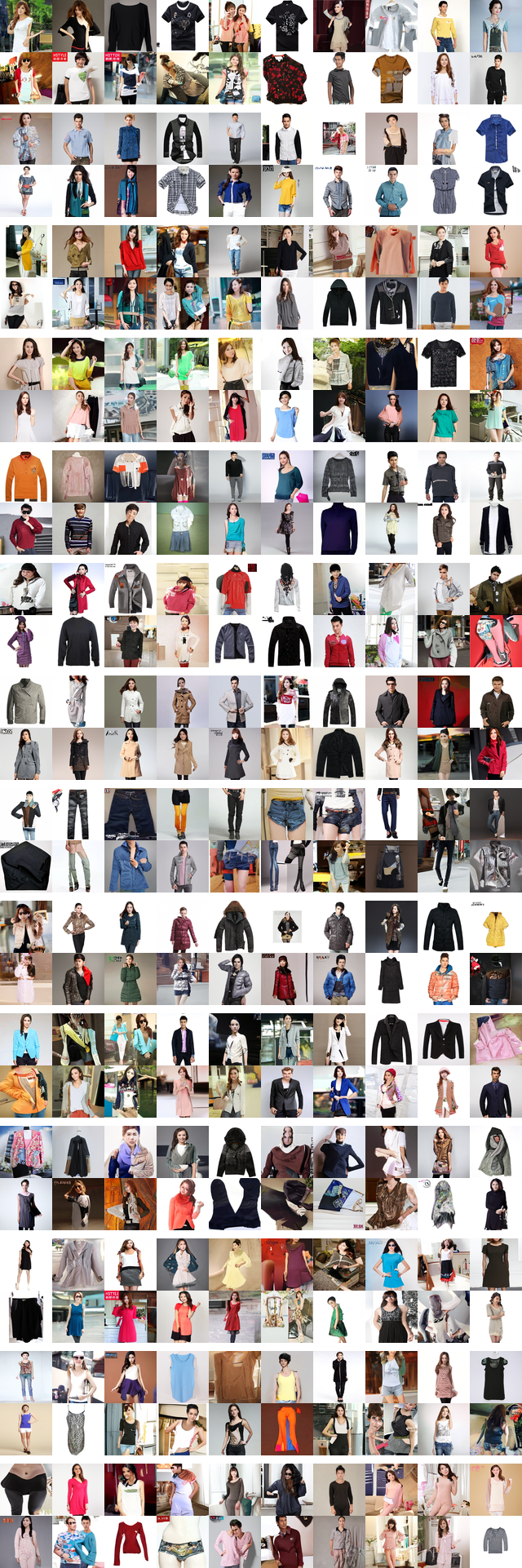}
        \caption{Base}
        \label{subfig:app_clothing_base}
    \end{subfigure}
    \hspace{0.1\textwidth}
    \begin{subfigure}[b]{0.4\textwidth}
        \centering
        \includegraphics[width=\textwidth]{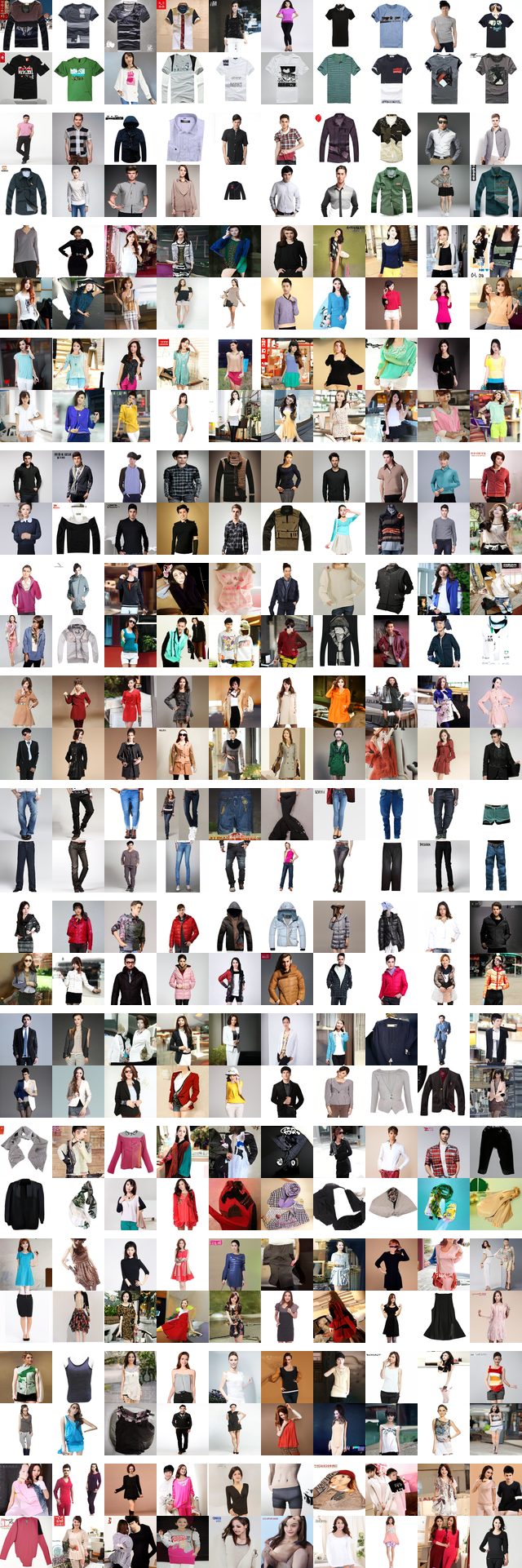}
        \caption{Ours}
        \label{subfig:app_clothing_ours}
    \end{subfigure}
    \caption{The uncurated generated images of (a) baseline and (b) our models on the Clothing-1M dataset. Each block has images of the same class. The class labels are \texttt{T-shirt}, \texttt{Shirt}, \texttt{Knitwear}, \texttt{Chiffon}, \texttt{Sweater}, \texttt{Hoodie}, \texttt{Windbreaker}, \texttt{Jacket}, \texttt{Down Coat}, \texttt{Suit}, \texttt{Shawl}, \texttt{Dress}, \texttt{Vest}, and \texttt{Underwear}, from top to bottom.}
    \label{fig:app_clothing}
\end{figure}

\end{document}